\def\isarxivversion{1} 
\newtheorem*{remark}{Remark}
\newtheorem{theorem}{Theorem}[section]
\newtheorem{lemma}[theorem]{Lemma}
\newtheorem{definition}[theorem]{Definition}
\newtheorem{proposition}[theorem]{Proposition}
\newtheorem{corollary}[theorem]{Corollary}
\newtheorem{claim}[theorem]{Claim}
\newcommand{\inner}[2]{\langle #1,#2 \rangle}
\newcommand{\wh}{\widehat}
\newcommand{\wt}{\widetilde}
\newcommand{\N}{\mathcal{N}}
\newcommand{\R}{\mathbb{R}}
\renewcommand{\d}{\mathrm{d}}
\renewcommand{\varepsilon}{\epsilon}
\renewcommand{\tilde}{\wt}
\renewcommand{\hat}{\wh}
\DeclareMathOperator*{\argmax}{argmax}
\DeclareMathOperator{\ind}{\mathds{1}}  
\DeclareMathOperator*{\E}{{\mathbb{E}}}
\DeclareMathOperator{\poly}{poly}
\DeclareMathOperator{\step}{step}
\DeclareMathOperator{\sgn}{sgn}
\definecolor{b2}{RGB}{51,153,255}
\definecolor{mygreen}{RGB}{80,180,0}
\definecolor{mycy2}{RGB}{255,51,255}
\newcommand*{\RN}[1]{\expandafter\@slowromancap\romannumeral #1@}
\begin{document}


\ifdefined\isarxivversion

\title{Over-parameterized Adversarial Training: An Analysis Overcoming the Curse of Dimensionality}
\author[1]{Yi Zhang*}
\author[1]{Orestis Plevrakis\thanks{Equal contribution}}
\author[2]{Simon S. Du}
\author[1]{Xingguo Li}
\author[2]{Zhao Song}
\author[1,2]{Sanjeev Arora}

\affil[1]{Princeton University, Computer Science Department \authorcr
  \{\tt y.zhang, orestisp, xingguol, arora\}@cs.princeton.edu}
\affil[2]{Institute for Advanced Study\authorcr
  \tt ssdu, zhaos@ias.edu}
\else

\icmltitlerunning{Over-parameterized Adversarial Training: An Analysis Overcoming the Curse of Dimensionality}


\twocolumn[
\icmltitle{Over-parameterized Adversarial Training: An Analysis Overcoming the \\ Curse of Dimensionality}



\icmlsetsymbol{equal}{*}

\begin{icmlauthorlist}
\icmlauthor{Aeiau Zzzz}{equal,to}
\icmlauthor{Bauiu C.~Yyyy}{equal,to,goo}
\icmlauthor{Cieua Vvvvv}{goo}
\icmlauthor{Iaesut Saoeu}{ed}
\icmlauthor{Fiuea Rrrr}{to}
\icmlauthor{Tateu H.~Yasehe}{ed,to,goo}
\icmlauthor{Aaoeu Iasoh}{goo}
\icmlauthor{Buiui Eueu}{ed}
\icmlauthor{Aeuia Zzzz}{ed}
\icmlauthor{Bieea C.~Yyyy}{to,goo}
\icmlauthor{Teoau Xxxx}{ed}
\icmlauthor{Eee Pppp}{ed}
\end{icmlauthorlist}

\icmlaffiliation{to}{Department of Computation, University of Torontoland, Torontoland, Canada}
\icmlaffiliation{goo}{Googol ShallowMind, New London, Michigan, USA}
\icmlaffiliation{ed}{School of Computation, University of Edenborrow, Edenborrow, United Kingdom}

\icmlcorrespondingauthor{Cieua Vvvvv}{c.vvvvv@googol.com}
\icmlcorrespondingauthor{Eee Pppp}{ep@eden.co.uk}

\icmlkeywords{Machine Learning, ICML}

\vskip 0.3in
]

\fi

\ifdefined\isarxivversion

\begin{titlepage}
 \maketitle
  \begin{abstract}
Adversarial training is a popular method to give neural nets robustness against adversarial perturbations. In practice adversarial training leads to low robust training loss. However, a rigorous explanation for why this happens under natural conditions is still missing. Recently a convergence theory for standard (non-adversarial)  training was developed by various groups for {\em very over-parametrized} nets. It is unclear how to extend these results to adversarial training because of the min-max objective. Recently, a first step towards this direction was made by~\cite{gclhwl19} using tools from online learning, but they require the width of the net and the running time to be \emph{exponential} in input dimension $d$, and they consider an activation function that is not used in practice.
 Our work proves convergence to low robust training loss for \emph{polynomial} width and running time, instead of exponential, under natural assumptions and with ReLU activation.
 Key element of our proof is showing that ReLU networks near initialization can approximate the step function, which may be of independent interest.

  \end{abstract}
 \thispagestyle{empty}
 \end{titlepage}

\else

  \begin{abstract}

  \end{abstract}

\fi


\section{Introduction}
\label{sec:intro}
Deep neural networks trained by gradient based methods tend to change their answer (incorrectly) after small adversarial perturbations in inputs~\cite{szsbegf13}. Much effort has been spent to make deep nets resistant to such perturbations 
but adversarial training with a natural min-max objective~\cite{mmstv18} stands out as one of the most effective approaches according to~\cite{cw17c,athalye2018obfuscated}. 

 One interpretation of the min-max formulation is a certain two-player game between a neural network learner and an adversary who is allowed to perturb the input within certain constraints. In each round, the adversary generates new adversarial examples against the current network, on which the learner takes a gradient step to decrease its prediction loss in response (see Algorithm~\ref{alg:adv-train}).

It is empirically observed that, when the neural network is initialized randomly, this training algorithm is efficient and computes a reasonably sized neural net that is robust on (at least) the \emph{training} examples~\citep{mmstv18}. We're interested in theoretical understanding of this phenomenon:
\emph{Why does adversarial training efficiently find a feasibly sized neural net  to fit training data robustly?}
In the last couple of years, a convergence theory has been  developed for non-adversarial training: it explains the ability of gradient descent to achieve small training loss, provided the neural nets are fairly {\em over-parametrized}. But it is quite unclear whether similar analysis can be applied to adversarial training setting where the inputs are perturbed. Furthermore, while the algorithm is reminiscent of well-studied no-regret dynamics for finding equilibria in two-player zero-sum convex/concave games~\citep{hazan2016introduction}, here the game value is training loss, and hence non-convex. Thus it is unclear if training leads to small robust training loss.

A study of such issues was initiated in~\cite{gclhwl19}.  For two-layer nets with \emph{quadratic ReLU} activation\footnote{This is the activation function $\left(ReLU(x)\right)^2$.} they were able to show that if input is in $\R^d$ then training can achieve robust loss at most $\varepsilon$ provided the  net's width is  $(1/\varepsilon)^{\Omega(d)}$ (the number of required iterations is also that large)\footnote{These bounds appear in Corollary $C.1$ in their paper.}. This is very extreme over-parametrization,  and this \emph{curse of dimensionality} is inherent to their argument. 
They left as an open problem the possibility to improve the width requirement, which is the theme of our paper.

\paragraph{Our contributions:}
Under a standard and natural assumption that training data are well-separated with respect to the magnitude of the adversarial perturbations (also verified for popular datasets in Figure~\ref{fig:separation}) we show the following: 


\begin{itemize}
    \item That there exists a two-layer ReLU neural network with width $\poly\left(d, \nicefrac{n}{\varepsilon} \right)$ near Gaussian random initialization that achieves $\varepsilon$ robust training loss.
    \item That starting from Gaussian random initialization, standard adversarial training (Algorithm~\ref{alg:adv-train}) converges to such a network in $\poly\left(d, \nicefrac{n}{\varepsilon} \right)$ iterations.
    
    \item  New result in approximation theory, specifically the existence of a good approximation to the step function by a polynomially wide two-layer ReLU network with weights close to the standard gaussian initialization.  Such approximation result may be of further use in the emerging theory of over-parameterized nets.
\end{itemize}


\paragraph{Paper structure.} This paper is organized as follows. In section~\ref{sec:related}, we give an overview of the related works. In section~\ref{sec:pre}, we present our notation, the adversarial training algorithm, the separability condition and we argue why the training examples being well-separated is a natural assumption. In section~\ref{sec:results}, we formally state our main result and in section~\ref{sec:overview} we give an overview of its proof. In section~\ref{sec:proofs} we elaborate more on the core part of the proof, which is the existence of a net close to initialization that robustly fits the training data.

\section{Related Works}
\label{sec:related}
\paragraph{Adversarial examples and defense.}
The seminal paper~\cite{szsbegf13} discovered the existence of adversarial examples. Since its discovery, numerous defense methods have been proposed to make neural nets robust to perturbations constrained in a ball with respect to a certain norm (e.g. $\ell_2$, $\ell_\infty$). These methods span an extremely wide spectrum including certification~\citep{raghunathan2018certified, wong2017provable}, input transformation~\citep{buckman2018thermometer,guo2017countering}, randomization~\citep{xie2017mitigating}, adversarial training~\cite{mmstv18}, etc. Recent studies on evaluating the effectiveness of the aforementioned defenses by~\cite{cw17c,athalye2018obfuscated} reveals that adversarial training dominates the others. One empirical observation made in~\cite{mmstv18} is that adversarial training can always make wide nets achieve small robust training loss.

\paragraph{Convergence via over-parameterization.} Recently, there has been a tremendous progress in understanding the "small training loss" phenomenon in standard (non-adversarial) training~\citep{ll18,dzps19,als19a,als19b,dllwz19,adhlw19,adhlsw19,sy19,zou2018stochastic, oymak2019towards}. A convergence theory has been developed to show that, when randomly initialized, gradient descent and stochastic gradient descent converge to small training loss in polynomially many iterations when the network has polynomial width in terms of the number of training examples. 
These papers studied over-paramterized neural networks in the neural tangent kernel (NTK) regime~\citep{jgh18}.

\paragraph{Convergence of adversarial training.}
There is a growing interest in analyzing convergence properties of adversarial training.~\cite{gclhwl19} made a first attempt towards extending the aforementioned results in standard training to adversarial training. 
Like previous works on the convergence of (non-adversasrial) gradient descent for over-parameterized neural networks, this work also considered the NTK regime.
First of all, they prove that adversarial training with an artificial projection step always finds a multi-layer ReLU net that is $\epsilon$-optimal within the neighborhood near initialization, but the optimal robust loss could be large. Secondly, for two-layer quadratic ReLU net, they managed to prove that small adversarial loss will be achieved, but crucially the required width and running time are $(1/\varepsilon)^{\Omega(d)}$.
Their argument suffers \emph{the curse of dimensionality}, because it relies on the universality of the induced Reproducing Kernel Hilbert Space (induced by NTK) followed by a random feature approximation. In contrast, we take a closer look on how to approximate a robust classifier with ReLU networks near their initialization using techniques from polynomial approximation and manage to overcome this problem. In addition, our convergence analysis applies to ReLU activated nets without additional projection steps.


\paragraph{Polynomial approximation.} 
A key technique in our proof is a polynomial approximation to the step function on interval $[-1,-\eta]\cup[\eta, 1]$ which has been an important subject~\citep{allen2017faster, frostig2016principal,eremenko2006uniform}. For $\epsilon$-uniform approximation, ~\cite{frostig2016principal} constructed a polynomial with degree $\tilde{\Theta}\left(\nicefrac{1}{\eta^2}\right)$ and further proved the existence of a $\tilde{\Theta}\left(\nicefrac{1}{\eta}\right)$-degree polynomial\footnote{$\tilde{\Theta}(\cdot)$ excludes logarithmic factors.} but without algorithmic construction, which was done by~\cite{allen2017faster}. Interestingly, a nearly matching lower bound on the degree had been shown by~\cite{eremenko2006uniform} much prior to these constructions.
\section{Preliminaries}
\label{sec:pre}

\subsection{Notations}
For a vector $x$, we use $\| x\|_p$ to denote its $\ell_p$ norm, and we are mostly concerned with $p=1,2,$ or $\infty$ in this paper.

For a matrix $W \in \R^{d \times m}$, we use $W^\top$ to denote the transpose of $W$, we use $\| W \|_F$, $\|W\|_1$ and $\|W\|$ to denote its Frobenius norm, entry-wise $\ell_1$ norm, and spectral norm respectively. We define $\| W \|_{2,\infty} = \max_{j \in [d]} \| W_{j} \|_2$, and $\| W \|_{2,1} = \sum_{j=1}^d \| W_{j} \|_2$, where $W_{j}$ is the $j$-th column of $W$, for each $j \in [m]$. 
We use ${\cal N}(\mu,\Sigma)$ to denote Gaussian distribution with mean $\mu$ and covariance $\Sigma$. We denote by $\sigma(\cdot)$ the ReLU function $\sigma(z)=\max\{z,0\}$ and by $\ind\{E\}$ the indicator function for an event $E$. 
\subsection{Two-layer ReLU network}
We consider a two-layer ReLU activated neural network with $m$ neurons in the hidden layer: 
\begin{align}
f(x)= \sum_{r=1}^m a_r \sigma\left(\langle W_r,x \rangle+b_r\right)
\end{align}
where $W = (W_1,\dots, W_m)\in \mathbb{R}^{d\times m}$ is the hidden weight matrix, $b=(b_1,\dots,b_m)\in \mathbb{R}^m$ is
the bias vector, and $a=(a_1,\dots,a_m)\in \mathbb{R}^m$ is the output weight vector. We use $\mathcal{F}$ to denote this function class. During adversarial training, we only update $W$ and keep $a$ and $b$ at initialization values. For this reason, we write the network as $f_W(x  )$.

 We have $n$ training data $\mathcal{S} =$ $\{(x_1,y_1),\dots,(x_n,y_n)\}\subseteq \mathbb{R}^d\times \mathbb{R}$. We make some standard assumptions about the training set. Without loss of generality, we assume that for all $i\in[n]$, $\|x_i\|_2=1$ and the last coordinate $x_{i,d}=1/2$ \footnote{$1/2$ can be padded to the last coordinate, $\|x_i\|_2=1$ can always be ensured from $\|x_i\|_2\leq 1$ by padding $\sqrt{1-\|x\|_2^2}$.}. For this reason, we define the set ${\cal X} : = \{ x \in \R^d : \| x \|_2 =1 ,\ x_{d} = 1/2 \}$. We also assume for simplicity that for all $i\in[n]$, $|y_i|\leq 1$.

The initialization of $a,W,b$ is $a^{(0)},W^{(0)},b^{(0)}$.
\begin{itemize}
    \item The entries of $W^{(0)}$ and $b^{(0)}$ are iid random Gaussians from $\mathcal{N}(0,\frac{1}{m})$.
    \item The entries of $a^{(0)}$ are iid with distribution $unif\left(\left\{-\frac{1}{m^{1/3}},+\frac{1}{m^{1/3}}\right\}\right)$. \footnote{The choice of $m^{1/3}$ at the denominator is inessential. For technical reasons we need the distribution to be $unif\left(\left\{-\frac{1}{m^{c}},+\frac{1}{m^{c}}\right\}\right)$ for some $\Omega(1)\leq c\leq 1/3$.}
\end{itemize}
\subsection{Adversary and robust loss}
To evaluate the neural nets, we consider a loss function of the following type. 
\begin{definition}[Lipschitz convex regression loss]
A loss function $\ell: \R\times \R \rightarrow \R$ is a Lipschitz convex regression loss if it satisfies the following properties: convex in the first argument, non-negative, $1-$Lipshcitz and for all $ y\in\R, ~\ell(y, y) = 0$.
\end{definition}

We remark the choice of loss is for simplicity of technical presentation, following the convention in previous works~\cite{gclhwl19, all19}.

For a vector $z \in \R^d$ and $\rho>0$, let ${\cal B}_2(z,\rho) := \{ x \in \R^d : \| x - z \|_2 \leq \rho \}\cap \cal{X}$.
Now we define the adversarial model studied in this paper.
\begin{definition}[$\rho$-Bounded adversary]
An adversary $\mathcal{A}: \mathcal{X}\times \R \times \mathcal{F} \rightarrow \mathcal{X}$ is $\rho$-bounded for $\rho>0$ if they satisfy 
\begin{align*}
    \mathcal{A}(x, y, f)\in\mathcal{B}_2(x,\rho)
\end{align*}
We use $\mathcal{A}^{*}$ to denote the \textbf{worst-case} $\rho$-bounded adversary for loss function $\ell$, which is defined as
\begin{align*}
    \mathcal{A}^*(x, y, f):=\argmax_{\tilde{x}\in\mathcal{B}_2(x,\rho)} \ell(f(\tilde{x}), y)
\end{align*}
With a slight abuse of notation, we use $\mathcal{A}(S,f):=\{(\mathcal{A}(x_i, y_i, f), y_i)\}_{i=1}^n$ to denote the adversarial dataset generated by $\mathcal{A}$ against a given neural net $f$.
\end{definition}

We now define the robust loss of $f$ in terms of its prediction loss on the examples generated by an adversary.

\begin{definition}[Training loss and its robust version]
Given a training set $S$ of $n$ examples, the standard training loss of a neural net $f$ is defined as $\mathcal{L}(f, S) := \frac{1}{n} \sum_{i=1}^n \ell\left(f(x_i), y_i\right)$. Against a $\rho$-bounded adversary $\mathcal{A}$, we define the robust training loss w.r.t. $\mathcal{A}$ as
\begin{align*}
{\cal L}_\mathcal{A}(f) := {\cal L}(f, \mathcal{A}(S, f)) = \frac{1}{n} \sum_{i=1}^n \ell\left(f(\mathcal{A}(x_i, y_i, f)), y_i\right) 
\end{align*}
Furthermore, we define analogously the \textbf{worst-case} robust training loss as
\begin{align*}
{\cal L}_{\mathcal{A}^*}(f) := {\cal L}(f, \mathcal{A}^*(S, f)) = \frac{1}{n} \sum_{i=1}^n \max_{\tilde{x_i}\in\mathcal{B}_2(x_i,\rho)}\ell\left(f(\tilde{x_i}), y_i\right) 
\end{align*}
\end{definition}
\subsection{Well-separated training sets}
Training set being well-separated is a standard assumption in over-parametrization literature. Here we require a slightly stronger notion since we are dealing with adversarial perturbations. 

\begin{definition}[$\gamma$-separability]
We say a training set $S$ is $\gamma$-separable with respect to a $\rho$-bounded adversary, if for all $i\not=j\in[n]$, $~\|x_i - x_j\|_2 \geq\delta$ and $\gamma\leq \delta(\delta-2\rho)$.

\end{definition}

\begin{figure}[ht]
\centering
\includegraphics[width=8cm]{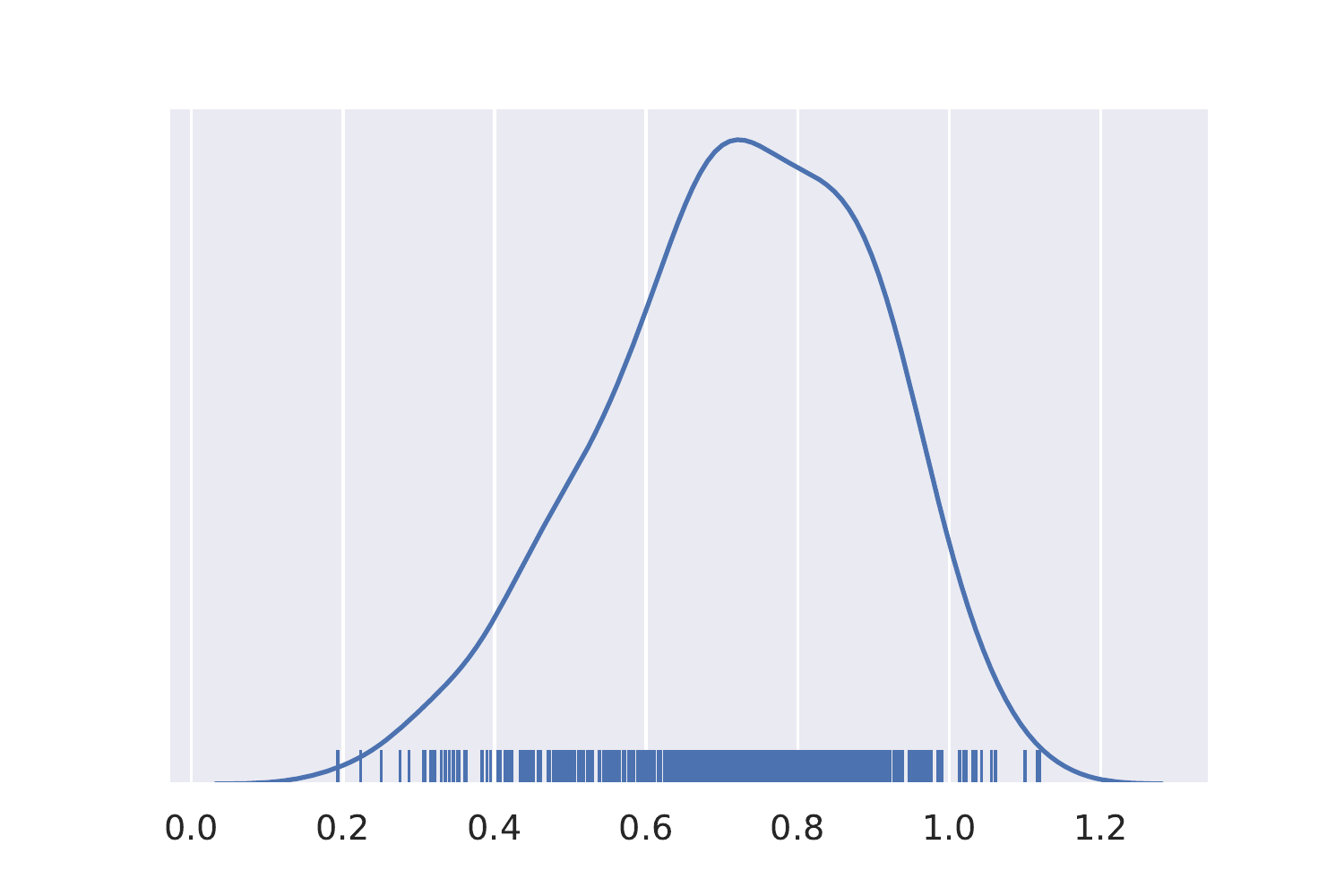}
\caption{Distribution of $\delta_i$'s of randomly sampled $500$ points in CIFAR-10 training set, where $\delta_i$ is the smallest $\ell_2$ distance between data point $x_i$ and any other point in the training set.}
\label{fig:separation}
\end{figure}
Our results imply that the required width is polynomial for $\Omega(1)$-separable training sets.
To see why this is a reasonable assumption, $\delta\approx\sqrt{3/2}$ if $x$'s are drawn from the uniform distribution on $\cal{X}$ and $d$ is large, while $\rho$ is usually at most $1/20$ in practice~\citep{guo2017countering}. In Figure~\ref{fig:separation} we show that on CIFAR-10, other than probably a very small fraction of examples, all the others do not have too small minimum distance from any example.\footnote{One can always exclude this small fraction from the training set and then suffer this fraction at the final robust 0-1 loss.}

\subsection{Adversarial training algorithm}
 The adversarial training of a neural net $f_W$ against an adversary $\mathcal{A}$ can be captured as the following intertwining dynamics. 
\begin{algorithm}[t]
    \caption{Adversarial training}
    \label{alg:adv-train}
	\begin{algorithmic}
		\REQUIRE Training set $S=\{(x_1,y_1),\dots,(x_n,y_n) \}$, Adversary $\mathcal{A}$, learning rate $\eta$, initialization $a^{(0)},W^{(0)},b^{(0)}$.
	    \FOR{ $t = 0$ to $T-1$}
	        \STATE $S^{(t)} := \emptyset$
	        \FOR{$i=1$ to $n$}
	            \STATE $\tilde{x}_i^{(t)} = \mathcal{A}(x_i, y_i, f_{W^{(t)}})$
	            \STATE $S^{(t)} = S^{(t)} \cup (\tilde{x}_i^{(t)}, y_i)$
	        \ENDFOR
	        
	        \STATE $W^{(t+1)} = W^{(t)} - \eta\cdot~\nabla_W \mathcal{L}(f_{W^{(t)}}, S^{(t)})$.
	    \ENDFOR
	    \ENSURE $\{W^{(t)}\}_{t=1}^T$
	\end{algorithmic}
\end{algorithm}
In the inner loop, the adversary generates adversarial examples against the current neural net. In the outer loop, a gradient descent step is taken on the neural net's parameter to decrease its prediction loss on the fresh adversarial examples.

\begin{remark}
The gradient computation 
$\nabla_W \mathcal{L}(f_{W^{(t)}}, S^{(t)})$ is undertaken pretending as if $S^{(t)}$ was independent from $W^{(t)}$, i.e., without differentiating through $\mathcal{A}$.
\end{remark}



\section{Main Result}\label{sec:results}

We now formally present our main theorem.

\begin{theorem}\label{thm:main}
Suppose that the training set $\mathcal{S}$ is $\gamma$-separable, for some $\gamma>0$. Then, for all $\epsilon\in(0,1)$, there exist 
\begin{align*}
     M_0=\poly\left(d,\left(\frac{n}{\epsilon}\right)^{1/\gamma}\right)   ~   \text{and} ~~ R=\poly\left(\left(\frac{n}{\epsilon}\right)^{1/\gamma}\right)
 \end{align*} 
 such that for every $m\geq M_0$, with probability at least $1- \exp\left( -\Omega\left(m^{1/3}\right) \right)$ over the choice of $a^{(0)},W^{(0)},b^{(0)}$, if we run adversarial training~\ref{alg:adv-train} with hyper-parameters
\begin{align*}
T=\Theta( \epsilon^{-2}  R^2 )\ \text{and} \ \eta=\Theta(\epsilon m^{-1/3} )
\end{align*}
then the output weights $\left(W^{(t)}\right)_{t=1}^T$ satisfy
\begin{align*}
    \min_{t\in[T]} \mathcal{L}_{\mathcal{A}}\left(f_{W^{(t)}}\right) \leq \epsilon
\end{align*}
\end{theorem}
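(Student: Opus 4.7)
The plan is to split the proof into two parts: an existence result giving a target network $W^*$ near initialization with small worst-case robust loss, and a convergence result showing that Algorithm~\ref{alg:adv-train} reaches an iterate whose loss against $\mathcal{A}$ is no worse than that of $W^*$.

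For the existence part, I want to show that with high probability over $a^{(0)},W^{(0)},b^{(0)}$ there exists $W^*$ with $\|W^*-W^{(0)}\|_F \leq R/\sqrt{m}$ (in an appropriate row-wise norm) such that $\mathcal{L}_{\mathcal{A}^*}(f_{W^*}) \leq \epsilon/2$. My construction proceeds in three steps. First, exploit $\gamma$-separability to design an idealized classifier $g(x)=\sum_{i=1}^n y_i \cdot \phi_i(\langle x,x_i\rangle)$ where $\phi_i$ is a shifted step function that is $\approx 1$ on $\mathcal{B}_2(x_i,\rho)$ and $\approx 0$ on the other $\mathcal{B}_2(x_j,\rho)$; the inequality $\gamma \le \delta(\delta-2\rho)$ is exactly what guarantees that the inner products on these two sets are separated by $\Omega(\gamma)$. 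Second, apply a polynomial approximation to the step function on $[-1,-\eta]\cup[\eta,1]$ (the works of \cite{frostig2016principal,allen2017faster} mentioned earlier) to approximate each $\phi_i$ by a polynomial of degree $\tilde{O}(1/\gamma)$. Third, realize this polynomial as a two-layer ReLU network with weights close to $W^{(0)}$, which is exactly the new approximation-theoretic result advertised as a contribution of the paper.

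For the convergence part, I want a standard NTK-style regret analysis. Near initialization, $f_W(x)$ is well-approximated by its first-order Taylor expansion in $W$, so for any fixed dataset $S^{(t)}$ the map $W\mapsto \mathcal{L}(f_W,S^{(t)})$ is approximately convex. Writing $\nabla_t = \nabla_W \mathcal{L}(f_{W^{(t)}},S^{(t)})$ and expanding $\|W^{(t+1)}-W^*\|_F^2$, approximate convexity gives
\begin{align*}
\langle \nabla_t, W^{(t)}-W^*\rangle \;\gtrsim\; \mathcal{L}(f_{W^{(t)}},S^{(t)}) - \mathcal{L}(f_{W^*},S^{(t)}).
\end{align*}
The first term is exactly $\mathcal{L}_{\mathcal{A}}(f_{W^{(t)}})$ by the definition of $S^{(t)}$. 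The second term is at most $\mathcal{L}_{\mathcal{A}^*}(f_{W^*})\leq \epsilon/2$, since $S^{(t)}$ is produced by a $\rho$-bounded adversary (this is precisely where I use that $W^*$ defeats the worst-case adversary, not just $\mathcal{A}$). Telescoping and dividing by $T$ yields
\begin{align*}
\min_{t\in[T]} \mathcal{L}_{\mathcal{A}}(f_{W^{(t)}}) \;\leq\; \frac{\epsilon}{2} + \frac{R^2}{2\eta T} + O(\eta \cdot G^2),
\end{align*}
where $G$ is a bound on $\|\nabla_t\|_F$ obtained from the Lipschitz property of $\ell$ and boundedness of the ReLU features near initialization. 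The choices $\eta=\Theta(\epsilon m^{-1/3})$ and $T=\Theta(\epsilon^{-2}R^2)$ in the theorem statement are exactly tuned so that each error term is at most $\epsilon/4$.

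The hard part will be the existence of $W^*$ with weights only $O(R/\sqrt{m})$ away from the Gaussian initialization while simultaneously realizing a polynomial of degree $\tilde{O}(1/\gamma)$ in the features. A naive ReLU representation of a high-degree polynomial would require weights whose magnitude blows up with the degree, pushing us outside the NTK regime and losing the approximate convexity on which convergence depends. The key idea I would try is to build $W^*$ as a small random perturbation of $W^{(0)}$ and rely on the randomness of initialization itself: express the polynomial as an expectation over random ReLU features, then use concentration to replace the expectation by an average over the $m$ initialized neurons, so that each neuron only needs to be perturbed by $\poly((n/\epsilon)^{1/\gamma})/\sqrt{m}$. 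Controlling this perturbation so that $R$ depends on $1/\gamma$ only polynomially, rather than exponentially in $d$ as in \cite{gclhwl19}, is precisely the step that removes the curse of dimensionality and is the technical heart of the theorem.
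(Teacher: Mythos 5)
Your decomposition — existence of a near-initialization $W^*$ via a step-function polynomial realized as a linear combination of random ReLU features, combined with an NTK/pseudo-network online-gradient-descent regret argument that bounds the average $\mathcal{L}_{\mathcal{A}}(f_{W^{(t)}})$ by $\mathcal{L}_{\mathcal{A}^*}(f_{W^*})+\epsilon$ — is precisely the paper's proof structure (Theorems~\ref{thm:real_approximates_pseudo}, \ref{thm:convergence}, \ref{thm:existence} and Lemmas~\ref{thm:robust_fitting}, \ref{lem:step_poly}, \ref{thm:pseudo_approximates_polynomial}). The one technical ingredient you gesture at but do not make explicit is that both approximation steps (real net $\approx$ pseudo-net, and pseudo-net $\approx f^*$) must hold \emph{uniformly} over $\mathcal{X}$ rather than merely on the training points; the paper achieves this by proving per-point concentration at rate $1-\exp(-\Omega(m^{1/3}))$ and union-bounding over a $\frac{1}{\poly(m)}$-net of $\mathcal{X}$, which is where the polynomial (rather than exponential) dependence on $d$ is ultimately won.
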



\section{Proof Overview}\label{sec:overview}
\subsection*{Pseudo-network}
The key property used in all recent papers that analyze gradient descent for over-parameterized neural nets is that if a network $f_W(x )=\sum_{r=1}^m a_r^{(0)} \sigma\left(\langle W_r,x \rangle +b_r^{(0)}\right)$ is very over-parameterized and its weights are close to initialization, then it is well-approximated by its corresponding \emph{pseudo-network}: 
\begin{align*}
g_W(x )=\sum_{r=1}^m a_r^{(0)} \langle W_r-W_r^{(0)},x \rangle \ind \left\{\langle W_r^{(0)},x \rangle +b_r^{(0)} \geq 0\right\}
\end{align*}
However, the approximation result used for standard training is insufficient for our purposes, because here we deal with adversarial perturbations and in order to argue that during adversarial training the network behaves essentially as a pseudo-network, we need an approximation guarantee that holds \emph{uniformly} over all $\mathcal{X}$. More specifically, in these works, it is proven that for any fixed input $x$, with probability at least $1-e^{-\poly(\log m)}$, for $W$ close to the initialization, $|f_W(x)-g_W(x)|$ is small. But, with this probability bound, in order to argue that $\sup_{x\in \cal{X}}|f_W(x)-g_W(x)|$ is small via $\epsilon$-net arguments, one needs $m\geq \exp(\Omega(d))$. In this work, we show that the guarantee for fixed $x$ actually holds with much higher probability: $1-\exp(-\Omega( m^{1/3}))$. The fact that this approximation fails with exponentially small probability, enables us to take a union bound over a very fine-grained $\frac{1}{\poly(m)}$-net of $\mathcal{X}$, and even though it has cardinality $\exp(O(d\log m))$, the width $m$ we need to control the overall probability is still polynomial in $d$. The final step is to bound the stability of $f$ and $g$ under small perturbations, even though $g$ is not Lipschitz continuous.
\begin{theorem}\label{thm:real_approximates_pseudo}
  Let $R\geq 1$. For all $m \geq \poly( d)$, with probability at least $1-\exp(-\Omega( m^{1/3}) )$ over the choice of $a^{(0)},W^{(0)},b^{(0)}$, for all $W\in\mathbb{R}^{d\times m}$ such that $\|W-W^{(0)}\|_{2,\infty}\leq \frac{R}{m^{2/3}}$,
   \begin{align*}
  \sup_{x\in \mathcal{X}} \left|f_W(x)-g_W(x)\right| \leq O\left(\frac{R^2}{m^{1/6}}\right)
   \end{align*}
\end{theorem}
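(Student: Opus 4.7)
The plan is to reduce the uniform bound over $x$ and $W$ to a \emph{pointwise}-in-$x$ bound that holds with failure probability $\exp(-\Omega(m^{1/3}))$, then lift to all of $\mathcal{X}$ via an $\epsilon$-net whose cardinality $\exp(O(d\log m))$ is absorbed into the exponent once $m\ge\poly(d)$, and close with a stability argument inside each $\epsilon$-cell. The key structural observation driving the uniformity in $W$ is that, for the part of the approximation error governed by activation patterns, conditioning on the initialization lets a \emph{single} count bound hold uniformly over every $W$ in the ball $\|W-W^{(0)}\|_{2,\infty}\le R/m^{2/3}$.

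For the pointwise step, fix $x\in\mathcal{X}$, write $z_r := \langle W_r^{(0)},x\rangle + b_r^{(0)}$ and $\Delta_r := \langle W_r - W_r^{(0)}, x\rangle$, and decompose
\[
f_W(x) - g_W(x) \;=\; f_{W^{(0)}}(x) \;+\; \sum_{r=1}^m a_r^{(0)} \Bigl[\sigma(z_r+\Delta_r) - \sigma(z_r) - \Delta_r\,\ind\{z_r\ge 0\}\Bigr].
\]
Conditional on $(W^{(0)}, b^{(0)})$, Hoeffding over the $\pm m^{-1/3}$ signs $\{a_r^{(0)}\}$ gives $|f_{W^{(0)}}(x)| \le m^{-1/6}$ with probability $1-\exp(-\Omega(m^{1/3}))$. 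The bracket vanishes unless neuron $r$ undergoes a sign flip, which forces $|z_r| \le |\Delta_r| \le R/m^{2/3}$ and bounds the bracket by $|\Delta_r|$. So the flip count is controlled by the ``bad set'' $B(x) := \{r : |z_r| \le R/m^{2/3}\}$, which depends only on $(W^{(0)}, b^{(0)}, x)$ and \emph{not} on $W$. Gaussian anti-concentration on $z_r \sim \mathcal{N}(0, 2/m)$ gives $\Pr[r\in B(x)] = O(R/m^{1/6})$, hence $\E|B(x)| = O(Rm^{5/6})$; Chernoff yields $|B(x)| = O(Rm^{5/6})$ with failure probability $\exp(-\Omega(Rm^{5/6}))$, dominated by $\exp(-\Omega(m^{1/3}))$ for $R\ge 1$. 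Multiplying the count by the per-neuron envelope $|a_r^{(0)}|\cdot R/m^{2/3} = R/m$ gives $|f_W(x)-g_W(x)| \le m^{-1/6} + |B(x)|R/m = O(R^2/m^{1/6})$, uniformly over $W$ in the ball.

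For the lifting step, take an $\epsilon$-net $\mathcal{N}\subseteq\mathcal{X}$ with $\epsilon = m^{-c}$ and $|\mathcal{N}|\le\exp(O(d\log m))$; a union bound preserves the failure probability as $\exp(-\Omega(m^{1/3}))$ once $m\ge\poly(d)$. For any $x$ and the nearest $x^*\in\mathcal{N}$, $f_W$ is Lipschitz with constant at most $\|W\|_{2,1}m^{-1/3} \le \poly(m,d)$, so $|f_W(x)-f_W(x^*)|$ is negligible for $c$ large. The delicate piece is $g_W$, because the indicator in its definition is discontinuous; the main obstacle of the proof is exactly establishing stability of $g_W$ without Lipschitzness. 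We resolve it by splitting neurons at $x^*$ into \textbf{stable} (indicator agrees everywhere in the $\epsilon$-cell) and \textbf{unstable}: stable neurons are jointly linear in $x$ and contribute at most $O(R\epsilon)$ in total, while the number of unstable neurons is at most $\#\{r:|z_r(x^*)|\le \|W_r^{(0)}\|\epsilon\}$, controlled by the \emph{same} Gaussian anti-concentration argument that powered the pointwise bound. Combined with a standard high-probability bound $\|W_r^{(0)}\|_2 = O(\sqrt{d/m}+m^{-1/3})$ uniform over $r$, the total unstable contribution is negligible compared to $R^2/m^{1/6}$ for $c$ chosen large enough, completing the uniform bound.
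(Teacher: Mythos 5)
Your proof is correct and follows essentially the same route as the paper's: the pointwise bound comes from (i) a Hoeffding bound on $f_{W^{(0)}}(x)$, (ii) Gaussian anti-concentration giving $\Pr[|z_r|\le R/m^{2/3}]=O(R/m^{1/6})$ and Chernoff giving a sign-flip count of $O(Rm^{5/6})$, with each flipped neuron contributing at most $|a_r^{(0)}||\Delta_r|\le R/m$; then a union bound over a polynomially fine net of $\mathcal{X}$, together with a stable/unstable-neuron perturbation analysis for $g$ (since $g$ is not Lipschitz), finishes the uniform statement. Your single-bracket decomposition $f_W-g_W = f_{W^{(0)}} + \sum_r a_r^{(0)}\bigl[\sigma(z_r+\Delta_r)-\sigma(z_r)-\Delta_r\ind\{z_r\ge 0\}\bigr]$ is algebraically identical to the paper's three-piece split $f_2 + (f_1-g)+f_3$ (the bracket equals $(z_r+\Delta_r)(\mathbb{I}_{x,r}-\mathbb{I}_{x,r}^{(0)})$), and your perturbation argument for $g$ is the same idea as the paper's Lemma on perturbation stability, so this is not a genuinely different approach. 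One small omission: your Hoeffding step for $|f_{W^{(0)}}(x)|\le m^{-1/6}$ holds conditionally on $(W^{(0)},b^{(0)})$ with exponent $\Omega(m^{1/3}/\sum_r\sigma^2(z_r))$, so you should also invoke $\chi^2$-concentration to get $\sum_r\sigma^2(z_r)=O(1)$ w.h.p.\ before absorbing it into an unconditional $1-\exp(-\Omega(m^{1/3}))$, as the paper does.
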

We give the proof of Theorem \ref{thm:real_approximates_pseudo} at the Appendix \ref{prf:real_approximates_pseudo}.
\subsection*{Online convex optimization view}
The adversarial training algorithm fits the framework of \emph{online gradient descent} (OGD): at each step $t$,
\begin{enumerate}
    \item The adversary chooses the loss function $\mathcal{L}_t(W)=\mathcal{L}\left(f_{W^{(t)}}, S^{(t)}\right)$.
    \item The learner incurs the cost $\mathcal{L}_t(W^{(t)})$ and updates $W^{(t+1)} = W^{(t)} - \eta \nabla_W \mathcal{L}_t(W^{(t)})$.
\end{enumerate}
    Online gradient descent comes with regret guarantees, when the loss functions are convex~\citep{h16}, but in our case they are not. However, it can be shown that during adversarial training, the weights stay near initialization, which implies that the net behaves like a pseudo-net. Moreover, pseudo-net is linear in $W$ and so the regret guarantee holds, up to a small approximation error. Notably, the regret is with respect to the best net in hindsight, that is also close to initialization. 
\\


\begin{theorem}\label{thm:convergence}
For all $\epsilon\in(0,1)$, $R\geq 1$, there exists an $M=\poly\left(n, R, \frac{1}{\epsilon}\right)$, such that for every $m\geq M$, with probability at least $1- \exp\left( -\Omega\left(m^{1/3}\right) \right)$ over the choice of $a^{(0)},W^{(0)},b^{(0)}$, if we run Algorithm~\ref{alg:adv-train} with hyper-parameters
\begin{align*}
T=\Theta( \epsilon^{-2}  R^2 )\ \text{and} \ \eta=\Theta(\epsilon   m^{-1/3} )
\end{align*}
then for every $W^*$ such that $\|W^* - W^{(0)}\|_{2,\infty}\leq \frac{R}{m^{2/3}}$,
the output weights $\left(W^{(t)}\right)_{t=1}^T$ satisfy
\begin{align*}
    \frac{1}{T}\sum_{t=1}^T \mathcal{L}_{\mathcal{A}}\left(f_{W^{(t)}}\right) \leq\mathcal{L}_{\mathcal{A}^*}\left(f_{W^*}\right)+ \epsilon
\end{align*}

\end{theorem}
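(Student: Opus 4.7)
The plan is to view Algorithm~\ref{alg:adv-train} as online gradient descent against the sequence of losses $\mathcal{L}_t(W) := \mathcal{L}(f_W, S^{(t)})$ and to transfer the regret analysis to the pseudo-losses $\tilde{\mathcal{L}}_t(W) := \frac{1}{n}\sum_i \ell\bigl(g_W(\tilde{x}_i^{(t)}), y_i\bigr)$, which are convex in $W$ because $g_W$ is linear in $W$. Two observations fix the target comparison: by definition of $S^{(t)}$, $\mathcal{L}_{\mathcal{A}}(f_{W^{(t)}}) = \mathcal{L}_t(W^{(t)})$, and because $\tilde{x}_i^{(t)} \in \mathcal{B}_2(x_i,\rho)$ we have $\mathcal{L}_t(W^*) \leq \mathcal{L}_{\mathcal{A}^*}(f_{W^*})$. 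Hence it suffices to prove the regret bound
\begin{align*}
\frac{1}{T}\sum_{t=1}^T \mathcal{L}_t(W^{(t)}) \leq \frac{1}{T}\sum_{t=1}^T \mathcal{L}_t(W^*) + \epsilon.
\end{align*}

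First I would show that along the trajectory all iterates lie in a common $\|\cdot\|_{2,\infty}$-ball of radius $R'/m^{2/3}$ around $W^{(0)}$, with inflated radius $R' = \Theta(R^2/\epsilon)$, so that this ball also contains any admissible competitor $W^*$. Using $|a_r^{(0)}| = m^{-1/3}$, $1$-Lipschitzness of $\ell$, $|\sigma'|\leq 1$, and $\|\tilde{x}_i^{(t)}\|_2 \leq 1$ gives the per-column gradient bound $\|\nabla_{W_r}\mathcal{L}_t(W^{(t)})\|_2 = O(m^{-1/3})$, so that $\|W^{(t)}-W^{(0)}\|_{2,\infty} \leq \eta T/m^{1/3}$, which with the prescribed $\eta,T$ is $O(R'/m^{2/3})$. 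Inside this ball, Theorem~\ref{thm:real_approximates_pseudo} applied with radius $R'$ plus $1$-Lipschitzness of $\ell$ yields the uniform value sandwich
\begin{align*}
\sup_{W:\ \|W-W^{(0)}\|_{2,\infty}\leq R'/m^{2/3}} \bigl|\mathcal{L}_t(W) - \tilde{\mathcal{L}}_t(W)\bigr| \leq O(R'^2/m^{1/6}),
\end{align*}
which is at most $\epsilon$ once $m \geq \poly(n,R,1/\epsilon)$. A parallel concentration argument bounds the number of neurons whose activation pattern flips between $W^{(0)}$ and a nearby $W$ via a small-ball estimate on the Gaussian pre-activations $\langle W_r^{(0)},x\rangle + b_r^{(0)} \sim \mathcal{N}(0,\Theta(1/m))$ and a $1/\poly(m)$ net over $\mathcal{X}$, yielding an analogous uniform bound on $\|\nabla \mathcal{L}_t(W) - \nabla \tilde{\mathcal{L}}_t(W)\|_F$ on the same ball.

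With both sandwiches in hand I run the standard OGD descent identity on the true updates $W^{(t+1)} = W^{(t)} - \eta\nabla \mathcal{L}_t(W^{(t)})$, namely
\begin{align*}
\sum_{t=1}^T \langle \nabla \mathcal{L}_t(W^{(t)}), W^{(t)} - W^*\rangle \leq \frac{\|W^{(0)} - W^*\|_F^2}{2\eta} + \frac{\eta}{2}\sum_{t=1}^T \|\nabla \mathcal{L}_t(W^{(t)})\|_F^2,
\end{align*}
and the bounds $\|W^{(0)}-W^*\|_F \leq R/m^{1/6}$, $\|\nabla \mathcal{L}_t\|_F = O(m^{1/6})$, $\eta = \Theta(\epsilon m^{-1/3})$, $T = \Theta(R^2/\epsilon^2)$ render the right-hand side $O(\epsilon T)$. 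Convexity of $\tilde{\mathcal{L}}_t$ then gives
\begin{align*}
\tilde{\mathcal{L}}_t(W^{(t)}) - \tilde{\mathcal{L}}_t(W^*) \leq \langle \nabla \mathcal{L}_t(W^{(t)}), W^{(t)}-W^*\rangle + \langle \nabla \tilde{\mathcal{L}}_t(W^{(t)}) - \nabla \mathcal{L}_t(W^{(t)}), W^{(t)}-W^*\rangle,
\end{align*}
where the first summand is handled by the previous display and the second by Cauchy--Schwarz together with the gradient-mismatch bound and $\|W^{(t)}-W^*\|_F \leq O(R'/m^{1/6})$. Converting pseudo-losses back to real losses with the value sandwich and averaging over $t$ yields the regret inequality, hence the theorem. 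The main technical obstacle I anticipate is the uniform gradient-mismatch step: Theorem~\ref{thm:real_approximates_pseudo} is stated for function values only, so its proof machinery has to be re-run for gradients, combining a pointwise $\exp(-\Omega(m^{1/3}))$ tail bound on the activation-flip count with a covering argument over $\mathcal{X}$, exactly mirroring the route used for the value sandwich.
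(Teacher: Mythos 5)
Your proposal is correct and follows essentially the same route as the paper's proof: view adversarial training as online gradient descent on the convex pseudo-loss, split the regret into the standard OGD telescope plus a gradient-mismatch term (which the paper controls via its gradient-coupling Lemma~\ref{lemma:coupling}, a concentration-plus-covering bound on activation-pattern flips of exactly the kind you anticipate), bound the trajectory radius via the per-column gradient estimate $O(m^{-1/3})$, and convert pseudo-losses to real losses at the end via Theorem~\ref{thm:real_approximates_pseudo}. The only cosmetic difference is the norm pairing in the mismatch term (you use $\|\cdot\|_F$ on both sides, whereas the paper pairs $\|\cdot\|_{2,1}$ against $\|\cdot\|_{2,\infty}$), which shifts some exponents of $m$ but does not affect the conclusion.
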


Note that while in the LHS of the guarantee we have the robust losses w.r.t. $\mathcal{A}$, in the RHS we have the worst-case robust loss.
We give the proof of Theorem \ref{thm:convergence} at the Appendix \ref{prf:converge}. 

The connection with OCO was first made in \cite{gclhwl19}. However, they prove the above result for the case of quadratic ReLU activation. For the classical ReLU, they need to enforce the closeness to the initialization during training via a projection step, that is not used in practice. 

\subsection*{Existence of robust network near initialization}
What is left to do to prove Theorem \ref{thm:main} is to show the existence of a network $f_{W^*}$ that is close to initialization and the worst-case robust loss $\mathcal{L}_{\mathcal{A}^*}(f_{W^*})$ is small. \cite{gclhwl19} required $m$ to be at least $\left(\frac{1}{\epsilon}\right)^{\Omega(d)}$ to prove this statement. Our main result is the proof of existence of such network with width at most $\poly\left(d,\left(\frac{n}{\epsilon}\right)^{1/\gamma}\right)$. Formally, for a $\rho$-bounded adversary and  $\gamma$-separable training set, we have the following theorem.



\begin{theorem}\label{thm:existence}
 For all $\epsilon\in(0,1)$, there exists
 \begin{align*}
 M_0=\poly\left(d,\left(\frac{n}{\epsilon}\right)^{1/\gamma}\right)   \   \text{and} \  R=\poly\left(\left(\frac{n}{\epsilon}\right)^{1/\gamma}\right)
 \end{align*}
 such that for every $m\geq M_0$, with probability at least $1- \exp\left( -\Omega\left(m^{1/3}\right) \right)$ over the choice of $a^{(0)},W^{(0)},b^{(0)}$, there exists $W^*\in\mathbb{R}^{d\times m}$ such that $\|W^*-W^{(0)}\|_{2,\infty}\leq \frac{R}{m^{2/3}}$ and
     \begin{align*}
     \mathcal{L}_{\mathcal{A}^*}\left(f_{W^*}\right)\leq \epsilon
     \end{align*}
\end{theorem}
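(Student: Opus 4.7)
The plan is to reduce robust interpolation by an actual network to robust interpolation by a pseudo-network, and then build the pseudo-network explicitly. Concretely, it suffices to exhibit $W^*$ with $\|W^* - W^{(0)}\|_{2,\infty} \leq R/m^{2/3}$ such that $|g_{W^*}(\tilde x) - y_i| \leq \epsilon/2$ for every $i\in[n]$ and every $\tilde x \in \mathcal{B}_2(x_i, \rho)$: by Theorem~\ref{thm:real_approximates_pseudo}, choosing $m$ large enough that $R^2/m^{1/6} \ll \epsilon$ ensures $|f_{W^*}(\tilde x)-g_{W^*}(\tilde x)| \leq \epsilon/4$ uniformly on $\mathcal{X}$, and $1$-Lipschitzness of $\ell$ together with $\ell(y,y)=0$ then yields $\mathcal{L}_{\mathcal{A}^*}(f_{W^*}) \leq \epsilon$.

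Next, I would identify a clean univariate target. Because $\|\tilde x\|_2 = \|x_i\|_2 = 1$, we have $\langle x_i, \tilde x\rangle = 1 - \tfrac{1}{2}\|\tilde x - x_i\|_2^2$, so $\tilde x \in \mathcal{B}_2(x_i, \rho)$ iff $\langle x_i, \tilde x\rangle \geq 1 - \rho^2/2$, while $\gamma$-separability (expanded: $(\delta-\rho)^2/2-\rho^2/2 = \delta(\delta-2\rho)/2 \geq \gamma/2$) gives $\langle x_i, \tilde x\rangle \leq 1 - \rho^2/2 - \gamma/2$ whenever $\tilde x \in \mathcal{B}_2(x_j, \rho)$ for $j\neq i$. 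A one-dimensional step-like function $h$ that is near $1$ on $[1-\rho^2/2,\,1]$ and near $0$ on $[-1,\,1-\rho^2/2-\gamma/2]$ would make the target $F(\tilde x) = \sum_{i=1}^n y_i\, h(\langle x_i, \tilde x\rangle)$ lie within $\epsilon/4$ of $y_i$ on each $\mathcal{B}_2(x_i,\rho)$, since the balls are pairwise disjoint in the sense above.

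To realize $F$ as a pseudo-network, I would first invoke the polynomial approximation results cited in the introduction (\citealp{frostig2016principal,allen2017faster}): there is a polynomial $p$ of degree $D = \tilde{\Theta}(1/\gamma)$ and coefficient magnitudes bounded by $(n/\epsilon)^{O(1/\gamma)}$ that $(\epsilon/4n)$-approximates $h$ on the two relevant intervals. So the new target is $\widetilde F(\tilde x) = \sum_{i=1}^n y_i\, p(\langle x_i, \tilde x\rangle)$. For each direction $v = x_i$, I would allocate a disjoint block of hidden neurons and set their updates $\Delta W_r := W_r^*-W_r^{(0)} = c_r v$, so the block's contribution to the pseudo-network collapses to $\sum_r a_r^{(0)} c_r \langle v,\tilde x\rangle\, \ind\{\langle W^{(0)}_r, \tilde x\rangle + b^{(0)}_r \geq 0\}$, a random-feature expansion in the univariate variable $\langle v,\tilde x\rangle$. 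Standard NTK-type expressivity, combined with the polynomial $p$, lets us choose scalars $c_r$ of magnitude $O(R/m^{2/3})$ with $R = \poly((n/\epsilon)^{1/\gamma})$ so that the block approximates $y_i\, p(\langle v,\tilde x\rangle)$ uniformly on $\mathcal{X}$ up to error $\epsilon/(4n)$. Concatenating the $n$ blocks produces $W^*$ meeting the required norm bound and the target approximation to within $\epsilon/4$.

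The main obstacle is this last expressivity step. Two issues make it delicate. First, the coefficients of $p$ blow up as $(n/\epsilon)^{O(1/\gamma)}$, and one must carefully balance between the per-neuron budget $R/m^{2/3}$ and the total output scale; the argument likely proceeds by writing $p(\langle v,\tilde x\rangle)$ via an integral representation in terms of the random features (conditioning on the projection of $W^{(0)}_r$ onto $v^\perp$ so the indicator becomes a single threshold in $\langle v,\tilde x\rangle$ with Gaussian-distributed location), and then concentrating the Monte-Carlo approximation to this integral over $m$ random features. Second, the approximation must be \emph{uniform} in $\tilde x\in\mathcal{X}$, not merely pointwise; this is exactly the difficulty that gave \cite{gclhwl19} the $(1/\epsilon)^{\Omega(d)}$ width, and overcoming it requires exploiting the $\exp(-\Omega(m^{1/3}))$ pointwise failure probability (as in Theorem~\ref{thm:real_approximates_pseudo}) together with an $\epsilon$-net argument over $\mathcal{X}$ and Lipschitz-style stability of the random-feature approximation off the discontinuity set of the indicators.
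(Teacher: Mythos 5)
Your overall decomposition matches the paper's proof exactly: reduce to a pseudo-network via Theorem~\ref{thm:real_approximates_pseudo}, build the target $f^*(x)=\sum_i y_i\,q(\langle x_i,x\rangle)$ from a polynomial approximation of the step function on $[1-\rho^2/2,1]\cup[-1,1-\rho^2/2-\gamma/2]$ (this is Lemma~\ref{thm:robust_fitting} via Claim~\ref{lem:step_poly} and Lemma~\ref{lem:lemma_5.5_in_fmms16}), realize $f^*$ by a pseudo-network one training point at a time using a random-feature expectation (Lemma~\ref{lem:indicator_to_function}, from \cite{all19}) plus concentration, and make the guarantee uniform over $\mathcal{X}$ by a union bound over a $1/\poly(m)$-net together with a stability/perturbation lemma. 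You also correctly identify the two delicate points: the $2^{O(D)}$ blowup of the coefficients (and hence of $R$) and the need for \emph{uniform} rather than pointwise control.

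One step as written would fail. You set $\Delta W_r = c_r x_i$ so that the block's output is $\langle x_i,\tilde x\rangle\sum_r a_r^{(0)} c_r\,\ind\{\langle W_r^{(0)},\tilde x\rangle + b_r^{(0)}\ge 0\}$. But then the random-feature sum must approximate $p(z)/z$ at $z=\langle x_i,\tilde x\rangle$, and on $\mathcal{X}$ the quantity $\langle x_i,\tilde x\rangle$ ranges over $[-1/2,1]$ and in particular can vanish; there is no bounded-coefficient polynomial $\phi$ with $z\phi(z)\approx p(z)$ uniformly there unless $p(0)$ is forced to be exactly $0$, which the step-approximation construction does not give you. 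The paper sidesteps this entirely by placing all of $\Delta W_r^{(i)}$ in the last coordinate $e_d$, so that $\langle\Delta W_r^{(i)},\tilde x\rangle$ is a \emph{constant} (using the normalization $\tilde x_d=1/2$ for every $\tilde x\in\mathcal{X}$): the block becomes a pure indicator sum, and Lemma~\ref{lem:indicator_to_function} applies directly to produce $y_i\,q(\langle x_i,\tilde x\rangle)$ with no division. Also note that, despite your phrasing, the block output is not a function of $\langle x_i,\tilde x\rangle$ alone (the indicators depend on $\tilde x$ through $\langle W_r^{(0)},\tilde x\rangle$); the reason it nevertheless concentrates to a function of $\langle x_i,\tilde x\rangle$ is that the per-neuron coefficient $h(\langle x_i,U_r^{(0)}\rangle,\beta_r^{(0)})$ depends on $U_r^{(0)}$ only through its projection onto $x_i$, so the averaging over the orthogonal complement does the reduction to a univariate function. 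With the $e_d$ fix and this clarification, your argument is the paper's argument.
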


The proof of Theorem \ref{thm:existence} has three steps:
\begin{itemize}
    \item 
    We show that there is a function $f^*:\mathcal{X}\rightarrow \mathbb{R}$ that has "low complexity" and for all datapoints $(x_i,y_i)$ and perturbed inputs $\tilde{x}_i\in \mathcal{B}_2(x_i,r)$, $f^*(\tilde{x}_i) \approx y_i$. More specifically, this function will have the form
    \begin{align*}
        f^*(x)=\sum_{i=1}^n y_i q(\langle x_i, x\rangle )
    \end{align*}
    where $q$ is a low-degree polynomial approximating a step function that is $1$ for $\langle x_i, x\rangle \approx 1$ and $0$ otherwise. The existence of such a low-degree polynomial is proven using tools from approximation theory that appear in \cite{sachdeva2014faster,frostig2016principal}.
    \item 
    We show that since $f^*$ has "low complexity", there exists a pseudo-network $g_{W^*}$ that is close to  initialization, has polynomial width (for $\gamma=\Omega(1)$), and $g_{W^*} \approx f^*$.
    \item We use Theorem \ref{thm:real_approximates_pseudo} to show that for the real network $f_{W^*}$ we have $f_{W^*}\approx g_{W^*}$.
\end{itemize}

We provide a sketch of the implementation of these three steps in section \ref{sec:proofs}.

\section{Proof of Theorem \ref{thm:existence}}\label{sec:proofs}

We first provide the definition of a complexity measure for polynomials, following \cite{all19}. Note that the definitions of that paper also have an input parameter $R$. In this work, we set that $R$ to be 1.
\begin{definition} \label{def:complexities}
Let $c>1$ denote a sufficiently large constant. 
For any degree-$k$ univariate polynomial $\phi (z)= \sum_{j=0}^k \alpha_j z^j $, and parameter $\epsilon_1>0$, we define the following two measures of complexity
\begin{align*}
    \mathfrak{C}(\phi,\epsilon_1) := \sum_{j=0}^k  c^j \cdot ( 1 + ( \sqrt{ \ln(1/\epsilon_1) / j }  )^j ) \cdot |\alpha_j|
\end{align*}

\begin{align*}
    \mathfrak{C}(\phi) := c \cdot \sum_{j=0}^{k} (j+1)^{1.75} |\alpha_j|
\end{align*}
\end{definition}

\subsection{Robust fitting with polynomials}\label{sec:robust_fitting}
In this section we show that the fact that the points $x_i$ in the training set have pairwise $\ell_2$ distance at least $\delta$ and $1/\delta$ is not too large implies that there is a function $f^*$ that has "low complexity" and robustly fits the training set:
\begin{align*}
& \forall i \in [n], \tilde{x}_i\in \mathcal{B}_2(x_i,\rho),\ \ \  f^*(\tilde{x}_i) \approx y_i
\end{align*}

Formally, we prove the following lemma.

\begin{lemma}\label{thm:robust_fitting}
Let $D=\frac{24}{\gamma}\ln\left(48\frac{n}{\epsilon}\right)$. There exists a polynomial $q : \R \rightarrow \R$ with degree at most $D$, size of coefficients at most $O(\gamma^{-1}2^{6D})$, such that for all $j\in [n]$ and $\tilde{x}_j\in \mathcal{B}_2(x_j,\rho)$,
\begin{align*}
\left|\sum_{i=1}^n y_i \cdot q( \langle x_i , \tilde{x}_j \rangle )-y_i \right| \leq \frac{\epsilon}{3}.
\end{align*}
\end{lemma}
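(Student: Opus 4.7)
The plan is to exhibit, via a direct polynomial approximation argument, a univariate polynomial $q$ on $[-1,1]$ such that evaluating $q$ at inner products with the training points behaves essentially like an indicator of being ``near $x_j$.'' First I would lay out the geometry of the inner products. Since $x_i, \tilde{x}_j \in \mathcal{X}$ are both unit vectors, $\langle x_i, \tilde{x}_j\rangle = 1 - \tfrac{1}{2}\|x_i - \tilde{x}_j\|_2^2$. For $i=j$ the perturbation bound gives $\langle x_j, \tilde{x}_j \rangle \geq 1 - \rho^2/2$, while for $i \neq j$ the triangle inequality together with $\|x_i - x_j\|_2 \geq \delta$ gives $\langle x_i, \tilde{x}_j\rangle \leq 1 - (\delta - \rho)^2/2$. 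The $\gamma$-separability assumption $\gamma \leq \delta(\delta - 2\rho)$ then yields a gap of at least
\begin{align*}
\tfrac{1}{2}\bigl((\delta - \rho)^2 - \rho^2\bigr) \;=\; \tfrac{1}{2}\delta(\delta - 2\rho) \;\geq\; \gamma/2
\end{align*}
between the values of $\langle x_j, \tilde{x}_j\rangle$ and $\langle x_i, \tilde{x}_j\rangle$ for $i \neq j$.

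Next, I would reduce the lemma to building a polynomial $q : [-1,1] \to \R$ that is uniformly $\epsilon/(3n)$-close to $1$ on $[1 - \rho^2/2,\,1]$ and uniformly $\epsilon/(3n)$-close to $0$ on $[-1,\,1 - (\delta-\rho)^2/2]$. Indeed, once such a $q$ exists, since $|y_i| \leq 1$ the triangle inequality immediately gives
\begin{align*}
\Bigl|\sum_{i=1}^n y_i \, q(\langle x_i, \tilde{x}_j\rangle) - y_j\Bigr|
\;\leq\; |y_j|\cdot\tfrac{\epsilon}{3n} + \sum_{i\neq j} |y_i|\cdot\tfrac{\epsilon}{3n}
\;\leq\; \tfrac{\epsilon}{3},
\end{align*}
which is exactly the desired bound.

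To construct $q$, I would invoke the standard polynomial approximation of the step function on an interval with transition width $\Omega(\gamma)$: using the Chebyshev-type constructions of \cite{sachdeva2014faster, frostig2016principal}, one can approximate the step function with uniform error $\epsilon'$ and transition gap $\eta$ by a polynomial of degree $O(\eta^{-1}\log(1/\epsilon'))$. Setting $\eta = \Theta(\gamma)$ and $\epsilon' = \epsilon/(3n)$ gives the claimed degree $D = (24/\gamma)\ln(48n/\epsilon)$ after an affine change of variable that places the step exactly at the midpoint of the gap identified in the previous paragraph. The explicit form of the construction is a shifted and scaled Chebyshev polynomial $T_D((z-a)/b)$, renormalized so that its value at $z=1$ equals $1$; reading off coefficients of $T_D$ and tracking the affine substitution yields the stated coefficient bound $O(\gamma^{-1} 2^{6D})$.

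I expect the main technical obstacle to be the \emph{coefficient} bound rather than the degree bound. Getting a low-degree polynomial that approximates a step function is classical, but many natural constructions (for example interpolation on Chebyshev nodes or solving an $L^2$ projection) lead to polynomials whose individual monomial coefficients are hard to control. This matters because the next stage of the proof of Theorem~\ref{thm:existence} (representing $f^*(x) = \sum_i y_i\, q(\langle x_i, x\rangle)$ by a pseudo-network close to initialization) passes through the complexity measures $\mathfrak{C}(\phi, \cdot)$ and $\mathfrak{C}(\phi)$ of Definition~\ref{def:complexities}, which depend on the monomial coefficients of $\phi = q$ weighted exponentially in the degree. The Chebyshev construction, combined with the explicit bound $|T_D^{(j)}|_{\infty} \leq 2^{O(D)}$ on its monomial coefficients, is exactly what lets the final width bound come out as $\poly(d, (n/\epsilon)^{1/\gamma})$ rather than exponential in $d$.
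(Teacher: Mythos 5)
Your proposal follows essentially the same route as the paper: you set up the inner-product geometry, convert $\gamma$-separability into a gap of $\Omega(\gamma)$ between the inner products $\langle x_j,\tilde{x}_j\rangle$ and $\langle x_i,\tilde{x}_j\rangle$ ($i\neq j$), reduce to a univariate step-function approximation with error $\epsilon/(3n)$ and transition width $\Theta(\gamma)$, and invoke the Chebyshev-compressed constructions of Sachdeva--Vishnoi and Frostig et al.\ to get both the degree and the $2^{O(D)}$ coefficient bound. The paper's proof is the same argument made explicit: it shifts and scales the $\sgn$-approximating polynomial of \cite{frostig2016principal} (whose coefficient bound it proves separately in the appendix using the Chebyshev-monomial compression of \cite{sachdeva2014faster}), and your remark that the coefficient bound is the technically delicate point is exactly right.
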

Given the polynomial $q$ of the lemma, we will write $f^*(x):=\sum_{i=1}^n y_i \cdot q( \langle x_i , x \rangle )$.
To prove Lemma \ref{thm:robust_fitting}, we first show how to approximate the step function via a polynomial. 
More specifically, the plan is this polynomial to
take as input the inner product of two unit vectors $u,v$ and its output to be close to
\begin{align*}
\begin{cases}
1, & \text{~if~} \|u-v\|_2 \leq \rho ;\\
0, & \text{~if~} \|u-v\|_2 \geq \delta-\rho.
\end{cases}
\end{align*}

Note that since these are unit vectors, $\|u-v\|_2\leq \rho$ is equivalent to $\langle u,v \rangle \geq 1-\rho^2/2$, and $\|u-v\|_2 \geq \delta-\rho$ is equivalent to $ \langle u ,  v \rangle \leq 1-(\delta-\rho)^2/2 $. We prove the following claim.
\begin{claim}\label{lem:step_poly}
Let $\epsilon_1 \in (0,1)$ and $D=\frac{24}{\gamma}\ln\left(\frac{16}{\epsilon_1}\right)$. Then, there exists a univariate polynomial $q_{\epsilon_1}(z)$ with degree at most $D$ and size of coefficients at most $O(\gamma^{-1}2^{6D})$, such that 
\begin{enumerate}
    \item $\forall z\in [1-\rho^2/2,1]$,\ $|q_{\epsilon_1}(z)-1|\leq \epsilon_1$.
    \item $\forall z\in [-1,1-(\delta-\rho)^2/2)$,\ $|q_{\epsilon_1}(z)|\leq \epsilon_1$. 
\end{enumerate}
\end{claim}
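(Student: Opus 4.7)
The approach is to reduce to a standard polynomial approximation of the step function on two separated intervals, then compose with an affine change of variable. The first observation is that, by $\gamma$-separability, the two target intervals are separated by a gap of width
\begin{align*}
\left(1 - \frac{\rho^2}{2}\right) - \left(1 - \frac{(\delta-\rho)^2}{2}\right) = \frac{\delta(\delta-2\rho)}{2} \geq \frac{\gamma}{2}.
\end{align*}
So it suffices to construct a polynomial that is $\epsilon_1$-close to $1$ on $[\eta, 1]$ and $\epsilon_1$-close to $0$ on $[-1, -\eta]$ for $\eta$ a constant fraction of $\gamma$, and then pull it back through an affine map to the original intervals.

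Concretely, first I would pick an affine map $L(z) = \alpha z + \beta$ with $0 < \alpha \leq 1$ that carries $[-1,1]$ into itself, sends $[-1,\, 1-(\delta-\rho)^2/2]$ into $[-1,-\eta]$, and sends $[1-\rho^2/2,\, 1]$ into $[\eta, 1]$. Because both target intervals sit near $z = 1$, this $L$ must compress strongly toward $1$, but choosing $\eta$ of order $\gamma$ (for instance $\eta = \gamma/24$) and solving the resulting linear inequalities on $\alpha,\beta$ makes all constraints simultaneously feasible. Next I would invoke an off-the-shelf polynomial approximation of the step function on $[-1,-\eta]\cup[\eta,1]$, such as the truncated Chebyshev-type construction in \cite{sachdeva2014faster,frostig2016principal}, producing a polynomial $p$ of degree $D = O(\log(1/\epsilon_1)/\eta) = (24/\gamma)\ln(16/\epsilon_1)$ that is $\epsilon_1$-close to $1$ on $[\eta, 1]$ and $\epsilon_1$-close to $0$ on $[-1, -\eta]$. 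Setting $q_{\epsilon_1}(z) := p(L(z))$ then satisfies both conditions of the claim by construction and has the same degree as $p$.

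For the coefficient bound, the Chebyshev truncation form of $p$ has monomial coefficients bounded by $2^{O(D)}$: $p$ is $O(1)$ on $[-1,1]$, and the Chebyshev polynomials $T_k$ have monomial coefficients of magnitude at most $2^k$. Composing with the affine $L$ expands each monomial coefficient by at most $(|\alpha| + |\beta|)^D \leq 2^D$, giving the claimed $O(\gamma^{-1} 2^{6D})$ bound. The main obstacle I anticipate is pinning down the exact constants $24$ and $6$: verifying feasibility of $L$ with the precise choice $\eta = \gamma/24$ requires carefully solving the simultaneous linear constraints near $z = 1$, and controlling the constant in the coefficient exponent requires careful bookkeeping through both the Chebyshev expansion of the step function and the composition with $L$.
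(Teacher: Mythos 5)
Your proposal is correct and takes essentially the same route as the paper: reduce to the polynomial sign/step approximation of~\cite{frostig2016principal} (which the paper itself re-derives via the Sachdeva--Vishnoi Chebyshev-truncation argument precisely to obtain the $2^{4D}$ coefficient bound you invoke), compose with an affine change of variable that centers and widens the transition region, and bound the monomial coefficients through the composition. The only cosmetic difference is that you leave the affine map $L(z)=\alpha z+\beta$ general and argue feasibility of the linear constraints, while the paper fixes the concrete substitution $q_{\epsilon_1}(z)=\tfrac{1}{2}\bigl(p_{\epsilon_1}(2(z-\alpha))+1\bigr)$ with $\eta=\delta(\delta-2\rho)/8$ and $\alpha=1-\rho^2/2-2\eta$.
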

\begin{proof}

For $\alpha\in[-1,1]$, we define
\begin{align*}
\step_{\alpha}(z) =
\begin{cases}
		0,  & \text{~if~} -1\leq z < \alpha \\
		1/2,  & \text{~if~} z = \alpha \\
		1, & \text{~if~} \alpha<z \leq 1
\end{cases}
\end{align*}

\begin{align*}
\sgn(z) =
\begin{cases}
		-1,& \text{~if~} -1\leq z < 0 \\
		0, & \text{~if~} z = 0 \\
		1, & \text{~if~} 0<z \leq 1
\end{cases}
\end{align*}
Note that $\step_{\alpha}(z)=\frac{1}{2}(\sgn(z-\alpha)+1)$. We need a polynomial approximation result of the $\sgn$ function, from~\cite{frostig2016principal}.
\begin{lemma}[Lemma 5.5 from~\cite{frostig2016principal}]\label{lem:lemma_5.5_in_fmms16}
Let $\epsilon_1,\eta\in (0,1)$ and $D=\frac{3}{\eta}\ln\frac{2}{\eta\epsilon_1}$. Then, there exists a univariate polynomial $p_{\epsilon_1}(z)=\sum_{j=0}^k \alpha_j z^j$ with degree $k\leq D$ and $|\alpha_j|\leq 2^{4D}$, that is an $\epsilon_1$-approximation of the $\sgn$ function in $[-1,1]\setminus (-\eta,\eta)$, meaning that

\begin{enumerate}
    \item $\forall z\in [\eta,1]$,\ $|p_{\epsilon_1}(z)-1|\leq \epsilon_1$.
    \item $\forall z\in [-1,-\eta]$,\ $|p_{\epsilon_1}(z)+1|\leq \epsilon_1$. 
\end{enumerate}
\end{lemma}

\cite{frostig2016principal} describe how to construct the above polynomial and bound its degree, but do not present a bound on its coefficients. We prove Lemma~\ref{lem:lemma_5.5_in_fmms16} in Appendix \ref{prf:lemma_5.5_in_fmms1}.

We can now approximate the step function by the polynomial 
\begin{align}\label{eq:approx_step_function}
q_{\epsilon_1}(z)=\frac{p_{\epsilon_1}(2(z-\alpha))+1}{2}.
\end{align}
Because of the lemma and the connection between the $\sgn$ and the $\step_\alpha$ functions, we get that $\forall z\in[-2+\alpha,2+\alpha]\setminus[\alpha-2\eta,\alpha+2\eta]$,
\begin{align*}
| q_{\epsilon_1}(z) - \step_\alpha(z) |\leq \epsilon_1/2.
\end{align*}
Observe that $q_{\epsilon_1}$ also has degree $k$ and if $A=\max_j\{|\alpha_j|\}$, then the coefficient of $z^j$ in $q_{\epsilon_1}$ has size at most $2^{k-1}A\sum_{i=j}^k {i \choose j}|\alpha|^{i-j}+1/2\leq \frac{2^{2k-1}A}{1-\alpha}+1/2$
\\
$\leq \frac{2^{6D-1}}{1-\alpha}+1/2 $ . Setting
\begin{align*}
\eta= \delta(\delta-2\rho) / 8\leq \gamma/8 \text{~~~and~~~} \alpha=1-\frac{\rho^2}{2}-2\eta
\end{align*}
 finishes the proof.
\end{proof}

To finish the proof of Lemma \ref{thm:robust_fitting}, let $q$ be the polynomial that we get from Claim \ref{lem:step_poly}, by setting $\epsilon_1=\epsilon/(3n)$. Let $f^*(x)=\sum_{i=1}^n y_i q( \langle x_i, x \rangle )$.
 For all $i,j\in[n]$, $i\neq j$ and $\tilde{x}_i\in {\cal B}_2(x_i,\rho)$,  we have $\|x_j-\tilde{x}_i\|_2\geq \delta-\rho$. Thus, from Claim \ref{lem:step_poly} we have $|q( \langle x_j, \tilde{x}_i \rangle )|,|q( \langle x_i, \tilde{x}_i \rangle )-1|\leq \epsilon/(3n)$. 
\begin{align*}
 |f^*(\tilde{x}_i)-y_i|  &\leq |y_i||1-q(\langle x_i,\tilde{x}_i \rangle)| +\sum_{j\neq i} |y_j|| q( \langle x_j, \tilde{x}_i \rangle )| \\
 &\leq \epsilon/(3n)+(n-1)\epsilon/(3n)\\
 &\leq\epsilon/3 \\
\end{align*}


\subsection{Pseudo-Network Approximates $f^*$}\label{sec:pseudo_approximates_polynomial}

We prove that we can use a pseudo-network with width $\poly\left(d,\left(\frac{n}{\epsilon}\right)^{1/\gamma}\right)$ to approximate $f^*$, uniformly over $\cal{X}$.
\begin{lemma}\label{thm:pseudo_approximates_polynomial}
For all $\epsilon\in(0,1)$, there exist
 \begin{align*}
 M=\poly\left(d,\left(\frac{n}{\epsilon}\right)^{1/\gamma}\right)   \   \text{and} \  R=\poly\left(\left(\frac{n}{\epsilon}\right)^{1/\gamma}\right)
 \end{align*}
 such that for $m\geq M$, with probability at least $1- \exp\left( -\Omega\left(\sqrt{m/n}\right) \right)$ over the choice of $a^{(0)},W^{(0)},b^{(0)}$, there exists there exists a $W^*\in\mathbb{R}^{d\times m}$ such that $\|W^*-W^{(0)}\|_{2,\infty}\leq \frac{R}{m^{2/3}}$ and 
     \begin{align*}
     \sup_{x\in\cal{X}} |g_{W^*}(x) -f^*(x) | \leq \epsilon/3
     \end{align*}
\end{lemma}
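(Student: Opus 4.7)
The plan is to exploit the fact that $g_W$ is \emph{linear} in $W - W^{(0)}$ and $g_{W^{(0)}}\equiv 0$, so it suffices to construct, for each $i\in[n]$, a matrix $W^{(i)}$ close to initialization with $g_{W^{(i)}}(x) \approx y_i\, q(\langle x_i, x\rangle)$ uniformly on $\mathcal{X}$, and then set $W^* := W^{(0)} + \sum_{i=1}^n (W^{(i)} - W^{(0)})$. By linearity, $g_{W^*} = \sum_i g_{W^{(i)}} \approx f^*$, and the triangle inequality gives $\|W^* - W^{(0)}\|_{2,\infty} \leq n\cdot\max_i \|W^{(i)} - W^{(0)}\|_{2,\infty}$, so it is enough to bound the per-index radius by $R/(n\,m^{2/3})$.

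For each fixed direction $u = x_i$, the task reduces to approximating the univariate polynomial $z \mapsto q(z)$ composed with $z = \langle u, x\rangle$ by a pseudo-network. This is exactly the setting handled by the Allen-Zhu-Li-Liang polynomial-fitting construction of \cite{all19}: each neuron's update is set in the direction of $u$, namely $W_r^{(i)} - W_r^{(0)} := \beta \cdot a_r^{(0)} \cdot h_q\!\left(\langle W_r^{(0)}, u\rangle,\, b_r^{(0)}\right) \cdot u$, with a scalar coefficient function $h_q$ chosen so that the Gaussian expectation over the initialization exactly reproduces each monomial of $q$. Bernstein concentration over the $m$ independent neurons then yields pointwise approximation with failure probability $\exp(-\Omega(\sqrt{m/n}))$, provided the per-index radius is at most a constant times $\mathfrak{C}(q,\epsilon_1)/m^{2/3}$. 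Plugging in the polynomial from Lemma \ref{thm:robust_fitting} (degree $D = O(\gamma^{-1}\log(n/\epsilon))$, coefficient size $2^{O(D)}$) and expanding Definition \ref{def:complexities} yields $\mathfrak{C}(q,\epsilon_1) = (n/\epsilon)^{O(1/\gamma)}$; multiplying by the factor of $n$ from summing blocks gives the $R = \poly\!\left((n/\epsilon)^{1/\gamma}\right)$ claimed.

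The final step, which I expect to be the main obstacle, is upgrading pointwise approximation to uniform approximation on $\mathcal{X}$. The pseudo-network is discontinuous in $x$ due to the indicators $\ind\{\langle W_r^{(0)}, x\rangle + b_r^{(0)}\geq 0\}$, so standard Lipschitz-plus-$\epsilon$-net arguments do not apply directly. The strategy, mirroring Theorem \ref{thm:real_approximates_pseudo}, is to take a $1/\poly(m)$-net $\mathcal{N}$ of $\mathcal{X}$ of cardinality $\exp(O(d\log m))$ and apply a union bound using the exponentially small per-point failure probability; this succeeds once $m = \poly(d,n)$ is large enough to dominate the net size. To interpolate off the net, bound the number of neurons whose indicator flips between two nearby net points via Gaussian anti-concentration of the pre-activations $\langle W_r^{(0)}, x\rangle + b_r^{(0)}$. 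The contribution of such flipping neurons to $g_W$ is controlled because the output scale is $|a_r^{(0)}| = m^{-1/3}$ and the weight perturbation $\|W^{(i)}-W^{(0)}\|_{2,\infty}$ is small, while the target $f^*$ is itself Lipschitz (being a low-degree polynomial in inner products of unit vectors). Combining the net-point bound, the flipping estimate, and the Lipschitzness of $f^*$ yields the desired uniform $\epsilon/3$ approximation with the stated probability.
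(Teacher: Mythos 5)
The high-level architecture of your argument matches the paper: a per-datapoint construction from Lemma~\ref{lem:indicator_to_function} (Lemma~6.2 of \cite{all19}), Hoeffding/Bernstein concentration over neurons, a $1/\poly(m)$-net union bound, and an interpolation step that controls indicator flips via Gaussian anti-concentration of the pre-activations. Your choice to \emph{superpose} all $n$ per-index updates on the same $m$ neurons and pay a factor $n$ in $R$ is a legitimate alternative to the paper's route, which instead \emph{partitions} the $m$ neurons into $n$ disjoint blocks of size $\tilde{m}\approx m/n$ (so the radius is $\max_i$ rather than $n\cdot\max_i$, and the failure probability $\exp(-\Omega(\sqrt{m/n}))$ comes from concentrating over a single block of that size). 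Both yield polynomial bounds.

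The genuine gap is your choice of \emph{update direction}. You set $W_r^{(i)}-W_r^{(0)}\propto h_q(\langle W_r^{(0)},x_i\rangle, b_r^{(0)})\cdot x_i$. But the pseudo-net is $g_W(x)=\sum_r a_r^{(0)}\langle\Delta W_r,x\rangle\ind\{\cdots\}$, so this introduces an extra multiplicative factor $\langle x_i,x\rangle$ that depends on $x$ and that $h_q$ cannot absorb (it only sees $(\langle W_r^{(0)},x_i\rangle, b_r^{(0)})$). Concretely, your $g_{W^{(i)}}(x)$ concentrates around $\langle x_i,x\rangle\cdot\psi(\langle x_i,x\rangle)$ for whatever polynomial $\psi$ Lemma~\ref{lem:indicator_to_function} produces; matching this to $y_i q(\langle x_i,x\rangle)$ would require $q(z)=z\psi(z)$, i.e.\ $q(0)=0$, and the step-approximating polynomial $q$ from Lemma~\ref{thm:robust_fitting} has $q(0)=\Theta(\epsilon_1)\neq 0$. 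You can patch this by writing $q(z)=q(0)+z\tilde{q}(z)$ and absorbing $\sum_i y_i q(0)$ into the error, but naively $|\sum_i y_i q(0)|$ can already be as large as $n\epsilon_1=\epsilon/3$, i.e.\ the entire budget, so the constants must be rebalanced. The paper sidesteps all of this by setting $\Delta W_r^{(i)}\propto e_d$: since every $x\in\mathcal{X}$ has $x_d=1/2$, the quantity $\langle\Delta W_r^{(i)},x\rangle$ is a \emph{constant} over $\mathcal{X}$ and the expectation in Lemma~\ref{lem:indicator_to_function} is reproduced exactly. The $e_d$ direction also pays off in the interpolation step: for a net-perturbation $v$ with $x,x+v\in\mathcal{X}$ one has $v_d=0$, hence $\langle\Delta W_r^{(i)},v\rangle=0$, so the only change in $g$ comes from indicator flips. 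With $\Delta W_r^{(i)}\propto x_i$ you get a nonzero $\langle\Delta W_r^{(i)},v\rangle$ contribution from every neuron, which you would have to bound separately; it is small but your sketch does not account for it.
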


 \cite{all19} prove a similar but weaker guarantee, by approximating $f^*$ using a pseudo-network, \emph{ in expectation}. In other words, they show that for some data distribution $\mathcal{D}$, $\mathbb{E}_{x\sim \mathcal{D}}\left[\left|g_{W^*}(x)-f^*(x)\right|\right]$ is small, for some pseudo-network $g_{W^*}$ close to  initialization. As we mentioned previously, dealing with the average case is not enough and we need a uniform approximation guarantee, since we account for adversarial perturbations of the inputs.

We give here a proof sketch for Lemma \ref{thm:pseudo_approximates_polynomial} and the full proof at the Appendix \ref{prf:pseudo_approximates_polynomial}.
We use a technical result from \cite{all19}. Suppose that for a given unit vector $w^* \in \mathbb{R}^d$ and a univariate polynomial $\phi$, we want to approximate the function of a unit vector $x$ given by $\phi(\langle w^* ,x \rangle)$, via a linear combination of random ReLU features. Intuitively, their result says that if $\phi$ has low complexity, then the weights of this linear combination can be small. 
\begin{lemma}[Lemma 6.2 from~\cite{all19}]\label{lem:indicator_to_function}
For every univariate polynomial $\phi : \R \rightarrow \R$, for every $\epsilon_2 \in (0,1/\mathfrak{C}(\phi))$, there exists a function $h:\R^2\rightarrow[- \mathfrak{C}(\phi,\epsilon_2), \mathfrak{C}(\phi,\epsilon_2)]$ such that for all $w^*, x\in\R^d$ with $\|w^*\|_2=\|x\|_2= 1$, we have
 
\begin{align*}
   & \Big|\E_{u \sim {\cal N}(0,I_d), \beta \sim {\cal N}(0,1) } \left[\ind\{ \langle u,x \rangle +\beta\geq0 \} \  h( \langle w^* , u \rangle, \beta)\right] -\phi( \langle w^*,x \rangle ) \Big|\leq \epsilon_2
\end{align*}

\end{lemma}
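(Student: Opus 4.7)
The plan is to reduce the claim to a monomial-by-monomial construction and then carry out a Hermite-polynomial analysis. Writing $\phi(z)=\sum_{j=0}^k \alpha_j z^j$, by linearity of expectation it suffices, for each $j\in\{0,\ldots,k\}$, to produce $h_j:\R^2\to\R$ satisfying
\begin{align*}
    \Big|\E_{u,\beta}\big[\ind\{\langle u,x\rangle+\beta\geq0\}\,h_j(\langle w^*,u\rangle,\beta)\big]-z^j\Big|\;\leq\;\frac{\epsilon_2}{\sum_l|\alpha_l|}
\end{align*}
together with $\|h_j\|_\infty \leq c^j\big(1+(\sqrt{\ln(1/\epsilon_2)/j})^j\big)$, since then $h:=\sum_{j=0}^k \alpha_j h_j$ witnesses the lemma with $\|h\|_\infty \leq \mathfrak{C}(\phi,\epsilon_2)$ and total expectation error at most $\epsilon_2$. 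Here $z=\langle w^*,x\rangle$, and setting $a:=\langle u,x\rangle$ and $b:=\langle w^*,u\rangle$, the triple $(a,b,\beta)$ is jointly Gaussian with $a,b,\beta\sim\mathcal{N}(0,1)$, $\E[ab]=z$ (using $\|w^*\|_2=\|x\|_2=1$), and $\beta$ independent of $(a,b)$.

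\paragraph{Construction for a single monomial via Hermite polynomials.}
Let $H_j$ denote the $j$-th probabilists' Hermite polynomial. The key identity, derived from the generating function $\sum_j t^j H_j(x)/j!=e^{tx-t^2/2}$ together with $b\mid a\sim\mathcal{N}(za,1-z^2)$, is $\E[H_j(b)\mid a]=z^j H_j(a)$. Consequently, for any weight function $p(\beta)$,
\begin{align*}
    \E\big[\ind\{a+\beta\geq0\}\,H_j(b)\,p(\beta)\big]\;=\;z^j\cdot\E_\beta\big[p(\beta)\,c_j(\beta)\big],
\end{align*}
where $c_j(\beta):=\E_a[\ind\{a\geq-\beta\}H_j(a)]$ is the $j$-th Hermite coefficient of the indicator. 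A one-line integration by parts using the Rodrigues formula $H_j(a)e^{-a^2/2}=(-1)^j\frac{d^j}{da^j}e^{-a^2/2}$ evaluates $c_j(\beta)=(2\pi)^{-1/2}H_{j-1}(-\beta)e^{-\beta^2/2}$ for $j\geq1$ (and $c_0(\beta)=\Phi(\beta)$). Choosing $p_j(\beta)=\sqrt{2\pi}\,H_{j-1}(-\beta)\,e^{\beta^2/2}/(j-1)!$ for $j\geq1$ and $p_0\equiv2$ makes $\E_\beta[p_j(\beta)c_j(\beta)]=1$ by orthogonality $\E_\beta[H_{j-1}(\beta)^2]=(j-1)!$, so the (unbounded) candidate $\bar h_j(b,\beta):=H_j(b)\,p_j(\beta)$ realizes the monomial $z^j$ exactly.

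\paragraph{Truncation, boundedness, and the main obstacle.}
The factor $e^{\beta^2/2}$ in $p_j$ is fatal, so I truncate $h_j:=\bar h_j\cdot\ind\{|b|\leq M,\,|\beta|\leq M\}$ with $M=\Theta(\sqrt{\ln(1/\epsilon_2)})$. The omitted expectation is controlled by Gaussian tail bounds on $H_j(b)H_{j-1}(-\beta)e^{-\beta^2/2}$ outside the box, which decay like $e^{-M^2/2}=\poly(\epsilon_2)$; choosing the implicit constant in $M$ appropriately drives this below $\epsilon_2/\sum_l|\alpha_l|$. Inside the box, Cramer's inequality $|H_j(x)|\leq\sqrt{j!}\,e^{x^2/4}$ (or a finer variant) yields $|h_j(b,\beta)|\leq\sqrt{j!(j-1)!}/(j-1)!\cdot e^{M^2}\cdot\poly(M)$, and after applying Stirling to rewrite $\sqrt{j!}/j^{j/2}$ and balancing the factors of $M$ with $\sqrt{j}$, the bound rearranges into the shape $c^j(\sqrt{\ln(1/\epsilon_2)/j})^j$ matching $\mathfrak{C}(\phi,\epsilon_2)$. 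I expect the main obstacle to be precisely this last bookkeeping step: a naive application of Cramer's bound produces an $e^{M^2}=1/\epsilon_2$ prefactor, which is far too loose, so recovering the $(\sqrt{\ln(1/\epsilon_2)/j})^j$ scaling requires either the sharp Hermite tail estimate or a smarter choice of $p_j$ that redistributes Gaussian weight (e.g. multiplying $p_j$ by $e^{-\beta^2/4}$ and compensating inside the normalization) so that the surviving exponentials cancel before truncation.
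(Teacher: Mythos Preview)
The paper does not prove this lemma at all: it is quoted verbatim as ``Lemma 6.2 from~\cite{all19}'' and used as a black box in the proof of Lemma~\ref{thm:pseudo_approximates_polynomial}. There is therefore no proof in the paper to compare your proposal against.

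That said, your Hermite-polynomial strategy is exactly the machinery underlying the result in \cite{all19}: decompose $\phi$ into monomials, exploit the identity $\E[H_j(b)\mid a]=z^jH_j(a)$ for the correlated Gaussian pair $(a,b)=(\langle u,x\rangle,\langle w^*,u\rangle)$, and compute the Hermite expansion of the threshold indicator in the $\beta$ variable. Your computation of $c_j(\beta)=(2\pi)^{-1/2}H_{j-1}(-\beta)e^{-\beta^2/2}$ is correct, and the reduction to a per-monomial $\|h_j\|_\infty$ bound is the right organizing principle.

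You have also correctly located the genuine difficulty. The naive choice $p_j(\beta)\propto H_{j-1}(-\beta)e^{\beta^2/2}$ normalizes the expectation exactly but is unbounded, and truncating it at $|\beta|\leq M=\Theta(\sqrt{\ln(1/\epsilon_2)})$ leaves an $e^{M^2/2}=(1/\epsilon_2)^{\Theta(1)}$ prefactor, which is polynomially large in $1/\epsilon_2$ whereas the target $\mathfrak{C}(\phi,\epsilon_2)$ is only polylogarithmic. Your two proposed fixes --- a sharper pointwise Hermite estimate, or replacing $p_j$ by a weight that does not carry the full $e^{\beta^2/2}$ --- are both viable; the second is closer to what is actually done. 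The clean way out is to take $h_j(b,\beta)$ to be a \emph{polynomial} in $(b,\beta)$ (no exponential factors at all), so that after truncation at radius $M$ the sup-bound is governed purely by $\sup_{|t|\leq M}|H_j(t)|$, which scales like $M^j$ and produces precisely the $(\sqrt{\ln(1/\epsilon_2)/j})^j$ shape after dividing by the Hermite normalization. In other words, do not try to invert $c_j(\beta)$ pointwise; instead choose a polynomial $p_j$ whose inner product with $c_j$ under the Gaussian measure equals $1$, and accept that this only determines $p_j$ up to the orthogonal complement, which you are free to choose to minimize $\|p_j\|_\infty$ on the truncation box.
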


The above lemma implies $f^*$ can be approximated by an "infinite" pseudo-network. We use concentration bounds to argue that there exists a pseudo-network $g_{W^*}$ with width $\poly\left(d,\left(\frac{n}{\epsilon}\right)^{1/\gamma}\right)$, such that for any fixed input $x\in \mathcal{X}$, with probability at least $1-\exp(-\Omega(\sqrt{m/n}))$, $g_{W^*}(x)\approx f^*(x)$. We conclude the argument via a union bound over a $\frac{1}{\poly(m)}$-net of $\cal{X}$ and a perturbation analysis for $g$, similarly to the proof of Theorem \ref{thm:real_approximates_pseudo}.


\subsection{Putting it all together}
We will use Lemmas \ref{thm:robust_fitting}, \ref{thm:pseudo_approximates_polynomial} and Theorem \ref{thm:real_approximates_pseudo} to prove Theorem \ref{thm:existence}. From Lemma \ref{thm:robust_fitting} we get $f^*$. From Lemma \ref{thm:pseudo_approximates_polynomial} we get the $M$, $R$ and $W^*$ and combining with Theorem \ref{thm:real_approximates_pseudo}, we have that as long as $m\geq \max\{\poly(d),M\}$, with probability at least $p:=1-\exp(-\Omega(\sqrt{m/n}))-\exp(-\Omega(-m^{1/3}))$, there exists a $W^*\in\mathbb{R}^{d\times m}$ such that $\|W^*-W^{(0)}\|_{2,\infty}\leq \frac{R}{m^{2/3}}$ and for all $x\in \mathcal{X}, |g_{W^*}(x)-f^*(x)|\leq \epsilon/3$ and $|f_{W^{*}}(x)-g_{W^{*}}(x)|\leq O\left( \frac{R^2}{m^{1/6}}\right)$. Thus, for all $i\in[n], \tilde{x}_i\in \mathcal{B}(x_i,\rho)$, 
\begin{align*}
   \ell(f_{W^*}(\tilde{x}_i),y_i) & \leq |f_{W^*}(\tilde{x}_i)-y_i| \\
   & \leq |f^*(\tilde{x}_i)-y_i|+|g_{W^{*}}(\tilde{x}_i)-f^*(\tilde{x}_i)| + |f_{W^{*}}(\tilde{x}_i)-g_{W^{*}}
   (\tilde{x}_i)| \\
   &\leq \frac{2\epsilon}{3}+O\left(\frac{R^2}{m^{1/6}}\right) \\
   &\leq \epsilon \\
\end{align*}
since $m\geq \poly\left(d,\left(\frac{n}{\epsilon}\right)^{1/\gamma}\right)$, for a large enough polynomial.
Thus, we have that $L_{\mathcal{A}^*}(f^*)\leq \epsilon$. As for the bound on the probability of success $p$, since $m\geq n^{3}$ (for large enough polynomial in the lower bound for $m$), we get $p\geq 1-\exp(-\Omega(-m^{1/3}))$.





\section{Conclusion and discussion}
We have shown that under a natural separability assumption on the training data, adversarial training on polynomially wide two-layer ReLU networks always converges in polynomial time to small robust training loss, significantly improving previous results. This may serve as an explanation for small loss achieved by adversarial training in practice. Central in our proof is an explicit construction of a robust net near initialization, utilizing ideas from polynomial approximation.

As a future direction, it would be nice to improve the current exponential in $1/\gamma$ width requirement to polynomial. Ideally, the width requirement would fall back to $\poly(1/\gamma)$ as in standard (non-adversarial) training setting when the perturbation radius $\rho$ approaches zero, which is missing in our construction. We believe it may require a better understanding of the expressivity of over-parameterized nets. Furthermore, a natural next step is to extend our results to multi-layer ReLU networks. 
\label{sec:conclusion}
\bibliographystyle{plainnat}
\bibliography{main}

\onecolumn
\appendix
\section{Appendix}

For functions $f,g:\mathbb{\mathcal{X}}\rightarrow \mathbb{R}$, we define 

\begin{align}\label{def:infty_norm}
    \|f-g\|_{\infty}=\sup_{x\in \mathcal{X}}|f(x)-g(x)|
\end{align}

\subsection{Proof of Theorem \ref{thm:real_approximates_pseudo}}\label{prf:real_approximates_pseudo}
\begin{proof}
Let $W\in \mathbb{R}^{d\times m}$, $\|W-W^{(0)}\|_{2,\infty}\leq \frac{R}{m^{2/3}}$, that is arbitrarily correlated with the initialization $a^{(0)}, W^{(0)},b^{(0)}$. It suffices to bound $\|f_W-g_W\|_\infty$, where $\|\cdot\|_{\infty}$ is defined in \ref{def:infty_norm}.
From now on we work with this $W$ and we write $f,g$ for $f_W$, $g_W$. Also, let $\Delta W_r=W_r-W_r^{(0)}$,  \ $\mathbb{I}_{x,r}^{(0)}= \ind\{\langle W_r^{(0)} , x \rangle+b_r^{(0)}\geq 0 \}$ and $\mathbb{I}_{x,r}= \ind\{\langle W_r^{(0)}+\Delta W_r , x \rangle+b_r^{(0)}\geq 0 \}$. So, we write 
\begin{align*}
& f(x)=\sum_{r=1}^m a_r^{(0)} \left(\langle W_r^{(0)}+\Delta W_r , x \rangle+b_r^{(0)}\right)\mathbb{I}_{x,r} \\
& g(x)=\sum_{r=1}^m a_r^{(0)} \langle \Delta W_r , x \rangle \mathbb{I}_{x,r}^{(0)}
\end{align*}

We prove an elementary anti-concentration property of the Gaussian distribution.

\begin{claim}\label{cl:anti-concentration}
    Let $u\sim \mathcal{N}(0, I_d)$ and $\beta \sim \mathcal{N}(0,1)$, which are independent. For all $x\in \mathcal{X}$ and $t\geq0$,
    \begin{align*}
    \Pr[|\langle u, x \rangle + \beta | \leq t  ] = O( t ) .
    \end{align*}
\end{claim}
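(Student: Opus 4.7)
The plan is to reduce the event to a one-dimensional Gaussian anti-concentration statement. Since $x \in \mathcal{X}$ has $\|x\|_2 = 1$, rotational invariance of the isotropic Gaussian gives $\langle u, x\rangle \sim \mathcal{N}(0,1)$. Because $\beta \sim \mathcal{N}(0,1)$ is independent of $u$, the sum $\langle u,x\rangle + \beta$ is a sum of two independent standard Gaussians, hence distributed as $\mathcal{N}(0,2)$.

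Once the distribution is identified, the claim follows from the standard fact that for any univariate Gaussian $\mathcal{N}(0,\sigma^2)$, the probability of landing in an interval of length $2t$ is at most $2t$ times the maximum density $\frac{1}{\sigma\sqrt{2\pi}}$. Applying this with $\sigma^2 = 2$ yields
\[
\Pr\bigl[\,|\langle u,x\rangle + \beta|\leq t\,\bigr] \;\leq\; \int_{-t}^{t}\frac{1}{2\sqrt{\pi}}\,e^{-s^2/4}\,\d s \;\leq\; \frac{t}{\sqrt{\pi}} \;=\; O(t),
\]
which is exactly the desired bound, uniform over $x \in \mathcal{X}$.

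There is no real obstacle in this argument; the claim is a one-line consequence of the Gaussian density being bounded. The only thing worth flagging is that the bound depends on $\|x\|_2 = 1$ (so that the resulting Gaussian has a fixed variance bounded away from $0$); this is guaranteed by the definition of $\mathcal{X}$, and in fact the padded coordinate $x_d = 1/2$ plays no role here.
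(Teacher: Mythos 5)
Your proof is correct and takes essentially the same route as the paper: both identify $\langle u,x\rangle + \beta \sim \mathcal{N}(0,2)$ using $\|x\|_2=1$ and independence, then bound the probability of an interval of length $2t$ by the maximum density. (Incidentally, your constant $1/(2\sqrt{\pi})$ for the $\mathcal{N}(0,2)$ density is the correct one; the paper's integrand drops a factor of $1/\sqrt{2}$, though this is harmless for the $O(t)$ conclusion.)
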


\begin{proof}
We fix $x$ and $t$ and we have that $\langle u, x \rangle + \beta  \sim \N(0, 2  )$. Moreover,
  \begin{align*}
         \Pr_{z \sim \N(0, 2 )} \Big[ | z | \leq t \Big] 
    ~= \int_{-t}^{t}\frac{1}{\sqrt{2\pi}}e^{ -z^2/4} \d z \leq  ~ \sqrt{\frac{2}{\pi}}t
    \end{align*}
  
\end{proof}
 For all $x\in \mathcal{X}$, $r\in[m]$ and $t\in\mathbb{R}_+$, we define 
\begin{align*}
    \Lambda_r(x,t):=\ind \left\{|\langle W_r^{(0)},x  \rangle + b_r^{(0)}  |\leq t \right\}.
\end{align*}
and observe that from Claim \ref{cl:anti-concentration}, after scaling by $\sqrt{m}$, we have that $\Pr[ \Lambda_r( x , t ) = 1 ] \leq O( t \sqrt{m} )$.


We will prove that for every fixed $x\in \mathcal{X}$, with high probability, $|f(x)-g(x)|$ is small.

\begin{lemma}
For all $x\in \mathcal{X}$, with probability at least $1-\exp(-\Omega(m^{1/3}))$, 
\begin{align}\label{lem:real_net_approx_pseudo_fixed_x}
   | f(x) - g(x) | \leq O( R^2 / m^{1/6} )
    \end{align}
\end{lemma}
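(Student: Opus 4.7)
The plan is to decompose the difference $f(x)-g(x)$ into two pieces that can be controlled separately with high probability. Write
\begin{align*}
f(x)-g(x) = f_{W^{(0)}}(x) + E(x),\quad E(x):=\sum_{r=1}^m a_r^{(0)}\bigl(\langle W_r,x\rangle+b_r^{(0)}\bigr)\bigl(\mathbb{I}_{x,r}-\mathbb{I}_{x,r}^{(0)}\bigr),
\end{align*}
where the identity follows by direct expansion, using that the non-flipping coordinates exactly reproduce the linearization $\langle\Delta W_r,x\rangle \mathbb{I}_{x,r}^{(0)}$. I will show that each summand on the right-hand side is at most $O(R^2/m^{1/6})$ with probability $1-\exp(-\Omega(m^{1/3}))$.

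For $f_{W^{(0)}}(x)$, condition on $W^{(0)},b^{(0)}$. Then $f_{W^{(0)}}(x)=\sum_r a_r^{(0)}\sigma(\langle W_r^{(0)},x\rangle+b_r^{(0)})$ is a zero-mean sum of independent bounded terms (since $a_r^{(0)}\in\{\pm m^{-1/3}\}$), whose sub-Gaussian norm squared is $m^{-2/3}\sum_r \sigma(\langle W_r^{(0)},x\rangle+b_r^{(0)})^2$. A standard Gaussian tail bound on the inner products shows this sum is $O(1)$ with probability $1-\exp(-\Omega(m))$. A Hoeffding bound then yields $|f_{W^{(0)}}(x)|\le O(m^{-1/6})$ with probability $1-\exp(-\Omega(m^{1/3}))$, which is absorbed into $O(R^2/m^{1/6})$ because $R\ge 1$.

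For $E(x)$, the key observation is that the indicator difference $\mathbb{I}_{x,r}-\mathbb{I}_{x,r}^{(0)}$ is nonzero only when the sign of $\langle W_r^{(0)},x\rangle+b_r^{(0)}$ differs from that of $\langle W_r,x\rangle+b_r^{(0)}$. In that case both quantities have absolute value at most $|\langle\Delta W_r,x\rangle|\le \|\Delta W_r\|_2\le R/m^{2/3}$, and moreover $\Lambda_r(x,R/m^{2/3})=1$. Hence
\begin{align*}
|E(x)| \;\le\; \sum_{r=1}^m |a_r^{(0)}|\cdot \tfrac{R}{m^{2/3}}\cdot \Lambda_r(x,R/m^{2/3}) \;=\;\tfrac{R}{m}\sum_{r=1}^m \Lambda_r(x,R/m^{2/3}).
\end{align*}
Crucially, the bound on the right depends only on $a^{(0)},W^{(0)},b^{(0)}$, so we need not worry that $\Delta W$ is arbitrarily correlated with initialization.

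Finally I bound the Bernoulli sum by Chernoff. By Claim~7.1 (after the standard $\sqrt{m}$ rescaling for the $\mathcal N(0,1/m)$ variance), each $\Lambda_r(x,R/m^{2/3})$ has mean at most $O(R/m^{1/6})$, so the sum has mean $\mu=O(Rm^{5/6})$. A multiplicative Chernoff bound gives $\sum_r \Lambda_r(x,R/m^{2/3})\le 2\mu=O(Rm^{5/6})$ with probability $1-\exp(-\Omega(\mu))\ge 1-\exp(-\Omega(m^{1/3}))$ (using $R\ge 1$ and $m^{5/6}\ge m^{1/3}$). Plugging in, $|E(x)|\le (R/m)\cdot O(Rm^{5/6})=O(R^2/m^{1/6})$, and combining with the bound on $|f_{W^{(0)}}(x)|$ completes the proof. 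The main subtlety is the one already highlighted: handling the worst-case dependence of $\Delta W$ on initialization, which is resolved by passing from the flip indicator to the initialization-only indicator $\Lambda_r(x,R/m^{2/3})$ before applying Chernoff.
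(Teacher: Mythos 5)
Your proof is correct, and it uses the same probabilistic ingredients as the paper (Gaussian anti-concentration for the pre-activation at initialization, a Chernoff bound over the flipping coordinates, and a Hoeffding bound on $f_{W^{(0)}}$), but with a slightly different and cleaner decomposition. The paper splits $f=f_1+f_2+f_3$ with $f_2=f_{W^{(0)}}$, and then bounds $|f_1-g|$ and $|f_3|$ by two separate claims, each of which controls a sum over flipping indices weighted by either $|\langle\Delta W_r,x\rangle|$ or $|\langle W_r^{(0)},x\rangle+b_r^{(0)}|$. You merge these two terms into $E(x)=\sum_r a_r^{(0)}(\langle W_r,x\rangle+b_r^{(0)})(\mathbb{I}_{x,r}-\mathbb{I}_{x,r}^{(0)})$ and observe directly that at a flipping coordinate both $|\langle W_r^{(0)},x\rangle+b_r^{(0)}|$ and $|\langle W_r,x\rangle+b_r^{(0)}|$ are at most $|\langle\Delta W_r,x\rangle|\le R/m^{2/3}$ (since they straddle zero and differ by $\langle\Delta W_r,x\rangle$), so a single passage to the initialization-only indicator $\Lambda_r(x,R/m^{2/3})$ suffices. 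This collapses the paper's Claims 7.5 and 7.7 into one, and you correctly flag the key subtlety that the resulting upper bound on $|E(x)|$ depends only on the initialization, not on $\Delta W$, so the Chernoff bound applies uniformly over all admissible $\Delta W$. The bound on $f_{W^{(0)}}$ is identical to the paper's Claim 7.6. Both routes give the same $O(R^2/m^{1/6})$ bound and $1-\exp(-\Omega(m^{1/3}))$ probability; yours is mildly more economical.
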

\begin{proof}

     Let $A_r:=\ind\{\mathbb{I}_{x,r}\neq \mathbb{I}_{x,r}^{(0)}\}$. We bound the size of $\sum_{r=1}^m A_r$ with the following claim.

     \begin{claim}\label{claim:A_r}
         For all $x\in \mathcal{X}$ ,with probability at least $1-\exp\left(-\Omega\left(m^{5/6}\right)\right)$,
        \begin{align*} 
            \sum_{r=1}^m A_r \leq O( R \cdot m^{5/6} ) 
        \end{align*}
    \end{claim}
    \begin{proof}
We fix an $x\in \mathcal{X}$. Since $\|x\|_2=1$ and $\|\Delta W\|_{2,\infty}\leq R / m^{2/3} $, we have that 
    \begin{align*}
    A_r \leq \ind \left\{|\langle W_r^{(0)} , x \rangle+b_r^{(0)}| \leq \|\Delta W_r\|_2 \right\} \leq \Lambda_r (x, R/ m^{2/3} ).
    \end{align*}
    But, as we mentioned previously, gaussian anti-concentration implies that 
    \begin{align*}
        \Pr\left[\Lambda_r(x, R / m^{2/3} )=1 \right] \leq O( R / m^{1/6} )
    \end{align*} Since for our fixed $x$, these are $[m]$ independent Bernoulli random variables, standard concentration implies that with probability at least $1-\exp(-\Omega(m^{5/6}))$,
    \begin{align*}
    \sum_{r=1}^m\Lambda_r (x, R /  m^{2/3} )\leq O(Rm^{5/6} ).
    \end{align*}
    The fact that $\sum_{r=1}^m A_r\leq \sum_{r=1}^m\Lambda_r(x, R /  m^{2/3} )$ finishes the proof of the claim.
    \end{proof}
    We decompose $f$, using the following three functions
    \begin{definition}
    We define $f_1, f_2, f_3$ as follows:
    \begin{align*}
    f_1(x) := & ~ \sum_{r=1}^m a_r^{(0)} \langle \Delta W_r, x\rangle \mathbb{I}_{x,r} \\
    f_2(x) := & ~ \sum_{r=1}^m a_r^{(0)} (\langle  W_r^{(0)}, x\rangle +b_r^{(0)}) \mathbb{I}_{x,r}^{(0)} \\
    f_3(x) := & ~ \sum_{r=1}^m a_r^{(0)}(\langle  W_r^{(0)}, x\rangle +b_r^{(0)})(  \mathbb{I}_{x,r}-\ \mathbb{I}_{x,r}^{(0)} )
    \end{align*}
    \end{definition}
 
    It is easy to see that $f(x) =  f_1(x) + f_2(x) + f_3(x)$. We proceed by showing that $|f_1(x) - g(x)|,|f_2(x)|$ and $|f_3(x)|$ are all small.
    
    
    
    \begin{claim}\label{cl:bound_f_1}
    With probability at least $1-\exp(-\Omega(m^{5/6}))$, 
     \begin{align*}
    |f_1(x)-g(x)| \leq O( R^2 / m^{1/6} ) 
    \end{align*}
    \end{claim}
    \begin{proof}
     From the definition of $A_r$ we have that $|\mathbb{I}_{x,r}-\mathbb{I}_{x,r}^{(0)}|\leq A_r$. 

    \begin{align*}
         |f_1(x)-g(x)|
         = & ~ \left|\sum_{r=1}^m a_r \langle \Delta W_r, x\rangle (\mathbb{I}_{x,r}-\mathbb{I}_{x,r}^{(0)})\right| \\
         \leq & ~ \sum_{r=1}^m |a_r| \cdot 
        |\langle \Delta W_r, x\rangle | \cdot A_r \\
        \leq & ~ \frac{R}{m}\sum_{r=1}^m A_r 
    \end{align*}
    The last step follows from $\| \Delta W \|_{2,\infty}\leq \frac{R}{m^{2/3}}$, $a_r\sim \{ \pm \frac{1}{m^{1/3}}\}$. From Claim \ref{claim:A_r}, with probability at least $1-\exp(-\Omega(m^{5/6}))$, \begin{align*}
    |f_1(x)-g(x)|\leq O( R^2 / m^{1/6} )
    \end{align*}

    \end{proof}
    
    \begin{claim}\label{cl:bound_f_2}
    With probability at least $1-\exp(-\Omega(m^{1/3}))$,
    \begin{align*}
    |f_2(x)|\leq O( 1 / m^{1/6} )
    \end{align*}
    \end{claim}
    \begin{proof}
    From the definition of $f_2$,
    \begin{align*}
    f_2(x)=\sum_{r=1}^ma_r^{(0)} \sigma(\langle W_r^{(0)} , x\rangle +b_r^{(0)}).
    \end{align*}
    By definition of the ReLU function $\sigma(\cdot)$, 
    \begin{align*}
    \sum_{r=1}^m \sigma^2(\langle W_r^{(0)} , x\rangle +b_r^{(0)}) \leq \sum_{r=1}^m (\langle W_r^{(0)} , x\rangle +b_r^{(0)})^2
    \end{align*}
    Also, note that for $r\in[m]$, $\langle W_r^{(0)} , x\rangle +b_r^{(0)}\sim \N(0,2/m)$ and independent. From concentration of the sum of independent Chi-Square random variables, we have that with probability at least $1-\exp(-\Omega(m))$, 
     \begin{align}
    \sum_{r=1}^m \sigma^2(\langle W_r^{(0)} , x\rangle +b_r^{(0)})\leq & ~ \sum_{r=1}^m (\langle W_r^{(0)} , x\rangle +b_r^{(0)})^2 \\
    = & ~ O(1) 
    \end{align}
    Now, because of independence, using Hoeffding's concentration inequality, for some large constant $c>0$, 
    \begin{align*}
    &\Pr \left[\left|\sum_{r=1}^ma_r^{(0)} \sigma(\langle W_r^{(0)} , x\rangle +b_r^{(0)}) \right|\geq \frac{c }{m^{1/6}}\ \ \Bigg{|} \ W^{(0)},b^{(0)} \right]\\
    & \leq \exp \left(-\Omega\left(\frac{m^{-1/3}}{\frac{1}{m^{2/3}}\sum_{r=1}^m\sigma^2 \left( \langle W_r^{(0)} , x\rangle +b_r^{(0)} \right) }\right) \right)
    \end{align*}
    and using the previous bound we get that overall, with probability at least $1-\exp(-\Omega(m^{1/3}))$, 
    \begin{align*}
    |f_2(x)|\leq O( 1 / m^{1/6} )
    \end{align*}
    \end{proof}

    \begin{claim}\label{cla:bound_f_3}
    With probability at least $1 - \exp( -\Omega( m^{ 5/6 } ) )$, 
    \begin{align*} 
    |f_3(x)| \leq O ( R^2 / m^{ 1/6 } )
    \end{align*}
    \end{claim}
    \begin{proof}
    \begin{align*}
    |f_3(x)|= & ~ \Big| \sum_{r=1}^m a_r^{(0)}(\langle  W_r^{(0)}, x\rangle +b_r^{(0)})(  \mathbb{I}_{x,r}-\ \mathbb{I}_{x,r}^{(0)} ) \Big|\\
    \leq & ~ \sum_{r=1}^m |a_r^{(0)}| \left|\langle  W_r^{(0)}, x\rangle +b_r^{(0)} \right| \left|  \mathbb{I}_{x,r}-\ \mathbb{I}_{x,r}^{(0)} \right|\\
    \leq & ~ \frac{1}{m^{1/3}} \sum_{r=1}^m \left|\langle  W_r^{(0)}, x\rangle +b_r^{(0)} \right| \left|  \mathbb{I}_{x,r}-\ \mathbb{I}_{x,r}^{(0)}  \right|
    \end{align*}

    We use that 
    \begin{align*}
    \left|  \mathbb{I}_{x,r}-\ \mathbb{I}_{x,r}^{(0)} \right|\leq A_r\leq \Lambda_r( x , R / m^{2/3} ).
    \end{align*}
    Now, remember that $\Lambda_r( x , R / m^{2/3} )\neq 0\iff \left|\langle  W_r^{(0)}, x\rangle +b_r^{(0)} \right| \leq R / m^{2/3} $, so
    \begin{align*}
    \left|\langle  W_r^{(0)}, x\rangle +b_r^{(0)} \right| \left|  \mathbb{I}_{x,r}-\ \mathbb{I}_{x,r}^{(0)}  \right|
    \leq  ~ \left|\langle  W_r^{(0)}, x\rangle +b_r^{(0)} \right| \Lambda_r(x, R / m^{2/3}) \leq  ~ \frac{R}{m^{2/3}}\Lambda_r(x,R/m^{2/3})
    \end{align*}
    Thus, 
    \begin{align*}
    |f_3(x)|\leq \frac{R}{m}\sum_{r=1}^m\Lambda_r(x, {R} / {m^{2/3}}).
    \end{align*}
    But, as we previously showed, with probability at least $1-\exp(-\Omega(m^{5/6}))$, 
    \begin{align*}
    \sum_{r=1}^m\Lambda_r(x, {R} / {m^{2/3}})\leq O(Rm^{5/6}).
    \end{align*}
    Thus, with probability at least $1-\exp(-\Omega(m^{5/6}))$, 
    \begin{align*}
    |f_3(x)| = O ( R^2 / m^{1/6} ).
    \end{align*}

    \end{proof}

We are ready to finish the proof of the lemma \ref{lem:real_net_approx_pseudo_fixed_x}. Aggregating these three claims with a union bound, we have that for every $x\in \mathcal{X}$, with probability at least 
$
1 - \exp(-\Omega(m^{5/6}))-\exp(-\Omega(m^{1/3})) 
= 1 - \exp(-\Omega(m^{1/3})),
$
we have
\begin{align}\label{eq:guarantee_fixed_x}
   |f(x)-g(x)|\leq ~  |f_1(x)-g(x)|+|f_2(x)|+|f_3(x)| \leq  ~ O ( R^2 / m^{1/6} )
\end{align}
\end{proof}

What is left to do is to "union bound" over all ${\cal X}$. Of course, there is the problem that ${\cal X}$ is uncountable. So, we first do a union bound over a very fine-grained net of ${\cal X}$ and then argue about the change of $f$ and $g$ when we slightly change the input $x$.

Let ${\cal X}_1$ be a maximal $\frac{1}{m}$-net of ${\cal X}$. It is well-known that $|{\cal X}_1|\leq\left(\frac{1}{m}\right)^{O(d)}$. From lemma \ref{lem:real_net_approx_pseudo_fixed_x}, by applying a union bound over ${\cal X}_1$, we have that for $m\geq cd^{3}$, where $c$ is a large constant, with probability at least 
\begin{align*}
1 - \exp(O(d\log m)) \cdot \exp(-\Omega(m^{1/3}))=1-\exp(-\Omega(m^{1/3})),
\end{align*}
we have
\begin{align}\label{eq:net_guarantee}
  \forall x\in {\cal X}_1,~~~ |f(x)-g(x)|\leq O ( R^2 / m^{1/6} )
\end{align}

  The final step is the perturbation analysis. We show the following lemma, that applies for fixed inputs.
  
  \begin{lemma}\label{lem:perturbation_analysis_pseudonet}
    For all $x\in \mathcal{X}_1$, with probability at least $1-\exp( - \Omega ( m^{1/2} ) )$, for all $v\in \mathbb{R}^d$, such that $x+v\in {\cal X}$ and $\|v\|_2\leq \frac{1}{m}$, we have 
  \begin{align}\label{eq:perturb_f}
    | f( x + v ) - f( x ) | \leq O( 1/m^{1/3} + R/m )
  \end{align}
  and 
   \begin{align}\label{eq:perturb_g}
     | g( x + v ) - g( x ) | \leq O( R / m^{1/2} ).
  \end{align}
  \end{lemma}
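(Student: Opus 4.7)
The plan is to handle the two bounds separately, since $f$ is $1$-Lipschitz per ReLU, while $g$ has jumps across the hyperplanes $\{u:\langle W_r^{(0)},u\rangle+b_r^{(0)}=0\}$: the former yields to a smoothness argument, the latter requires a combinatorial count of sign flips that is \emph{uniform} over admissible $v$.

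For \eqref{eq:perturb_f}, using $|\sigma(a)-\sigma(b)|\le|a-b|$ inside each neuron gives
\begin{align*}
|f(x+v)-f(x)|\le\sum_{r=1}^m|a_r^{(0)}|\cdot|\langle W_r^{(0)}+\Delta W_r,v\rangle|\le\|v\|_2\cdot m^{-1/3}\Big(\sum_r\|W_r^{(0)}\|_2+\sum_r\|\Delta W_r\|_2\Big).
\end{align*}
The second sum is at most $Rm^{1/3}$ by $\|\Delta W\|_{2,\infty}\le R/m^{2/3}$. For the first, $m\sum_r\|W_r^{(0)}\|_2^2\sim\chi^2_{md}$ concentrates exponentially around $md$, so by Cauchy--Schwarz $\sum_r\|W_r^{(0)}\|_2\le\sqrt{m\sum_r\|W_r^{(0)}\|_2^2}=O(\sqrt{md})$ with probability $1-\exp(-\Omega(m))$. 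Plugging in $\|v\|_2\le 1/m$ yields $|f(x+v)-f(x)|\le O(\sqrt d/m^{5/6}+R/m)$, which is $O(1/m^{1/3}+R/m)$ whenever $d\le m$, in particular under $m\ge\poly(d)$.

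For \eqref{eq:perturb_g}, I would decompose
\begin{align*}
g(x+v)-g(x)=\underbrace{\sum_r a_r^{(0)}\langle\Delta W_r,v\rangle\,\mathbb{I}^{(0)}_{x+v,r}}_{T_1}+\underbrace{\sum_r a_r^{(0)}\langle\Delta W_r,x\rangle(\mathbb{I}^{(0)}_{x+v,r}-\mathbb{I}^{(0)}_{x,r})}_{T_2}.
\end{align*}
Termwise, $|T_1|\le m\cdot m^{-1/3}\cdot(R/m^{2/3})\cdot(1/m)=R/m$, already within budget. For $T_2$, note that a flip at neuron $r$ forces $|\langle W_r^{(0)},x\rangle+b_r^{(0)}|\le\|W_r^{(0)}\|_2\cdot\|v\|_2$. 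To decouple from $v$, I would first establish a uniform bound $\max_r\|W_r^{(0)}\|_2\le B=O(1)$ via chi-square concentration for each $r$ together with a union bound over $r\in[m]$; under $m\ge\poly(d)$ this holds with probability $1-\exp(-\Omega(m))$. Every admissible $v$ then makes the flip condition fall into $\{|\langle W_r^{(0)},x\rangle+b_r^{(0)}|\le B/m\}$, which the scaled version of Claim~\ref{cl:anti-concentration} (the $\sqrt m$ rescaling accounting for entries being $\mathcal N(0,1/m)$) bounds by $O(B/\sqrt m)$. Since these events are independent across $r$ for fixed $x$, a Chernoff bound gives that the total number of flips is $O(\sqrt m)$ with probability $1-\exp(-\Omega(\sqrt m))$. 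Each flipped term contributes at most $m^{-1/3}\cdot(R/m^{2/3})\cdot\|x\|_2=R/m$, so $|T_2|\le(R/m)\cdot O(\sqrt m)=O(R/\sqrt m)$, which dominates $T_1$ and yields \eqref{eq:perturb_g}.

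The main difficulty is that $g$ is not Lipschitz, so $T_2$ cannot be bounded by any smoothness argument and must instead come from a count of sign-flipping neurons. What makes this count tractable uniformly over $v$ is the reduction of the $v$-dependent threshold $\|W_r^{(0)}\|_2\cdot\|v\|_2$ to the single $v$-free threshold $B/m$, achieved by first controlling $\max_r\|W_r^{(0)}\|_2$; once that is done, the computation is a direct analogue of Claim~\ref{claim:A_r}, and the probability $1-\exp(-\Omega(m^{1/2}))$ stated in the lemma is exactly the worse of the two underlying event probabilities (the uniform norm bound on $B$ and the Chernoff tail on flips).
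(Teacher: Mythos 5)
Your proposal is correct and takes essentially the same route as the paper: bound $|f(x+v)-f(x)|$ via the per-neuron Lipschitzness of $\sigma$ together with a high-probability bound on $\|W^{(0)}\|_{2,\infty}$, and bound $|g(x+v)-g(x)|$ by splitting into the small-$v$ term and the sign-flip term, reducing the flip condition to the $v$-free event $\Lambda_r(x,O(1/m))$ via the uniform norm bound, and then applying anti-concentration plus Chernoff to count flips. The only cosmetic difference is that for $f$ you invoke Cauchy--Schwarz on $\sum_r\|W_r^{(0)}\|_2$ (giving the slightly sharper $O(\sqrt d/m^{5/6})$) whereas the paper just uses $\|W^{(0)}\|_{2,\infty}\le O(1)$ directly; both require $m\ge\poly(d)$ and yield the stated bound.
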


  With this lemma at hand, we can do a union bound over $\mathcal{X}_1$ and conclude that with probability at least $1-\exp(O(d\log m))\exp\left(-\Omega\left(m^{1/2}\right)\right)=1-\exp\left(-\Omega\left(m^{1/2}\right)\right)$ (since $m\geq c d^3$ and $c$ is a large constant), we have that for all $x\in \mathcal{X}_1$ and $v\in \mathbb{R}^d$, such that $x+v\in {\cal X}$ and $\|v\|_2\leq \frac{1}{m}$, the perturbation guarantees \ref{eq:perturb_f} and \ref{eq:perturb_g} hold. Combining this with \ref{eq:net_guarantee} and applying a union bound, we have that with probability at least $1-\exp(-\Omega(m^{1/3}))-\exp\left(-\Omega\left(m^{1/2}\right)\right)=1-\exp(-\Omega(m^{1/3}))$, 
  \begin{align*}
 \|f-g\|_\infty\leq O (R^2/m^{1/6}+1/m^{1/3}+R/m+R / m^{1/2} )=O(R^2/m^{1/6})
  \end{align*}
  and this concludes the proof of theorem \ref{thm:real_approximates_pseudo}.
  \end{proof}

  It remains to prove the Lemma \ref{lem:perturbation_analysis_pseudonet}.

  Let $v$ be a small perturbation of $x$ with the properties stated in the lemma, that can depend arbitrarily on $a^{(0)},W^{(0)},b^{(0)}$.
  \begin{align*}
    |f(x+v)-f(x)|=
    & ~ \Big|\sum_{r=1}^m a_r^{(0)} \Big(\sigma \left(\langle W_r^{(0)}+\Delta W_r , x+v \rangle+b_r^{(0)}\right) -\sigma \left(\langle W_r^{(0)}+\Delta W_r , x \rangle+b_r^{(0)}\right)\Big)\Big| \\
     \leq & ~ \sum_{r=1}^m |a_r^{(0)}| \left |\langle W_r^{(0)}+\Delta W_r , v \rangle \right |\\
    \leq & ~ \frac{1}{m}\sum_{r=1}^m |a_r^{(0)}| \|W_r^{(0)}+\Delta W_r\|_2 \\
    = & ~ \frac{1}{m^{1+1/3}}\sum_{r=1}^m  \|W_r^{(0)}+\Delta W_r\|_2 \\
    \leq & ~ \frac{1}{m^{4/3}}\sum_{r=1}^m  \|W_r^{(0)}\|_2 + \frac{1}{m^{4/3}}\sum_{r=1}^m \|\Delta W_r\|_2\ \\
    \leq & ~ \frac{1}{m^{4/3}}\sum_{r=1}^m  \|W_r^{(0)}\|_2 + \frac{R}{m} 
   \end{align*}
   
   We show the following claim, which concludes the proof of \ref{eq:perturb_f}.
   
   \begin{claim}\label{cl:bound_W_0}
     With probability at least $1-\exp(-\Omega( m))$, $\|W^{(0)}\|_{2,\infty}\leq O(1)$.
   \end{claim}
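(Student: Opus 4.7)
}
The plan is to control each column of $W^{(0)}$ individually via chi-square concentration and then union bound over the $m$ columns. Each column $W^{(0)}_r \in \R^d$ has iid $\mathcal{N}(0, 1/m)$ entries, so $m\|W^{(0)}_r\|_2^2 = \sum_{i=1}^d Z_{r,i}^2$ where the $Z_{r,i} \sim \mathcal{N}(0,1)$ are iid. Thus $m\|W^{(0)}_r\|_2^2$ is distributed as $\chi^2_d$, which has mean $d$ and is sub-exponential. I would invoke the standard Laurent--Massart tail bound $\Pr[\chi^2_d \geq d + 2\sqrt{dt} + 2t] \leq e^{-t}$ with $t = cm$ for a sufficiently small constant $c>0$. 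Since we are in the regime $m \geq \poly(d)$ (in particular $m \geq d$ after taking the polynomial to be large enough, which is what the outer proof already enforces), the quantity $d + 2\sqrt{dt} + 2t$ is at most $Cm$ for some absolute constant $C$, which yields
\begin{align*}
\Pr\!\left[\, \|W^{(0)}_r\|_2^2 > C \,\right] \;=\; \Pr\!\left[\, m\|W^{(0)}_r\|_2^2 > Cm \,\right] \;\leq\; \exp(-\Omega(m)).
\end{align*}

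A union bound over $r \in [m]$ then gives $\Pr[\max_{r\in[m]} \|W^{(0)}_r\|_2 > \sqrt{C}] \leq m\cdot \exp(-\Omega(m)) = \exp(-\Omega(m))$, which is exactly the desired statement $\|W^{(0)}\|_{2,\infty} \leq O(1)$ with failure probability $\exp(-\Omega(m))$. There is no real obstacle here: the claim is a routine Gaussian-matrix concentration fact, and the only thing to be careful about is that we have enough slack between the mean $d/m$ of $\|W^{(0)}_r\|_2^2$ and the target constant so that the $\chi^2$ deviation we require is of order $m$ (giving the $\exp(-\Omega(m))$ rate rather than a weaker rate), which is automatic under the hypothesis $m \geq \poly(d)$ used throughout Theorem \ref{thm:real_approximates_pseudo}.
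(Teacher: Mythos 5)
Your proposal is correct and takes essentially the same route as the paper: chi-square concentration on each column $W^{(0)}_r$ followed by a union bound over $r\in[m]$. One minor remark: the paper asserts a per-column failure probability of $\exp(-\Omega(m^2/d))$, which is actually stronger than what the $\chi^2_d$ tail permits in the relevant regime $m\geq d$ (the sub-exponential tail caps the rate at $\exp(-\Omega(m))$, as you correctly derive via Laurent--Massart); fortunately both per-column rates yield the same $\exp(-\Omega(m))$ after the union bound, so the stated claim is unaffected.
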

   \begin{proof}
   From concentration of sum of independent Chi-Square random variables, we have that for all $r$, with probability at least $1-\exp(-\Omega( m^2/d))$, $\|W_r^{(0)}\|_2^2\leq O(1)$. Since $m\geq d$, a union bound over all $r$ finishes the proof of the claim.
   \end{proof}
   
   We now argue about g.

   \begin{align*}
    &~|g(x+v)-g(x)| \\
    &= ~\Bigg|\sum_{r=1}^m a_r^{(0)} \langle \Delta W_r , x+v  \rangle \ind\{\langle W_r^{(0)} , x+v \rangle+b_r^{(0)}\geq 0 \} -\sum_{r=1}^m a_r^{(0)} \langle \Delta W_r , x  \rangle \ind\{\langle W_r^{(0)} , x \rangle+b_r^{(0)}\geq 0\} \Bigg| \\
    &\leq \frac{1}{m}\sum_{r=1}^m |a_r^{(0)}| \|\Delta W_r\|_2 +\sum_{r=1}^m  |a_r^{(0)}| \ |\langle \Delta W_r ,x \rangle |\  \big|\ind\{\langle W_r^{(0)} , x+v \rangle+b_r^{(0)}\geq 0\} - \ind\{\langle W_r^{(0)} , x \rangle+b_r^{(0)}\geq 0\}\big| \\
    &\leq\frac{R}{m}+\frac{R}{m}\sum_{r=1}^m    \Big|\ind\{\langle W_r^{(0)} , x+v \rangle+b_r^{(0)}\geq 0\} - \ind\{\langle W_r^{(0)} , x \rangle+b_r^{(0)}\geq 0\} \Big| . 
   \end{align*} 
 About the last sum, from Claim \ref{cl:bound_W_0}, 
    $\|W^{(0)}\|_{2,\infty}\leq O(1)$ and in this case,
\begin{align*}
 \sum_{r=1}^m \Big|\ind \left\{\langle W_r^{(0)} , x+v \rangle+b_r^{(0)}\geq 0\} - \ind\{\langle W_r^{(0)} , x \rangle+b_r^{(0)}\geq 0\right\} \Big|  \leq \sum_{r=1}^m \Lambda_r  \left(x,O(1/m)\right)
\end{align*} 
   From Claim \ref{cl:anti-concentration}, we have that  $\Lambda_r(x,O(1/m))=1$ with probability at most $O\left(\frac{1}{m^{1/2}}\right)$. Since $x$ is fixed, these are $m$ independent Bernoulli random variables and from standard concentration, with probability at least $1-\exp(-\Omega(\sqrt{m}))$, 
   \begin{align*}
       \sum_{r=1}^m \Lambda_r( x , O( 1 / m ) ) \leq O(\sqrt{m}).
   \end{align*}
   This finishes the proof of \ref{eq:perturb_g}.

\subsection{Proof of Theorem~\ref{thm:convergence}}
\label{prf:converge}

\begin{proof}
We will give the values of $T$ and $\eta$, later in the proof. For simplicity, we use the following shorthand notations to denote various distances.
\begin{itemize}
    \item[] $D_{\text{max}}:= \max_{ t \in [T] }\|W^{(t)} - W^{(0)}\|_{2,\infty}$
    \item[] $D_{W^{*}}:= \|W^{*} - W^{(0)}\|_{2,\infty}$
\end{itemize}
By condition, we know $D_{W^*}=O\left(\frac{R}{m^{2/3}}\right)$.

Even though in Algorithm~\ref{alg:adv-train} the parameters $W$ are updated using the gradients of the \emph{real net}, in this proof we consider the pseudo-net as the object being optimized. Thus we need to relate the real net gradients to the pseudo-net gradients. For ease of presentation, we define the following convenient notations for the two notions of gradients:
\begin{itemize}
    \item[] real net gradient $\nabla^{(t)}:=\nabla_W\mathcal{L}(f(W^{(t)}), S^{(t)})$ 
    \item[] pseudo-net gradient $\hat{\nabla}^{(t)}:=\nabla_W\mathcal{L}(g(W^{(t)}), S^{(t)})$
\end{itemize}
We write both gradients as matrices in $\mathbb{R}^{d\times m}$
In fact, by Lemma~\ref{lemma:coupling}, we know that they are coupled with high probability, as long as $W^{(t)}$ stays close to initialization (i.e., $D_{\max}\leq m^{-15/24}$).
\begin{align*}
      \| \hat{\nabla}^{(t)} - \nabla^{(t)} \|_{2,1} \leq O\left(nm^{13/24}\right)
\end{align*}

\begin{remark}
We assume for now $D_{\max}\leq m^{-15/24}$ is true and in the end we will set proper values for $T, \eta$ and $m$ to make sure this is indeed the case.
\end{remark}
Using the fact that the loss is $1$-Lipschitz, we bound the gradient size:
\begin{align}\label{ineq:gradient-size}
    \|\nabla^{(t)}_r\|_2 \leq |a_r| \left(\frac{1}{n}\sum_{i=1}^n \sigma'\left(\inner{W^{(t)}_r}{x_i}+b_r^{(0)}\right)\|\tilde{x}_i\|_2\right) \leq \frac{1}{m^{1/3}}
\end{align}

Due to the linearity of $g$ with respect to $W$, the loss $\mathcal{L}(g(W), S)$ is convex in $W$. For two matrices $A,B$ with the same dimensions, we write their inner product as $\inner{A,B}:=\mathrm{tr}(A^TB)$.
\begin{align*}
    & ~ \mathcal{L}(g(W^{(t)}), S^{(t)}) - \mathcal{L}(g(W^*), S^{(t)}) \\
    \leq & ~\inner{\nabla^{(t)}}{W^{(t)} - W^*} + \inner{\hat{\nabla}^{(t)} - \nabla^{(t)}}{W^{(t)} - W^*} \\
    \leq & ~ \underbrace{\inner{\nabla^{(t)}}{W^{(t)} - W^*}}_{:=\text{\textalpha}^{(t)}}  + \underbrace{\|\hat{\nabla}^{(t)} - \nabla^{(t)}\|_{2,1}\|W^{(t)} - W^* \|_{2,\infty}}_{:=\text{\textbeta}^{(t)}}
\end{align*}
We deal with $\textalpha^{(t)}$~and~$\textbeta^{(t)}$~terms separately. As for the former, we use the standard online gradient descent proof technique:
\begin{align*}
\| W^{(t+1)} - W^* \|_F^2=\| W^{(t)}-\eta \nabla^{(t)}  - W^* \|_F^2 = \| W^{(t)} - W^* \|_F^2-2\eta \alpha^{(t)}+\eta^2  \|\nabla^{(t)}\|_{F}^2
\end{align*}
So, by rearranging we get
\begin{align*}
\text{\textalpha}^{(t)} &\leq \frac{\eta}{2} \| \nabla^{(t)} \|_F^2 + \frac{\| W^{(t)} - W^* \|_F^2 - \| W^{(t+1)} - W^* \|_F^2}{2\eta} 
\end{align*}
and then sum over $t$,
\begin{align*}
\sum_{t=1}^T \text{\textalpha}^{(t)} &\leq \frac{\eta}{2}\sum_{t=1}^T\|\nabla^{(t)}\|_F^2 + \frac{\| W^{(0)}-W^*  \|_F^2-\| W^{(T+1)}-W^* \|_F^2}{2\eta} \leq \frac{\eta m^{1/3}}{2}T + \frac{m D_{W^*}^2}{2\eta}
\end{align*}
where we used the fact $\|W^* -  W^{(0)}\|_F^2\leq m\cdot \|W^* -  W^{(0)}\|_{2,\infty}=m D_{W^*}^2$ 
as well as $\|\nabla^{(t)}\|_F^2\leq \sum_{r=1}^m \|\nabla_r^{(t)}\|_2^2\leq m^{1/3}$.

For the $\text{\textbeta}^{(t)}$'s, we first invoke Lemma~\ref{lemma:coupling} and then apply triangle inequality:
\begin{align*}
\text{\textbeta}^{(t)} \leq O\left(nm^{13/24}\right)\| W^{(t)} - W^* \|_{2,\infty} \leq O\left(nm^{13/24}\right) \left(D_{\text{max}} + D_{W^{*}}\right)
\end{align*}

Furthermore we can bound the size of $D_{\text{max}}$ using the bound on gradients, i.e. $\|\nabla^{(t)}_r\|_2\leq m^{-1/3}$ using inequality~\ref{ineq:gradient-size}.
\begin{align*}
    D_{\text{max}} &= \max_{ t \in [T]}\|W^{(0)} - W^{(t)}\|_{2,\infty} \leq \sum_{t=1}^T \eta \max_{r\in[m]}\|\nabla^{(t)}_r\|_2 \leq \frac{\eta  T}{m^{1/3}}
\end{align*}

Putting it together with the condition $D_{W*}=O\left(\frac{R}{m^{2/3}}\right)$ that we already have, we obtain the following:
\begin{align*}
    & ~ \sum_{t=1}^T  \mathcal{L}(g(W^{(t)}), S^{(t)}) - \sum_{t=1}^T \mathcal{L}(g(W^*), S^{(t)})\\ 
    \leq & \sum_{t=1}^T \text{\textalpha}^{(t)} + \sum_{t=1}^T \text{\textbeta}^{(t)} \\
    \leq &~ O(1)
    \left(    m^{1/3} \eta T +  \frac{R^2}{m^{1/3} \eta} +\eta T n m^{5/24} + \frac{\eta R T n}{m^{1/8}}\right)
\end{align*}
We then have
\begin{align*}
   \frac{1}{T} \sum_{t=1}^T \mathcal{L}(g(W^{(t)}), S^{(t)}) - \frac{1}{T} \sum_{t=1}^T \mathcal{L}(g(W^*), S^{(t)}) &\leq O(\varepsilon)
\end{align*}
if we set the hyper-parameters $T, m, \eta$ to be the following:
\begin{itemize}
    \item[] $T=\Theta( \epsilon^{-2} R^2 )$,
    \item[] $m\geq \Omega\left(\max\left\{ n^{8}, \left(\frac{Rn}{\epsilon}\right)^{24/11}, \left(\frac{R^2}{\epsilon}\right)^{24}\right\} \right)$,
    \item[] $\eta=\frac{R}{m^{1/3} \sqrt{T}}=\Theta(m^{-1/3} \epsilon )$  
\end{itemize}
Note the the requirement on $m$ is to satisfy $\eta T n m^{1/4} + \frac{\eta R T n}{m^{1/12}} \leq O(\epsilon)$, $D_{\max}\leq m^{-15/24}$ as well as to meet the condition for invoking Theorem~\ref{thm:real_approximates_pseudo}:
\begin{align*}
    \forall t\in[T], \sup_{x\in\mathcal{X}}\left| f_{W^{(t)}}(x) - g_{W^{(t)}}(x) \right| \leq O(\epsilon)
\end{align*}

Thus, we get
\begin{align*}
   \frac{1}{T} \sum_{t=1}^T \mathcal{L}(f_{W^{(t)}}, S^{(t)}) - \frac{1}{T} \sum_{t=1}^T \mathcal{L}(f_{W^*}, S^{(t)}) &\leq c \cdot\varepsilon
\end{align*}
where $c>0$ is a large constant.
Now, observe that $\mathcal{L}(f_{W^{(t)}}, S^{(t)})=\mathcal{L}_{\mathcal{A}}(f_{W^{(t)}})$ and $\mathcal{L}(f_{W^*}, S^{(t)})\leq \mathcal{L}_{\mathcal{A}^*}(f_{W^*}) $. The proof we presented holds for all $\epsilon>0$, so by using $\frac{\epsilon}{c}$ in place of $\epsilon$, we get the desired result.
\end{proof}

\subsection{Gradient coupling}
\begin{lemma} 
\label{lemma:coupling}
With probability at least $1-\exp(-\Omega(m^{1/3}))$, for all iterations $t$ that $\| W^{(t)} - W^{(0)}\|_{2,\infty}\leq O\left(m^{-15/24}\right)$, we have
\begin{align*}
  \| \hat{\nabla}^{(t)} - \nabla^{(t)} \|_{2,1} \leq O\left(  n m^{13/24} \right)
\end{align*}
\end{lemma}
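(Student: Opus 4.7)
The plan is to split the discrepancy between $\widehat{\nabla}^{(t)}$ and $\nabla^{(t)}$ into two sources---the gap between the loss derivatives $\ell'(f_{W^{(t)}}(\tilde{x}))$ and $\ell'(g_{W^{(t)}}(\tilde{x}))$, and the change in ReLU activation patterns between $W^{(0)}$ and $W^{(t)}$---bound each separately, and then pay a single uniform-over-$\mathcal{X}$ union bound to handle the adversarial input $\tilde{x}_i^{(t)}$. Setting $\psi_i := \ell'(g_{W^{(t)}}(\tilde{x}_i^{(t)}), y_i)$, $\phi_i := \ell'(f_{W^{(t)}}(\tilde{x}_i^{(t)}), y_i)$, and $\mathbb{I}^{(s)}_{r,i} := \ind\{\langle W_r^{(s)}, \tilde{x}_i^{(t)}\rangle + b_r^{(0)} \geq 0\}$ for $s \in \{0, t\}$, the per-neuron difference is $\frac{a_r^{(0)}}{n}\sum_i [\psi_i \mathbb{I}^{(0)}_{r,i} - \phi_i \mathbb{I}^{(t)}_{r,i}]\tilde{x}_i^{(t)}$. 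Using $\|\tilde{x}_i\|_2=1$, $|a_r^{(0)}|=m^{-1/3}$, $|\phi_i|,|\psi_i| \leq 1$ (from 1-Lipschitzness of $\ell$), and the elementary pointwise bound $|\psi_i\mathbb{I}^{(0)}_{r,i} - \phi_i\mathbb{I}^{(t)}_{r,i}| \leq |\psi_i - \phi_i| + |\mathbb{I}^{(0)}_{r,i} - \mathbb{I}^{(t)}_{r,i}|$, summation over $r$ yields
\begin{align*}
\|\widehat{\nabla}^{(t)} - \nabla^{(t)}\|_{2,1} \leq \frac{m^{2/3}}{n}\sum_i |\psi_i - \phi_i| + \frac{1}{n m^{1/3}}\sum_{r,i}|\mathbb{I}^{(0)}_{r,i} - \mathbb{I}^{(t)}_{r,i}|.
\end{align*}

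For the first sum, Theorem~\ref{thm:real_approximates_pseudo} delivers $\sup_{x\in\mathcal{X}}|f_{W^{(t)}}(x) - g_{W^{(t)}}(x)| \leq O(R^2/m^{1/6})$ at every iterate with $D_{\max}\leq m^{-15/24}$, and combined with smoothness of $\ell'$ (standard for the regression losses under consideration) this produces $|\psi_i - \phi_i| = O(R^2/m^{1/6})$, making the first contribution $O(R^2 m^{1/2})$, which is $\leq O(n m^{13/24})$ under the polynomial lower bound on $m$ from Theorem~\ref{thm:convergence}. For the second sum, a sign flip at $(r,i)$ forces $|\langle W_r^{(0)},\tilde{x}_i\rangle + b_r^{(0)}| \leq \|W^{(t)}_r - W^{(0)}_r\|_2 \leq m^{-15/24}$, i.e., $\Lambda_r(\tilde{x}_i, m^{-15/24})=1$. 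Gaussian anti-concentration gives $\Pr[\Lambda_r(\tilde{x}, m^{-15/24})=1] = O(\sqrt{m}\cdot m^{-15/24}) = O(m^{-1/8})$ at any fixed $\tilde{x}$, and Hoeffding across the $m$ independent neurons yields $\sum_r \Lambda_r(\tilde{x}, m^{-15/24}) \leq O(m^{7/8})$ with probability $\geq 1-\exp(-\Omega(m^{7/8}))$. I would then promote this to a bound uniform over $\tilde{x}\in\mathcal{X}$ by mimicking the endgame of Theorem~\ref{thm:real_approximates_pseudo}: union-bound over a $1/\poly(m)$-net of $\mathcal{X}$ of cardinality $\exp(O(d\log m))$ (absorbed by the sub-exponential probability as soon as $m\geq\poly(d)$), and extend to all of $\mathcal{X}$ by a perturbation step that enlarges the anti-concentration slab by a negligible additive term, using $\|W^{(0)}\|_{2,\infty}=O(1)$ w.h.p. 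This yields $\sum_{r,i}|\mathbb{I}^{(0)}_{r,i} - \mathbb{I}^{(t)}_{r,i}| \leq O(n m^{7/8})$ simultaneously over every $t$ with $D_{\max} \leq m^{-15/24}$, contributing $O(m^{13/24})$. Adding, $\|\widehat{\nabla}^{(t)} - \nabla^{(t)}\|_{2,1} \leq O(n m^{13/24})$, with failure probability dominated by the $\exp(-\Omega(m^{1/3}))$ factor inherited from Theorem~\ref{thm:real_approximates_pseudo}.

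The main obstacle is that $\tilde{x}_i^{(t)}$ is arbitrarily coupled with the initialization through all previous iterates $W^{(1)},\dots,W^{(t)}$ and through $\mathcal{A}$, so neither the pointwise anti-concentration bound on $\Lambda_r$ nor the pointwise approximation $|f-g|$ may be invoked at a fixed input. As in Theorem~\ref{thm:real_approximates_pseudo}, this is overcome by leveraging the sub-exponential-in-$m$ concentration available pointwise---strong enough to absorb the $\exp(O(d\log m))$ cardinality of a fine net of $\mathcal{X}$ while keeping the width requirement polynomial in $d$---together with the stability of ReLU activations under tiny perturbations.
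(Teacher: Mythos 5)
Your proposal uses a finer decomposition than the paper: you split $\hat{\nabla}^{(t)}-\nabla^{(t)}$ into a loss-derivative gap (the $\psi_i-\phi_i$ part) and an indicator-flip gap, whereas the paper's proof treats only the latter. For the indicator part your approach matches the paper's in substance — Gaussian anti-concentration, Chernoff, and a net argument (you net once over $\mathcal{X}$ and sum over $i$; the paper nets over $\otimes^n\mathcal{X}$; both are fine). The trouble is the loss-derivative part. First, you invoke "smoothness of $\ell'$," but the paper's "Lipschitz convex regression loss" only asks that $\ell$ be convex and $1$-Lipschitz in its first argument; the absolute loss $\ell(z,y)=|z-y|$ is admissible and has a discontinuous derivative, so the step $|\psi_i-\phi_i|=O\left(|f_{W^{(t)}}-g_{W^{(t)}}|\right)$ is not available. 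Second, even granting smoothness the numbers do not close: to make Theorem~\ref{thm:real_approximates_pseudo} cover the $m^{-15/24}$-ball you must take $R'=m^{1/24}$, giving $\sup_{x}|f_{W^{(t)}}(x)-g_{W^{(t)}}(x)|=O(m^{-1/12})$; your first sum then contributes $m^{2/3}\cdot O(m^{-1/12})=O(m^{7/12})=O(m^{14/24})$, which exceeds the target $O(nm^{13/24})$ as soon as $m>n^{24}$. Enlarging $m$ only makes this term worse relative to the budget, so the claim that a polynomial lower bound on $m$ absorbs it runs in the wrong direction.

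It is worth noting that your decomposition surfaces a subtlety the paper itself glosses over. The paper's proof silently equates ``$\nabla_r^{(t)}\neq\hat{\nabla}_r^{(t)}$'' with ``neuron $r$ has a flipped indicator between iterations $0$ and $t$,'' which is only correct if the two gradients use the same value of $\ell'$ in position $i$ — i.e., if $\hat{\nabla}^{(t)}$ is implicitly the frozen-indicator gradient with $\ell'(f_{W^{(t)}}(\tilde{x}_i),y_i)$ rather than the literal $\nabla_W\mathcal{L}(g_{W^{(t)}},S^{(t)})$, which would carry $\ell'(g_{W^{(t)}}(\tilde{x}_i),y_i)$. So your first term flags a genuine issue, but the way you try to dispose of it (via an unassumed smoothness condition, and with arithmetic that does not quite close) does not resolve it.
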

\begin{proof}
We first prove the following claim.
\begin{claim}\label{claim:aux-couple}
With probability probability at least $1-\exp(-\Omega(m^{1/3}))$ over the initialization, for all subsets $\{x_1,\ldots,x_n\}\subseteq\mathcal{X}$ with $n$ points and any $\|\Delta W_r\|_2\leq m^{-15/24}$,
\begin{align*}
\sum_{r=1}^m \ind\left\{\exists i\in[n],~\sgn\left(\inner{W_r^{(0)} + \Delta W_r}{x_i} + b_r^{(0)}\right) \not= \sgn\left(\inner{W_r^{(0)}}{x_i} + b_r^{(0)}\geq 0 \right) \right\}\leq O\left(n m^{7/8}\right)
\end{align*}
\end{claim}

\begin{proof}
We first prove the above result for a fixed set of $n$ points, and then apply a union bound over all possible such sets.
For a fixed set of $n$ points $\{x_1,\ldots,x_n\}\subseteq\mathcal{X}$, we define 
\begin{align*}
 B_r:=\ind\left\{\exists i\in[n],~\sgn\left(\inner{W_r^{(t)}}{x_i} + b_r^{(0)}\right) \not= \sgn\left(\inner{W_r^{(0)}}{x_i} + b_r^{(0)}\geq 0 \right) \right\}   
\end{align*}
 and the goal is to bound the size of $\sum_{r=1}^m B_r$.

We know by~Claim~\ref{cl:anti-concentration} that for each $x_i$ we have
\begin{align*}
    \Pr\left[|\inner{W_r^{0}}{x_i}+b_r^{(0)}|\leq  m^{-15/24} \right]\leq O\left(m^{-1/8}\right)
\end{align*}
With a union bound over the indices $i\in[n]$, we have
\begin{align*}
    \Pr\left[\exists i\in[n], ~|\inner{W_r^{0}}{x_i}+b_r^{(0)}|\leq m^{-15/24} \right]\leq O\left(n m^{-1/8}\right)
\end{align*}
which implies
\begin{align*}
     \Pr\left[ B_r = 1\right] \leq \Pr\left[\exists i\in[n], ~|\inner{W_r^{0}}{x_i}+b_r^{(0)}|\leq m^{-15/24} \right]\leq O\left(n m^{-1/8}\right)
\end{align*}
Because $x_i$'s are fixed for now, $B_r$'s are $m$ independent Bernoulli random variables. Standard concentration implies that with probability at least $1-\exp(-\Omega(nm^{7/8}))$
\begin{align*}
    \sum_{r=1}^m B_r \leq O\left(n m^{7/8}\right)
\end{align*}

As a last step, we take a union bound over a $\frac{1}{m}$-net over product space $\otimes^{n}\mathcal{X}$ which amplifies the failure probability negligibly by only $\exp(O(nd\log m))$ compared to $\exp(-\Omega(m^{1/3}))$ (for large enough $m$).
\end{proof}

Now, we are ready to finish the proof of the coupling lemma.
Remember that $D_{\max}= \| W^{(t)} - W^{(0)}\|_{2,\infty}$. By Claim~\ref{claim:aux-couple}, with probability at least $1-\exp(-\Omega(m^{1/3}))$, all $t$,
\begin{align*}
    \sum_{r=1}^m \ind\left\{\nabla_r^{(t)}=\hat{\nabla}_r^{(t)}\right\}\leq O\left(nm^{7/8}\right)
\end{align*}

For the indices $r$'s that $\nabla_r^{(t)}\not=\hat{\nabla}_r^{(t)}$, we have
\begin{align*}
\| \hat{\nabla}_{r}^{(t)} - \nabla_{r}^{(t)} \|_2 &\leq  |a_r|\frac{1}{n} \sum_{i=1}^n \left|\ind\{\inner{W_r^{(t)}}{\tilde{x}_i} + b_r^{(0)} \geq 0 \} - \ind\{\inner{W_r^{(0)}}{x_i} + b_r^{(0)}\geq 0 \}\right| \|\tilde{x}_i\|_2 \\
&\leq \frac{1}{m^{1/3}} \frac{1}{n} \sum_{i=1}^n \left|\ind\{\inner{W_r^{(t)}}{\tilde{x}_i} + b_r^{(0)} \geq 0 \} - \ind\{\inner{W_r^{(0)}}{x_i} + b_r^{(0)}\geq 0 \}\right|\\
&\leq \frac{1}{m^{1/3}}
\end{align*}
Thus, we conclude
\begin{align*}
    \| \hat{\nabla}^{(t)} - \nabla^{(t)} \|_{2,1} &= \sum_{r=1}^m \| \hat{\nabla}_{r}^{(t)} - \nabla_{r}^{(t)} \|_2 \leq \frac{1}{m^{1/3}} \cdot O\left(nm^{7/8}\right) = O\left(  n m^{13/24} \right)
\end{align*}
\end{proof}

\subsection{Proof of lemma \ref{lem:lemma_5.5_in_fmms16}}\label{prf:lemma_5.5_in_fmms1}
Let 
\begin{align}
    p_k(z):=z\sum_{i=0}^k(1-z^2)^i\prod_{j=1}^i\frac{2j-1}{2j}
\end{align}
\\

\begin{lemma}[Corollary 5.4 in \cite{frostig2016principal}]\label{lem:frostig's-lemma}
    If $z\in [-1,1]$ with $|z|\geq \eta >0$ and $k=\frac{1}{\eta^2}\ln{(2/\epsilon_1)}$, then $|\sgn(z)-p_k(z)|\leq \epsilon_1/2$. Moreover, $p_k$ has degree $2k+1$.
\end{lemma}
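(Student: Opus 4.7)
The plan is to recognize $p_k(z)$ as the degree-$(2k+1)$ truncation of the Taylor series of $\sgn(z)$ expanded around the point $z^2=1$, and then to bound the tail uniformly on the set $\{z:\eta\leq|z|\leq 1\}$. To set this up, write $\sgn(z)=z\cdot(z^2)^{-1/2}$ for $z\neq 0$ and set $u:=1-z^2\in[0,1-\eta^2]$. The generalized binomial expansion gives
\[
(1-u)^{-1/2}=\sum_{i=0}^\infty \binom{-1/2}{i}(-u)^i=\sum_{i=0}^\infty u^i\prod_{j=1}^i\frac{2j-1}{2j},
\]
which converges for $|u|<1$. Substituting $u=1-z^2$ and multiplying by $z$ identifies the infinite series $z\sum_{i=0}^\infty(1-z^2)^i\prod_{j=1}^i\frac{2j-1}{2j}$ with $\sgn(z)$, so $p_k(z)$ is exactly the $k$-th partial sum and the error equals the tail $z\sum_{i=k+1}^\infty(1-z^2)^i\prod_{j=1}^i\frac{2j-1}{2j}$.

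Next I would bound this tail on the region of interest. Since $|z|\leq 1$ and $0\leq 1-z^2\leq 1-\eta^2$ whenever $|z|\geq\eta$, using the crude estimate $\prod_{j=1}^i\frac{2j-1}{2j}\leq 1$ and summing a geometric series yields
\[
|\sgn(z)-p_k(z)|\leq\sum_{i=k+1}^\infty(1-\eta^2)^i=\frac{(1-\eta^2)^{k+1}}{\eta^2}\leq\frac{e^{-\eta^2(k+1)}}{\eta^2}.
\]
Plugging in $k=\eta^{-2}\ln(2/\epsilon_1)$ makes the exponential factor at most $\epsilon_1/2$, giving the desired bound (up to the $1/\eta^2$ factor discussed below). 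The degree claim is immediate: the $i$-th summand in the definition of $p_k$ is $z\cdot(1-z^2)^i\cdot\text{(constant)}$, a polynomial of degree $2i+1$ in $z$, so $\deg p_k=2k+1$.

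The one delicate point, and the main obstacle, is matching the exact constants in the statement. The crude bound $\prod_{j=1}^i\frac{2j-1}{2j}\leq 1$ leaves an extra $1/\eta^2$ factor that is not absorbed by the stated choice of $k$. To close this gap I would invoke the tighter Wallis-style estimate
\[
\prod_{j=1}^i\frac{2j-1}{2j}=\frac{(2i)!}{4^i(i!)^2}\leq\frac{1}{\sqrt{\pi i}},
\]
which follows from Stirling's formula applied to the central binomial coefficient. With this refinement, the tail becomes at most $\frac{(1-\eta^2)^{k+1}}{\eta^2\sqrt{\pi(k+1)}}$, and with $k=\eta^{-2}\ln(2/\epsilon_1)$ the extra denominator factor $\sqrt{k+1}=\Theta(\eta^{-1}\sqrt{\ln(2/\epsilon_1)})$ cancels the unwanted $1/\eta^2$, leaving the claimed bound $\epsilon_1/2$. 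Alternatively, the same conclusion follows by harmlessly inflating $k$ by a logarithmic factor, i.e.\ taking $k=\eta^{-2}\ln(2/(\eta\epsilon_1))$; this is the route used implicitly by Lemma~\ref{lem:lemma_5.5_in_fmms16}, which invokes this corollary with an order-of-magnitude larger $D$.
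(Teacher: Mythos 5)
The paper does not actually prove this statement: it is imported verbatim as Corollary~5.4 of \cite{frostig2016principal}, so there is no internal proof to compare against. Your identification of $p_k$ as the degree-$(2k+1)$ truncation of $z(1-(1-z^2))^{-1/2}=\sgn(z)$ via the binomial series, followed by a tail bound, is exactly the right (and standard) route, and the degree claim is handled correctly.

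However, the final quantitative step as you wrote it does not close. After invoking the Wallis bound $\prod_{j=1}^i\frac{2j-1}{2j}\le\frac{1}{\sqrt{\pi i}}$ you arrive at the tail estimate $\frac{(1-\eta^2)^{k+1}}{\eta^2\sqrt{\pi(k+1)}}$, and you claim that $\sqrt{k+1}=\Theta\bigl(\eta^{-1}\sqrt{\ln(2/\epsilon_1)}\bigr)$ cancels the unwanted $\eta^{-2}$. It cancels only one power: $\frac{1}{\eta^2\sqrt{k+1}}\approx\frac{1}{\eta\sqrt{\ln(2/\epsilon_1)}}$, so your bound is $\frac{\epsilon_1}{2}\cdot\frac{1}{\eta\sqrt{\pi\ln(2/\epsilon_1)}}$, which blows up as $\eta\to0$. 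The missing power of $\eta$ comes from the prefactor $|z|$ that you discarded via $|z|\le1$: keep it coupled to the geometric sum. Concretely, for $|z|\ge\eta$,
\begin{align*}
|\sgn(z)-p_k(z)|\;\le\;|z|\sum_{i=k+1}^\infty\frac{(1-z^2)^i}{\sqrt{\pi i}}\;\le\;\frac{|z|}{z^2}\cdot\frac{(1-z^2)^{k+1}}{\sqrt{\pi(k+1)}}\;=\;\frac{(1-z^2)^{k+1}}{|z|\sqrt{\pi(k+1)}}\;\le\;\frac{e^{-\eta^2 k}}{\eta\sqrt{\pi(k+1)}}\;\le\;\frac{\epsilon_1/2}{\sqrt{\pi\ln(2/\epsilon_1)}}\;\le\;\frac{\epsilon_1}{2},
\end{align*}
where the second-to-last step uses $\eta\sqrt{k+1}\ge\sqrt{\ln(2/\epsilon_1)}$ and the last uses $\pi\ln(2/\epsilon_1)\ge\pi\ln 2>1$ for $\epsilon_1\in(0,1)$. (The maximum over $|z|\in[\eta,1]$ of $\frac{(1-z^2)^{k+1}}{|z|}$ is indeed attained at $|z|=\eta$ since both factors are decreasing in $|z|$.) Your fallback of inflating $k$ to $\eta^{-2}\ln(2/(\eta\epsilon_1))$ would also work but proves a different statement than the lemma as given; and note that the extra $\ln(1/\eta)$ appearing in $D$ in Lemma~\ref{lem:lemma_5.5_in_fmms16} originates from the Chebyshev compression step, not from needing a larger truncation parameter $k$ here.
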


We will now compress $p_k$ using Chebyshev polynomials. Recall that the Chebyshev polynomials of the first kind are defined as $T_0(z)=1,T_1(z)=z$ and 
\begin{align}
    T_{k+1}(z)=zT_{k}(z)-T_{k-1}(z)
\end{align}
The definition is also extended for negative $k$ as $T_{-k}(z)=T_{k}(z)$.

We will use the closed-form formula of $T_k(z)$:
\begin{align}\label{eq:chebyshev-closed}
    T_{k}(z)=\sum_{i=0}^{\lfloor n/2 \rfloor} {n \choose 2i}(z^2-1)^iz^{k-2i}
\end{align}

We bound the magnitude of the coefficients of the Chebyshev polynomials via the following proposition.

\begin{proposition}\label{prop:coeff-chebyshev}
The magnitude of the coefficients of $T_{k}(z)$ is at most $2^{2k}$.
\end{proposition}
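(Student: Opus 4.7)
The plan is to prove Proposition \ref{prop:coeff-chebyshev} by a short induction on $k$, working directly from the recurrence $T_{k+1}(z) = zT_k(z) - T_{k-1}(z)$. Let $C_k$ denote the maximum absolute value of any coefficient of $T_k(z)$; equivalently, if we write $T_k(z)=\sum_j a_{k,j}\,z^j$, then $C_k := \max_j |a_{k,j}|$. The base cases $T_0(z)=1$ and $T_1(z)=z$ give $C_0 = C_1 = 1$, which comfortably satisfies $C_k \le 2^k$.

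For the inductive step, note that multiplying $T_k(z)$ by $z$ simply shifts indices by one, so the coefficient of $z^j$ in $zT_k(z) - T_{k-1}(z)$ is precisely $a_{k,j-1} - a_{k-1,j}$. Applying the triangle inequality and the inductive hypothesis,
\begin{align*}
C_{k+1} \;\leq\; C_k + C_{k-1} \;\leq\; 2^k + 2^{k-1} \;<\; 2^{k+1}.
\end{align*}
Hence $C_k \le 2^k \le 2^{2k}$ for every $k \ge 0$, which is stronger than the required estimate. The extended definition $T_{-k}(z)=T_k(z)$ causes no complication since we only need the bound for $k\ge 0$.

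As an alternative route, one could work directly from the closed form (\ref{eq:chebyshev-closed}) by expanding $(z^2-1)^i = \sum_{j=0}^i \binom{i}{j}(-1)^{i-j}z^{2j}$ via the binomial theorem and collecting contributions to a fixed power $z^m$ in $T_k(z)$. The absolute value of any resulting coefficient is bounded by $\sum_i \binom{k}{2i}\,2^i$, which is the even part of $(1+\sqrt{2})^k$ and hence at most $(1+\sqrt{2})^k \le 4^k = 2^{2k}$. There is no genuine obstacle here: the stated bound $2^{2k}$ is quite loose relative to the true coefficient growth rate implied by the recurrence, and the inductive argument from the recurrence is the shortest and most self-contained, so that is the proof I would record.
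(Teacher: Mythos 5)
Your primary (recurrence-based) proof has a genuine gap, traceable to a typo in the paper's statement of the recurrence: the Chebyshev polynomials of the first kind satisfy $T_{k+1}(z) = 2zT_k(z) - T_{k-1}(z)$, with a factor of $2$ that is missing from the displayed recurrence. Using the recurrence without that factor, your induction gives $C_{k+1}\leq C_k+C_{k-1}$ and hence $C_k\leq 2^k$, but this ``stronger'' bound is actually \emph{false} for the genuine Chebyshev polynomials: for instance $T_{10}(z)=512z^{10}-1280z^8+1120z^6-400z^4+50z^2-1$ has a coefficient of magnitude $1280>2^{10}$. With the correct recurrence one gets $C_{k+1}\leq 2C_k+C_{k-1}$, for which the natural inductive hypothesis is $C_k\leq(1+\sqrt{2})^k$ (this is tight, since $(1+\sqrt2)^2=2(1+\sqrt2)+1$), giving $C_k\leq(1+\sqrt{2})^k<4^k=2^{2k}$. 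So the induction route still works, but only after fixing the factor of $2$, and the claim $C_k\leq 2^k$ must be dropped.

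Your ``alternative'' route through the closed form~\eqref{eq:chebyshev-closed} is correct and is essentially the paper's proof. Both expand $(z^2-1)^i$ binomially and collect coefficients of $z^{k-2u}$, arriving at $\sum_{i\geq u}\binom{k}{2i}\binom{i}{u}$. The paper then bounds this by $\sum_{i=0}^{k}\binom{k}{i}\cdot\binom{k}{\lfloor k/2\rfloor}\leq 2^k\cdot2^k$, whereas your bound $\sum_i\binom{k}{2i}2^i\leq(1+\sqrt{2})^k$ is a bit tighter; either suffices. The upshot is that you should record the closed-form argument (or the corrected recurrence argument), not the version you wrote down, since as written it rests on a false intermediate claim.
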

\begin{proof}
    From the closed-form formula in \ref{eq:chebyshev-closed}, we have that 
    
    \begin{align*}
        T_{k}(z)=\sum_{i=0}^{\lfloor k/2 \rfloor} {k \choose 2i}\sum_{j=0}^i { i \choose j}z^{2j}(-1)^{i-j}z^{k-2i}=\sum_{i=0}^{\lfloor k/2 \rfloor}\sum_{j=0}^i {k \choose 2i} { i \choose j}(-1)^{i-j}z^{k+2j-2i}
    \end{align*}
    The monomials that appear in the above polynomial are the $z^{k-2u}$, for $u=0,\dots, \lfloor k/2 \rfloor$. The magnitude of the coefficient of $z^{k-2u}$ is at most
    
    \begin{align*}
        \left|\sum_{i=u}^{\lfloor k/2 \rfloor} {k \choose 2i} { i \choose i-u}(-1)^{u}\right|\leq \sum_{i=u}^{\lfloor k/2 \rfloor} {k \choose 2i} { i \choose u}\leq \sum_{i=0}^{k} {k \choose i} { k \choose \lfloor k/2 \rfloor}\leq 2^{2k}
    \end{align*}
\end{proof}

Now, let $s$ be a positive integer, $Y_1,\dots,Y_s$ iid $\pm 1$ random variables and $D_s:=\sum_{i=1}^sY_i$. Also, let $D\geq 0$. We define

\begin{align}
    p_{s,D}(z):=\mathbb{E}_{Y_1,\dots,Y_s}[T_{D_s}(z) \ind\{|D_s|\leq D\}]
\end{align}

A straightforward consequence of the proposition \ref{prop:coeff-chebyshev} is the following corollary.

\begin{corollary}\label{cor:bound-psd}
   $ p_{s,D}(z)$ has degree at most $D$ and its coefficients have magnitude at most $2^{2D}$.
\end{corollary}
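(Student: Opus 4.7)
The plan is to unpack the expectation defining $p_{s,D}(z)$ as a finite weighted sum of Chebyshev polynomials and then apply Proposition \ref{prop:coeff-chebyshev} term by term.

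First I would rewrite
\begin{align*}
p_{s,D}(z) = \sum_{k=-s}^{s} \Pr[D_s = k]\, T_k(z)\, \ind\{|k|\leq D\} = \sum_{k : |k| \leq D} \Pr[D_s = k]\, T_{|k|}(z),
\end{align*}
where in the second equality I use the convention $T_{-k} = T_k$ stated just before the corollary. The indicator kills every $k$ with $|k| > D$, so $p_{s,D}(z)$ is a convex combination (with weights $\Pr[D_s=k]$, some of which may be zero) of the polynomials $T_0,T_1,\dots,T_D$. Since $\deg T_{|k|} \leq |k| \leq D$ for every surviving term, the degree bound $\deg p_{s,D} \leq D$ follows immediately.

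For the coefficient bound, I would fix an exponent $j \in \{0,1,\dots,D\}$ and let $c_{k,j}$ denote the coefficient of $z^j$ in $T_k(z)$. Then the coefficient of $z^j$ in $p_{s,D}(z)$ is
\begin{align*}
\sum_{k : |k| \leq D} \Pr[D_s = k]\, c_{|k|, j}.
\end{align*}
Proposition \ref{prop:coeff-chebyshev} gives $|c_{|k|,j}| \leq 2^{2|k|} \leq 2^{2D}$ for every $|k| \leq D$. Since the probabilities $\Pr[D_s = k]$ are nonnegative and sum to at most $1$, the triangle inequality yields
\begin{align*}
\Big| \sum_{k : |k| \leq D} \Pr[D_s = k]\, c_{|k|, j} \Big| \leq 2^{2D} \sum_{k : |k|\leq D} \Pr[D_s = k] \leq 2^{2D},
\end{align*}
which is exactly the claimed bound.

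This proof is essentially routine once Proposition \ref{prop:coeff-chebyshev} is in hand, so I do not expect any real obstacle. The only minor subtlety is remembering to use the extension $T_{-k} = T_k$ so that the expectation over the signed random walk $D_s$ collapses to a nonnegative weighted sum of $T_0,\dots,T_D$; without this, one might worry about sign cancellations making the bookkeeping harder, but the convention removes that concern entirely.
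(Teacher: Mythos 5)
Your proof is correct and is precisely the ``straightforward consequence'' the paper alludes to without writing out: expand the expectation as a sub-convex combination of $T_{|k|}$ with $|k|\le D$, use $T_{-k}=T_k$, then apply Proposition \ref{prop:coeff-chebyshev} and the triangle inequality termwise. No gap.
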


We will use the following theorem from~\cite{sachdeva2014faster}.

\begin{theorem}[Theorem 3.3 from~\cite{sachdeva2014faster}]\label{thm:thm3.3_approx_theory}
For all positive integers $s,D$ and for all $z\in[-1,1]$,
\begin{align}
  |p_{s,D}(z)-z^s|\leq 2e^{-D^2/(2s)}  
\end{align}
\end{theorem}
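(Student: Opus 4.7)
The plan is to exploit the classical parametrization $z = \cos\theta$ with $\theta \in [0,\pi]$, under which Chebyshev polynomials satisfy the key identity $T_k(\cos\theta) = \cos(k\theta)$ for all integers $k$ (using $T_{-k}=T_k$). This reduces the problem to analyzing $\mathbb{E}[\cos(D_s\theta)]$ where $D_s = \sum_{i=1}^s Y_i$ is a sum of iid Rademacher variables.

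First, I would establish the unconditional identity $\mathbb{E}[T_{D_s}(z)] = z^s$. Writing $\cos(D_s\theta) = \mathrm{Re}(e^{\i D_s\theta}) = \mathrm{Re}\left(\prod_{i=1}^s e^{\i Y_i \theta}\right)$ and using independence of the $Y_i$, we compute $\mathbb{E}[e^{\i Y_i\theta}] = \tfrac{1}{2}(e^{\i\theta} + e^{-\i\theta}) = \cos\theta$, so $\mathbb{E}[e^{\i D_s \theta}] = \cos^s\theta = z^s$. Taking real parts yields $\mathbb{E}[T_{D_s}(z)] = z^s$ pointwise on $[-1,1]$.

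Next, I would decompose
\begin{align*}
p_{s,D}(z) - z^s &= \mathbb{E}\bigl[T_{D_s}(z)\ind\{|D_s| \leq D\}\bigr] - \mathbb{E}[T_{D_s}(z)] \\
&= -\mathbb{E}\bigl[T_{D_s}(z)\ind\{|D_s| > D\}\bigr].
\end{align*}
Since $|T_k(z)| \leq 1$ for all $z\in[-1,1]$ and all integers $k$ (immediate from $T_k(\cos\theta)=\cos(k\theta)$), this gives the pointwise bound $|p_{s,D}(z) - z^s| \leq \Pr[|D_s| > D]$.

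Finally, I would apply Hoeffding's inequality to the sum of iid $\pm 1$ variables, which yields $\Pr[|D_s| > D] \leq 2\exp(-D^2/(2s))$, concluding the proof. The argument is entirely pointwise and the only step requiring care is the Fourier-type computation $\mathbb{E}[e^{\i D_s \theta}] = \cos^s\theta$; there is no real obstacle, since the Chebyshev identity, the trivial bound $|T_k|\le 1$, and Hoeffding are standard. The only mild subtlety is making sure the computation is valid for $z = \pm 1$ (i.e., $\theta \in \{0,\pi\}$), where the identity degenerates to $1^s = 1$ and $(-1)^s$, both of which match $\mathbb{E}[T_{D_s}(\pm 1)]$ directly since $T_k(\pm 1) = (\pm 1)^k$.
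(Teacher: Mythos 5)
Your proof is correct. Note that the paper does not actually prove this statement: it is imported verbatim as Theorem 3.3 from \cite{sachdeva2014faster}, so there is no in-paper argument to compare against. Your derivation is precisely the standard one from that reference: the trigonometric parametrization $T_k(\cos\theta)=\cos(k\theta)$ gives $\mathbb{E}[T_{D_s}(z)]=\cos^s\theta=z^s$ via the characteristic function of the Rademacher walk, the tail decomposition together with $|T_k|\le 1$ on $[-1,1]$ reduces the error to $\Pr[|D_s|>D]$, and Hoeffding closes the bound.
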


Now, we are ready to compress $p_k$. Let $\tilde{p}_k(z):=\sum_{i=0}^k z \left(\prod_{j=1}^i\frac{2j-1}{2j}\right) p_{i,D}(1-z^2)$. Also, let $D=\sqrt{2k\ln(4k/\epsilon_1)}$.

From the above theorem, we have that for all $z\in[-1,1]$,

\begin{align}
    &|\tilde{p}_k(z)-p_k(z)|= \left| \sum_{i=0}^k z\left(\prod_{j=1}^i\frac{2j-1}{2j}\right)\left(p_{i,D}(1-z^2)-(1-z^2)^i \right)\right|\leq  \sum_{i=0}^k \left|p_{i,D}(1-z^2)-(1-z^2)^i \right| \\
    & \leq  \sum_{i=0}^k 2e^{-D^2/(2i)}  \leq \epsilon_1/2
\end{align}

Combining with lemma \ref{lem:frostig's-lemma}, we get that for $k=\frac{1}{\eta^2}\ln(2/\epsilon_1)$, for all $z\in[-1,1]$, $|\sgn(z)-\tilde{p}_{k}(z)|\leq \epsilon_1$. Let $p_{\epsilon}(z):=\tilde{p}_{k}(z)$. We already know that the degree of $p_{\epsilon}(z)$ is at most $D=\frac{1}{\eta}\sqrt{2\ln(2/\epsilon_1)\ln\left(4 \frac{\ln(2/\epsilon_1)}{\eta^2\epsilon_1}\right)}\leq \frac{3}{\eta}\ln(2/(\eta \epsilon_1))$.

It remains to bound the magnitude of its coefficients. Let $p_{i,D}(z)=\sum_{j=0}^D\alpha_jz^j$. From Corollary~\ref{cor:bound-psd}, we have that $\alpha_{max}:=\max_{j}|\alpha_j|\leq 2^{2D}$. Now,
\begin{align*}
p_{i,D}(1-z^2)=\sum_{j=0}^D\alpha_j(1-z^2)^j=\sum_{j=0}^D\alpha_j\sum_{u=0}^j {j \choose u}(-1)^{u} z^{2u}
\end{align*}
The magnitude of the coefficient of $z^{2u}$ is at most $(D+1)\cdot\alpha_{max} \cdot {D \choose u}\leq (D+1) 2^{3D}\leq 2^{4D}$, since $D\geq 1.4$.

\subsection{Proof of Lemma \ref{thm:pseudo_approximates_polynomial}}\label{prf:pseudo_approximates_polynomial}

We will first prove that we can approximate the individual components of $f^*$ via pseudo-networks and then we aggregate these  to form a large pseudo-network that approximates $f^*$.

 \begin{lemma}\label{lem:approx_indiv_components}
    Let $i\in [n]$, $q:\mathbb{R}\rightarrow \mathbb{R}$ univariate polynomial and $\epsilon_3\in\left(0, \frac{1}{\mathfrak{C}(q)}\right)$. Let $\tilde{m}\geq c_1  \frac{d}{\epsilon_3^2} \mathfrak{C}^2(q,\epsilon_3)$, for a large constant $c_1$. For all $r\in[\tilde{m}]$, $U_r^{(0)}\sim \mathcal{N}(0, I_d)$, $\beta_r^{(0)}\sim \mathcal{N}(0, 1)$, $\alpha_r^{(0)}\sim unif\{\pm\frac{1}{m^{1/3}} \}$ and all these random variables and vectors are independent. With probability at least $1-\exp\left(-\Omega\left(\sqrt{\tilde{m}}\right)\right)$, there exists a matrix $\Delta W^{(i)} \in \mathbb{R}^{d\times \tilde{m}}$ with $\|\Delta W^{(i)}\|_{2,\infty}\leq O\left(m^{1/3}\frac{\mathfrak{C}\left(q,\epsilon_3\right)}{\tilde{m}}\right)$ such that 
\begin{align*}
   & \forall x\in \mathcal{X}, \\
   &\left| \sum_{r=1}^{\tilde{m}} \alpha_r^{(0)}\langle \Delta W_r^{(i)},x \rangle \ind\{\langle U_r^{(0)},x \rangle +\beta_r^{(0)} \geq 0\}-y_iq(\langle x_i,x \rangle)\right| \leq 3\epsilon_3
\end{align*}
 \end{lemma}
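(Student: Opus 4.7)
The plan is to mirror the approach used in Theorem \ref{thm:real_approximates_pseudo}: first establish pointwise (in $x$) approximation via a random-features representation of $y_i \cdot q(\langle x_i, x\rangle)$ together with standard concentration, then upgrade to uniform approximation over $\mathcal{X}$ through an $\varepsilon$-net argument combined with a perturbation analysis that controls the discontinuity of the indicator.

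The starting point is Lemma \ref{lem:indicator_to_function}, applied with $\phi = q$ and $w^* = x_i$ (a unit vector by assumption on $\mathcal{X}$). This yields a bounded function $h : \R^2 \to [-\mathfrak{C}(q,\epsilon_3), \mathfrak{C}(q,\epsilon_3)]$ such that for every $x \in \mathcal{X}$,
\[
\left| \E_{u \sim \N(0, I_d),\, \beta \sim \N(0,1)}\!\left[\ind\{\langle u, x\rangle + \beta \geq 0\}\, h(\langle x_i, u\rangle, \beta)\right] - q(\langle x_i, x\rangle) \right| \leq \epsilon_3.
\]
The challenge is to realize $y_i$ times this expectation as a pseudo-network of the form $\sum_r \alpha_r^{(0)} \langle \Delta W_r^{(i)}, x\rangle \ind\{\langle U_r^{(0)}, x\rangle + \beta_r^{(0)} \geq 0\}$: the coefficient on each indicator in the pseudo-network depends on $x$, whereas in Lemma \ref{lem:indicator_to_function} the coefficient $h(\langle x_i, u\rangle, \beta)$ is independent of $x$. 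I would resolve this using the bias-coordinate trick enabled by $x_d = 1/2$: set $\Delta W_r^{(i)}$ to be zero in every coordinate except the last, with
\[
(\Delta W_r^{(i)})_d := \frac{2\, y_i}{\tilde{m}\, \alpha_r^{(0)}}\, h(\langle x_i, U_r^{(0)}\rangle,\, \beta_r^{(0)}).
\]
This makes $\alpha_r^{(0)} \langle \Delta W_r^{(i)}, x\rangle = \tfrac{y_i}{\tilde{m}} h(\langle x_i, U_r^{(0)}\rangle, \beta_r^{(0)})$ for every $x \in \mathcal{X}$, and the norm bound $\|\Delta W^{(i)}\|_{2,\infty} \leq O\big(m^{1/3}\, \mathfrak{C}(q,\epsilon_3)/\tilde{m}\big)$ follows immediately from $|y_i| \leq 1$, $|h| \leq \mathfrak{C}(q,\epsilon_3)$, and $|\alpha_r^{(0)}| = m^{-1/3}$.

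With this construction the pseudo-network output reduces to the empirical average $\tfrac{y_i}{\tilde{m}} \sum_{r=1}^{\tilde{m}} h(\langle x_i, U_r^{(0)}\rangle, \beta_r^{(0)}) \ind\{\langle U_r^{(0)}, x\rangle + \beta_r^{(0)} \geq 0\}$. For any fixed $x$ this is a sum of $\tilde{m}$ independent bounded summands, each of magnitude at most $\mathfrak{C}(q,\epsilon_3)/\tilde{m}$, whose expectation lies within $\epsilon_3$ of $y_i q(\langle x_i, x\rangle)$, so Hoeffding's inequality gives a total deviation of at most $2\epsilon_3$ with failure probability $\exp\big(-\Omega(\tilde{m}\epsilon_3^2/\mathfrak{C}^2(q,\epsilon_3))\big)$, which under the assumed lower bound on $\tilde{m}$ is exponentially small in an appropriate quantity.

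The main obstacle, and the step needing the most care, is passing from this pointwise guarantee to a uniform one over the uncountable set $\mathcal{X}$, since the pseudo-network is not Lipschitz in $x$ because the indicator can flip. I would handle this exactly as in the proof of Theorem \ref{thm:real_approximates_pseudo}: cover $\mathcal{X}$ by a $1/\poly(\tilde{m})$-net of cardinality $\exp(O(d \log \tilde{m}))$, union bound the pointwise concentration over the net, and for every perturbation $v$ with $\|v\|_2 \leq 1/\poly(\tilde{m})$ invoke Gaussian anti-concentration (Claim \ref{cl:anti-concentration}) to bound the number of indices $r$ for which $\ind\{\langle U_r^{(0)}, x\rangle + \beta_r^{(0)} \geq 0\}$ flips. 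Each flipped index contributes at most $O(\mathfrak{C}(q,\epsilon_3)/\tilde{m})$ to the sum, so choosing the constant $c_1$ in the lower bound on $\tilde m$ large enough absorbs the perturbation error into a third $\epsilon_3$, yielding the claimed $3\epsilon_3$ uniform approximation with overall failure probability $\exp(-\Omega(\sqrt{\tilde{m}}))$.
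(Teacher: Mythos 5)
Your proposal is correct and follows essentially the same route as the paper's proof: apply Lemma~\ref{lem:indicator_to_function} (the paper sets $\phi = y_i q$ and absorbs $y_i$ into $h$, you keep $\phi = q$ and put $y_i$ in the weight --- equivalent since $|y_i|\le 1$), use the bias-coordinate trick to place all of $\Delta W^{(i)}_r$ in the last coordinate so that $\alpha_r^{(0)}\langle\Delta W_r^{(i)},x\rangle$ is constant over $\mathcal{X}$, Hoeffding for pointwise concentration, then a $\frac{1}{\poly(\tilde m)}$-net union bound and a perturbation argument via Gaussian anti-concentration of $\langle U_r^{(0)},x\rangle+\beta_r^{(0)}$. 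One small piece your sketch of the perturbation step leaves implicit: beyond the indicator flips (which control the change in the empirical sum), you also need to control the change in the comparison quantity as $x$ moves within a net cell --- the paper does this by bounding the shift in $\E\bigl[\ind\{\langle u,x\rangle+\beta\ge 0\}\,h(\langle x_i,u\rangle,\beta)\bigr]$ (its $D_2$ in Claim~\ref{cl:stability}); alternatively one could bound $|q(\langle x_i,x\rangle)-q(\langle x_i,x_0\rangle)|$ directly via the (large but finite) Lipschitz constant of $q$ on $[-1,1]$ and a sufficiently fine net. Either way this is routine and does not change the structure of the argument.
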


With this Lemma at hand, we can finish the proof of Lemma \ref{thm:pseudo_approximates_polynomial}.  We apply it for all $i\in [n]$, with $q(z)$ being the polynomial that is given to us by Lemma \ref{thm:robust_fitting}. We now that the degree of $q$ is at most $D$ and the size of its coefficients is at most $c_2\frac{1}{\gamma}2^{6D}$  where $D=\frac{24}{\gamma}\ln(48n/\epsilon)$ and $c_2>0$ is a constant. Using this information about $q$, we can bound its complexities $\mathfrak{C}(q)$ and $\mathfrak{C}\left(q,\epsilon_3\right)$, defined in \ref{def:complexities}, where $\epsilon_3$ will be set after we bound $\mathfrak{C}(q)$ (since from Lemma \ref{lem:approx_indiv_components} $\epsilon_3<1/\mathfrak{C}(q)$). About $\mathfrak{C}(q)$, we directly have $\mathfrak{C}(q)\leq c \cdot c_2\sum_{j=0}^D(j+1)^{1.75}\frac{1}{\gamma}2^{6D}<c \cdot c_2 \frac{(D+1)^{2.75}}{\gamma}2^{6D}$. We set $\epsilon_3=\left(c \cdot c_2 \frac{(D+1)^{2.75}}{\gamma}2^{6D}\right)^{-1}$. About $\mathfrak{C}\left(q,\epsilon_3\right)$, we have

\begin{align}
    &\mathfrak{C}\left(q,\epsilon_3\right)\leq c_2\sum_{j=0}^{D}  c^j  \left( 1 + \sqrt{\ln(1/\epsilon_3) / j }   \right)^j \frac{1}{\gamma}2^{4D} \notag \\
    & \leq O(1)\frac{1}{\gamma}2^{4D} (D+1)c^D e^{\sqrt{D\ln{1/\epsilon_3}}} 
\notag \\
    & = O(1)\frac{1}{\gamma}2^{4D} (D+1)c^D e^{\sqrt{D\ln \left(c \cdot c_2 \frac{(D+1)^{2.75}}{\gamma}2^{4D}\right)}} \notag \\
    &\leq 2^{O(D)}
\end{align}
We specify now how we are performing the $n$ applications of the lemma, in terms of the choice of $\tilde{m}$ and the random variables.
Let $\tilde{B}:=\lceil c_1  \frac{d}{\epsilon_3^2}\mathfrak{C}^2(q,\epsilon_{3})\rceil$. We use the fact that for large enough constant $c$, $m\geq d\left(\frac{n}{\epsilon}\right)^{c/\gamma} \geq n \tilde{B}$.
For $i=1,\cdots, n-1$ we apply the lemma \ref{lem:approx_indiv_components} with $\tilde{m}=\lfloor\frac{m}{n}\rfloor$ and for $i=n$ with $\tilde{m}=m-(n-1)\lfloor\frac{m}{n}\rfloor$. Also, for the application of the lemma for the $i^{\text{th}}$ datapoint, we use as $U_r^{(0)}$ the $\sqrt{m} W_{(i-1)\lfloor\frac{m}{n}\rfloor+r}^{(0)}$, as $\beta_r^{(0)}$ the $\sqrt{m}b_{(i-1)\lfloor\frac{m}{n}\rfloor+r}^{(0)}$ and as $\alpha_r^{(0)}$ the $a_{(i-1)\lfloor\frac{m}{n}\rfloor+r}^{(0)}$. We apply a union bound and we have that with probability at least $1-n\exp(-\Omega(\sqrt{m/n}))=1-\exp(-\Omega(\sqrt{m/n}))$, from the $n$ applications of the lemma, we get these $\Delta W^{(i)}$ and we construct $\Delta W=\left[\Delta W^{(1)},\cdots,\Delta W^{(n)}\right]\in \mathbb{R}^{d\times m}$ and we have that
 \begin{align*}
     \|\Delta W\|_{2,\infty}\leq O\left(m^{1/3}\frac{\mathfrak{C}\left(q,\epsilon_3\right)}{\lfloor\frac{m}{n}\rfloor}\right)\leq O\left(\frac{n\ \mathfrak{C}\left(q,\epsilon_3\right)}{m^{2/3}}\right)\leq \frac{\left(n/\epsilon\right)^{O(\gamma^{-1})}}{m^{2/3}}
 \end{align*}
 and 
 \begin{align*}
   \forall x\in \mathcal{X},~~
  \left| \sum_{r=1}^{m} a_r^{(0)}\langle \Delta W_r,x \rangle \ind\{\langle W_r^{(0)},x \rangle +b_r^{(0)} \geq 0\}-\sum_{i=1}^n y_i q(\langle x_i,x \rangle)\right| \leq n\epsilon_3\leq \epsilon/3
\end{align*}
where the last inequality is a crude bound, but sufficient for our purposes.
\qed
\\
\\

We proceed with the proof of Lemma \ref{lem:approx_indiv_components}
 \begin{proof}
     We apply Lemma \ref{lem:indicator_to_function} using $\phi(z)=y_iq(z)$ and $\epsilon_1=\epsilon_3$. Observe that since $|y_i|\leq O(1)$, the complexities of $\phi$ and $q$ are the same, up to constants. Thus, we have that  there exists a function $h:\R^2\rightarrow\left[- \mathfrak{C}\left(q,\epsilon_3\right), \mathfrak{C}\left(q,\epsilon_3\right)\right]$ such that
 
\begin{align}\label{eq:app_indic_to_function}
\forall x\in {\cal X}, ~~
    \Big|\E_{u \sim {\cal N}(0,I_d), \beta \sim {\cal N}(0,1) } \left[\ind\{ \langle u,x \rangle +\beta\geq0 \} \  h( \langle x_i , u \rangle, \beta)\right] 
   -y_iq( \langle x_i,x \rangle ) \Big|\leq \epsilon_3
\end{align}
Now, we fix an $x\in \cal{X}$. From Hoeffding's inequality, we get that with probability at least $1-\exp \left(-\Omega \left(\frac{\tilde{m}\epsilon_3^2}{ \mathfrak{C}^2(q,\epsilon_3)}\right)\right)$,

\begin{align*}
    \Bigg|\frac{1}{\tilde{m}}\sum_{r=1}^{\tilde{m}}  \ind\{\langle U_r^{(0)},x \rangle +\beta_r^{(0)} \geq 0 \} h \left
   ( \langle x_i , U_r^{(0)} \rangle, \beta_r^{(0)} \right) - \E_{u \sim {\cal N}(0,I_d), \beta \sim {\cal N}(0,1) } \left[\ind\{ \langle u,x \rangle +\beta\geq0 \} \  h( \langle x_i , u \rangle, \beta)\right] \Bigg| 
   \leq \epsilon_3
\end{align*}
By setting $\Delta W_r^{(i)}=\frac{1}{\alpha_r^{(0)}}\frac{2h \left( \langle x_i , U_r^{(0)} \rangle, \beta_r^{(0)} \right)}{\tilde{m}}e_d$ (where $e_d=(0,0,\dots,0,1
)\in \mathbb{R}^d$) we have that $\| \Delta W^{(i)} \|_{2,\infty}\leq O\left(m^{1/3}\frac{\mathfrak{C}\left(q,\epsilon_3\right)}{\tilde{m}}\right)$ and since $x_d=1/2$ for all $x\in \cal{X}$, we have that for every $x\in \cal{X}$, with probability at least $1-\exp \left(-\Omega \left(\frac{\tilde{m}\epsilon_3^2}{ \mathfrak{C}^2(q,\epsilon_3)}\right)\right)$,
\begin{align}\label{eq:concentration}
    \Bigg|\sum_{r=1}^{\tilde{m}}   \ind\{\langle U_r^{(0)},x \rangle +\beta_r^{(0)} \geq 0 \}\alpha_r^{(0)} \langle  \Delta W_r^{(i)},x \rangle -\E_{u \sim {\cal N}(0,I_d), \beta \sim {\cal N}(0,1) } \left[\ind\{ \langle u,x \rangle +\beta\geq0 \} \  h( \langle x_i , u \rangle, \beta)\right] \Bigg| 
   \leq \epsilon_3
\end{align}
The fact that \ref{eq:concentration} holds with overwhelming probability, enables us to take a union bound over a fine-grained net of $\cal{X}$. Let $c > 0$ be a sufficiently large constant (e.g. 10) and let ${\cal X}_1$ be a maximal $\frac{1}{\tilde{m}^c}$-net of ${\cal X}$. It is well-known that $|{\cal X}_1|\leq\left(\frac{1}{\tilde{m}}\right)^{O(d)}$. By applying a union bound over ${\cal X}_1$  for \ref{eq:concentration}, we have that for $\tilde{m}\geq c_1  \frac{d}{\epsilon_3^2} \mathfrak{C}^2(q,\epsilon_3))$ ($c_1$ is a large constant),
\begin{align}\label{eq:concentration_on_net}
   & \Pr \Bigg[\forall x\in \mathcal{X}_1,~~ 
  \Bigg|\sum_{r=1}^{\tilde{m}}   \ind\{\langle U_r^{(0)},x \rangle +\beta_r^{(0)} \geq 0 \}\alpha_r^{(0)} \langle  \Delta W_r^{(i)},x \rangle \\
   &~~~~~~~~~~~~~~~~~~~~~-\E_{u \sim {\cal N}(0,I_d), \beta \sim {\cal N}(0,1) } \left[\ind\{ \langle u,x \rangle +\beta\geq0 \} \  h( \langle x_i , u \rangle, \beta)\right] \Bigg| >\epsilon_3 \Bigg] \notag \\
   \leq & ~ \exp\left(O(d\log m)\right)\exp\left(-\Omega \left(\frac{\tilde{m}\epsilon_3^2}{ \mathfrak{C}^2(q,\epsilon_3)}\right)\right)\\
   = & ~ \exp\left(-\Omega \left(\frac{\tilde{m}\epsilon_3^2}{ \mathfrak{C}^2(q,\epsilon_3)}\right)\right)
\end{align}

The final step is to show that with overwhelming probability, for all $x\in {\cal X}_1$, if we perturb $x$ by at most $\frac{1}{\tilde{m}^c}$ in $\ell_2$, then the LHS of \ref{eq:concentration} changes very slightly. Because $c$ can be chosen to be as large constant as we want, this "stability" requirement is very mild and also straightforward to prove. We proceed with a formal proof. 

 We will show the stability property for a fixed $x\in \cal{X}$ and then we will do a union bound. Let $v\in \mathbb{R}^d$ such that $x+v\in \cal{X}$ and $\|v\|_2\leq \frac{1}{\tilde{m}^c}$. This $v$ can be arbitrarily correlated with the randomness $\{U_r^{(0)},\beta_r^{(0)},\alpha_r^{(0)}\}_{r=1}^{\tilde{m}}$. We will show the following claim
 
 \begin{claim}\label{cl:stability}
   For all $x\in \mathcal{X}_1$, with probability at least $1-\exp(-\Omega(\sqrt{\tilde{m}}))$,
 
 \begin{align*}
   D_1:= & ~ \Bigg|\sum_{r=1}^{\tilde{m}}   \ind\{\langle U_r^{(0)},x+v \rangle +\beta_r^{(0)} \geq 0 \}\alpha_r^{(0)} \langle  \Delta W_r^{(i)},x+v \rangle -\sum_{r=1}^{\tilde{m}}   \ind\{\langle U_r^{(0)},x \rangle +\beta_r^{(0)} \geq 0 \}\alpha_r^{(0)} \langle  \Delta W_r^{(i)},x \rangle  \Bigg| \notag \\
   \leq & ~  O\left(\frac{\mathfrak{C}\left(q,\epsilon_3\right)}{\sqrt{\tilde{m}}}\right)
\end{align*}
and
\begin{align*}
   D_2:= & ~ \Big| \E_{u \sim {\cal N}(0,I_d), \beta \sim {\cal N}(0,1) } \left[\ind\{ \langle u,x+v \rangle +\beta\geq0 \} \  h( \langle x_i , u \rangle, \beta)\right] \\
   &~~~~~~~~~~~~~~~~~~~~~~~~~~~~-\E_{u \sim {\cal N}(0,I_d), \beta \sim {\cal N}(0,1) } \left[\ind\{ \langle u,x \rangle +\beta\geq0 \} \  h( \langle x_i , u \rangle, \beta)\right] \Big| \notag \\
   \leq & ~ O\left(\frac{\mathfrak{C}\left(q,\epsilon_3\right)}{\sqrt{\tilde{m}}}\right)
\end{align*}  
 \end{claim}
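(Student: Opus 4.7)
The plan is to bound $D_1$ and $D_2$ separately; both arguments rest on Gaussian anti-concentration applied to the indicator thresholds, combined with the fact that $\|v\|_2\leq \tilde{m}^{-c}$ for a large constant $c$. The only non-smooth ingredients are the indicators $\ind\{\langle u,\cdot\rangle+\beta\geq 0\}$, so each quantity decomposes into a smooth Lipschitz piece (small because $\|v\|_2$ is tiny) plus an indicator-flip piece (rare by anti-concentration).

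For $D_1$, I would add and subtract $\sum_r \ind\{\langle U_r^{(0)},x\rangle+\beta_r^{(0)}\geq 0\}\,\alpha_r^{(0)}\langle \Delta W_r^{(i)},x+v\rangle$ and split $D_1\leq T_1+T_2$, with $T_2$ isolating the inner-product change and $T_1$ the indicator change. Using $|\alpha_r^{(0)}|=m^{-1/3}$, $\|\Delta W_r^{(i)}\|_2\leq O(m^{1/3}\mathfrak{C}(q,\epsilon_3)/\tilde{m})$ and $\|v\|_2\leq \tilde{m}^{-c}$, a direct per-term bound gives $T_2\leq O(\mathfrak{C}(q,\epsilon_3)/\tilde{m}^{c})$, which is comfortably below the target for $c\geq 1$. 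For $T_1$ the per-term bound is $O(\mathfrak{C}(q,\epsilon_3)/\tilde{m})$, so it suffices to show the number of flipped indicators is at most $O(\sqrt{\tilde{m}})$. The indicator for index $r$ can flip only when $|\langle U_r^{(0)},x\rangle+\beta_r^{(0)}|\leq \|U_r^{(0)}\|_2\|v\|_2\leq \|U_r^{(0)}\|_2/\tilde{m}^c$; this is a $v$-free upper bound whose probability is $O(\sqrt{d}/\tilde{m}^c)$ by conditioning on $U_r^{(0)}$ and applying the same anti-concentration as in Claim~\ref{cl:anti-concentration}. Since these events are independent across $r$, a Chernoff tail bound on a Binomial with mean $O(\sqrt{d}/\tilde{m}^{c-1})$ gives count $\leq \sqrt{\tilde{m}}$ with probability $1-\exp(-\Omega(\sqrt{\tilde{m}}))$ once $c$ is a sufficiently large constant.

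For $D_2$, moving absolute values inside the expectation and using $|h|\leq \mathfrak{C}(q,\epsilon_3)$ reduces the problem to bounding $\Pr_{u,\beta}[\text{indicator flips}]$. This flip event is contained in $\{|\langle u,x\rangle+\beta|\leq \|u\|_2\|v\|_2\}$; conditionally on $u$ the variable $\langle u,x\rangle+\beta$ is $\mathcal{N}(\langle u,x\rangle,1)$, so Gaussian anti-concentration gives pointwise probability $O(\|u\|_2\|v\|_2)$, and $\mathbb{E}\|u\|_2\leq O(\sqrt{d})$ yields $D_2\leq O(\mathfrak{C}(q,\epsilon_3)\sqrt{d}/\tilde{m}^c)\leq O(\mathfrak{C}(q,\epsilon_3)/\sqrt{\tilde{m}})$ for $c$ large and $\tilde{m}\geq d$.

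The step I expect to require the most care is making the $T_1$ estimate uniform in $v$, since $v$ is allowed to depend arbitrarily on the initialization randomness. The key trick is to phrase the flip event at each index $r$ through a \emph{$v$-free} upper bound $|\langle U_r^{(0)},x\rangle+\beta_r^{(0)}|\leq \|U_r^{(0)}\|_2/\tilde{m}^c$, after which the Chernoff estimate holds simultaneously for every admissible $v$. Finally, the ``for all $x\in\mathcal{X}_1$'' quantifier is handled by a union bound: with $|\mathcal{X}_1|\leq \tilde{m}^{O(d)}$ and per-$x$ failure probability $\exp(-\Omega(\sqrt{\tilde{m}}))$, the hypothesis $\tilde{m}\geq c_1 d\,\mathfrak{C}^2(q,\epsilon_3)/\epsilon_3^2$ absorbs the $\exp(O(d\log\tilde{m}))$ factor and yields the claimed overall probability $1-\exp(-\Omega(\sqrt{\tilde{m}}))$.
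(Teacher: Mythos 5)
Your proposal is correct and follows essentially the same approach as the paper's: split $D_1$ into a smooth inner-product-change piece plus an indicator-flip piece, control the flips via Gaussian anti-concentration and a Chernoff bound, and bound $D_2$ by pushing the absolute value inside the expectation. The only cosmetic differences are that the paper observes $\langle\Delta W_r^{(i)},v\rangle=0$ exactly (since $\Delta W_r^{(i)}$ is supported on coordinate $d$ while $v_d=0$) rather than bounding that term crudely as $O(\mathfrak{C}(q,\epsilon_3)/\tilde m^c)$, and that for the flip probability the paper truncates $\|U_r^{(0)}\|_2$ at $O(\sqrt{\tilde m})$ via a separate Chi-square concentration step whereas you integrate $\|U_r^{(0)}\|_2$ directly.
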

 With this claim at hand we can finish the proof of the Lemma \ref{lem:approx_indiv_components}. Indeed, combining \ref{eq:app_indic_to_function}, \ref{eq:concentration_on_net} and the above claim, we have that with probability at least $1-\exp\left(-\Omega \left(\frac{\tilde{m}\epsilon_3^2}{ \mathfrak{C}^2(q,\epsilon_3)}\right)\right)-\exp(-\Omega(\sqrt{\tilde{m}}))$,
 \begin{align*}
&\forall x\in \cal{X}, \notag\\
   & \left| \sum_{r=1}^{\tilde{m}} \alpha_r^{(0)}\langle \Delta W_r^{(i)},x \rangle \ind\{\langle U_r^{(0)},x \rangle +\beta_r^{(0)} \geq 0\}-y_iq(\langle x_i,x \rangle)\right| \leq  O\left(\frac{\mathfrak{C}\left(q,\epsilon_3\right)}{\sqrt{\tilde{m}}}\right)+ 2\epsilon_3 
\end{align*}
since $\tilde{m}\geq c_1  \frac{d}{\epsilon_3^2} \mathfrak{C}^2(q,\epsilon_3)$ for a large constant $c_1$, we are done.
\end{proof}
It remains to prove the Claim \ref{cl:stability}.
 \begin{proof}
We start with bounding $D_1$. Observe that from the way we constructed $\Delta W^{(i)}$, we have that for $j\leq d-1$, $\Delta W_{rj}^{(i)}=0$. At the same time, $v_d=0$, so $\langle  \Delta W_r^{(i)},v \rangle=0$. Using that  $\| \Delta W^{(i)} \|_{2,\infty}\leq O\left(m^{1/3}\frac{\mathfrak{C}\left(q,\epsilon_3\right)}{\tilde{m}}\right)$ and $|\alpha_r^{(0)}|=\frac{1}{m^{1/3}}$, we get that 

 \begin{align*}
   D_1 \leq & ~ O\left(\frac{\mathfrak{C}\left(q,\epsilon_3\right)}{\tilde{m}}\right) \sum_{r=1}^{\tilde{m}}  \Big| \ind\{\langle U_r^{(0)},x+v \rangle +\beta_r^{(0)} \geq 0 \}-   \ind\{\langle U_r^{(0)},x \rangle +\beta_r^{(0)} \geq 0 \} \Big|  \notag \\
   \leq & ~ O\left(\frac{\mathfrak{C}\left(q,\epsilon_3\right)}{\tilde{m}}\right) \sum_{r=1}^{\tilde{m}}   \ind \left\{\sgn(\langle U_r^{(0)},x+v \rangle +\beta_r^{(0)} ) \neq \sgn (\langle U_r^{(0)},x \rangle +\beta_r^{(0)} )\right\}  \notag \\
   \leq & ~ O\left(\frac{\mathfrak{C}\left(q,\epsilon_3\right)}{\tilde{m}}\right) \sum_{r=1}^{\tilde{m}}   \left(\ind \left\{|\langle U_r^{(0)},x \rangle +\beta_r^{(0)}|\leq \frac{1}{\sqrt{\tilde{m}}} \right\}+ \ind \left\{ \|U_r^{(0)}\|_2>c_2\sqrt{\tilde{m}}\right\} \right) .
\end{align*}
 where $c_2$ can be chosen to be as large as we want (but still a constant) as long as we choose the constant $c$, that appears at the construction of the net, to be sufficiently large. We prove the following claim, whose proof is almost identical to the proof of Claim \ref{cl:bound_U_0}, but we provide it for completeness.
 \begin{claim}\label{cl:bound_U_0}
     With probability at least $1-\exp(-\Omega( \tilde{m}))$, for all $r\in[\tilde{m}]$, $\|U_r^{(0)}\|_{2}\leq O(\sqrt{\tilde{m}})$.
   \end{claim}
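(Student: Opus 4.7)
The plan is to follow exactly the template used in the proof of Claim~\ref{cl:bound_W_0}: establish a concentration bound for each $\|U_r^{(0)}\|_2$ individually via $\chi^2$-tail inequalities, and then take a union bound over $r\in[\tilde m]$. Since $U_r^{(0)}\sim\mathcal{N}(0,I_d)$ (unit-variance Gaussian, unlike the $1/m$-variance initialization of $W^{(0)}$), the squared norm $\|U_r^{(0)}\|_2^2$ is a $\chi^2_d$ random variable with mean $d$, rather than one scaled by $1/m$; this is the only substantive difference from the earlier argument.

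First I would invoke the standard Laurent--Massart bound: for any $t>0$,
$$\Pr\!\left[\|U_r^{(0)}\|_2^2 \geq d + 2\sqrt{dt} + 2t\right] \leq \exp(-t).$$
The hypothesis $\tilde m \geq c_1\,d\,\mathfrak{C}^2(q,\epsilon_3)/\epsilon_3^2$ in Lemma~\ref{lem:approx_indiv_components} forces $\tilde m\geq d$, so plugging in $t=\Theta(\tilde m)$ gives $d+2\sqrt{dt}+2t = O(\tilde m)$, and therefore $\|U_r^{(0)}\|_2 \leq O(\sqrt{\tilde m})$ holds for each fixed $r$ with probability at least $1-\exp(-\Omega(\tilde m))$.

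To obtain the uniform statement over all $r\in[\tilde m]$, I would take a union bound: the total failure probability is at most $\tilde m \cdot \exp(-\Omega(\tilde m)) = \exp(-\Omega(\tilde m))$, where the factor $\tilde m$ is absorbed into the exponent for sufficiently large $\tilde m$ (which is guaranteed by the hypothesis on $\tilde m$ in the lemma). This yields precisely the stated bound.

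There is no real obstacle here; the argument is a direct analogue of Claim~\ref{cl:bound_W_0}, with the only bookkeeping point being that the target bound is $O(\sqrt{\tilde m})$ rather than $O(1)$ because $U_r^{(0)}$ is \emph{not} rescaled by $1/\sqrt{m}$. This bound is tight up to constants (since $\mathbb{E}[\|U_r^{(0)}\|_2^2]=d\leq \tilde m$), and is exactly what is needed in the perturbation step of the proof of Claim~\ref{cl:stability}, where the condition $\|U_r^{(0)}\|_2 \leq c_2\sqrt{\tilde m}$ combined with $\|v\|_2\leq \tilde m^{-c}$ (for a sufficiently large constant $c$) ensures that $|\langle U_r^{(0)},v\rangle|$ is at most $O(\tilde m^{-1/2})$, so the indicator of a sign change of $\langle U_r^{(0)},x\rangle+\beta_r^{(0)}$ is controlled by anti-concentration in a window of width $\tilde m^{-1/2}$.
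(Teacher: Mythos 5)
Your proof is correct and follows essentially the same route as the paper: a per-neuron $\chi^2$ tail bound for $\|U_r^{(0)}\|_2^2$, using $\tilde m \geq d$ to absorb the mean, followed by a union bound over $r\in[\tilde m]$. One small point worth flagging: the paper's own intermediate claim states the per-$r$ failure probability as $\exp(-\Omega(\tilde m^2/d))$, which appears to treat the $\chi^2_d$ tail as sub-Gaussian in a regime where it is only sub-exponential; the correct per-$r$ rate (and the one you derive via Laurent--Massart) is $\exp(-\Omega(\tilde m))$, which is still ample for the claim and for the union bound. So your version is, if anything, the more careful of the two.
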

   \begin{proof}
   From concentration of sum of independent Chi-Square random variables, we have that for all $r$, with probability at least $1-\exp(-\Omega( \tilde{m}^2/d))$, $\|U_r^{(0)}\|_2^2\leq O(\tilde{m})$. Since $\tilde{m}\geq d$, a union bound over all $r$ finishes the proof of the claim.
   \end{proof}
   
 Thus, by appropriately choosing $c_2$, we get that with probability at 
least $1-\exp(-\Omega(\tilde{m}))$,
 \begin{align*}
   &D_1\leq   O\left(\frac{\mathfrak{C}\left(q,\epsilon_3\right)}{\tilde{m}}\right) \sum_{r=1}^{\tilde{m}}   \ind \left\{|\langle U_r^{(0)},x \rangle +\beta_r^{(0)}|\leq \frac{1}{\sqrt{\tilde{m}}} \right\}  
\end{align*}
Now, $\ind \left\{|\langle U_r^{(0)},x \rangle +\beta_r^{(0)}|\leq \frac{1}{\sqrt{\tilde{m}}} \right\}$ are $\tilde{m}$ independent Bernoulli random variables and because of Claim \ref{cl:anti-concentration}, the corresponding probability is at most $O\left(\frac{1}{\sqrt{\tilde{m}}}\right)$. Thus, from Chernoff bounds we get that with probability at least $1-\exp(-\Omega(\sqrt{\tilde{m}}))$, $\sum_{r=1}^{\tilde{m}}   \ind \left\{|\langle U_r^{(0)},x \rangle +\beta_r^{(0)}|\leq \frac{1}{\sqrt{\tilde{m}}} \right\}\leq O(\sqrt{\tilde{m}})$. By applying a union bound, we get that with probability at least $1-\exp(-\Omega(\sqrt{\tilde{m}}))-\exp(-\Omega(\tilde{m}))=1-\exp(-\Omega(\sqrt{\tilde{m}}))$, $D_1\leq   O\left(\frac{\mathfrak{C}\left(q,\epsilon_3\right)}{\sqrt{\tilde{m}}}\right)$.
\\
We proceed with bounding $D_2$. Since $|h(\cdot)|\leq\mathfrak{C}\left(q,\epsilon_3\right)$, we have
\begin{align*}
   & D_2\leq \mathfrak{C}\left(q,\epsilon_3\right) \E_{u \sim {\cal N}(0,I_d), \beta \sim {\cal N}(0,1) } \left[ \left | \ind\{ \langle u,x+v \rangle +\beta\geq0 \} -\ind\{ \langle u,x \rangle +\beta\geq0 \} \right| \right] \\
   &\leq \mathfrak{C}\left(q,\epsilon_3\right) \E_{u \sim {\cal N}(0,I_d), \beta \sim {\cal N}(0,1) } \left[ \ind\{ |\langle u,x \rangle +\beta|\leq \frac{1}{\sqrt{\tilde{m}}} \} +\ind\{ \|u\|_2>c_2 \sqrt{\tilde{m}}\}  \right] 
\end{align*}
where $c_2$ is the same constant as before.
But, same as before, $\Pr_{u \sim {\cal N}(0,I_d), \beta \sim {\cal N}(0,1) } \left[|\langle u,x \rangle +\beta|\leq \frac{1}{\sqrt{\tilde{m}}} \right]\leq O(\frac{1}{\sqrt{\tilde{m}}})$ and 
$\Pr_{u \sim {\cal N}(0,I_d), \beta \sim {\cal N}(0,1) } \left[\|u\|_2>c_2 \sqrt{\tilde{m}}\right]\leq \exp(-\Omega(\tilde{m}))$. So, $D_2\leq   O\left(\frac{\mathfrak{C}\left(q,\epsilon_3\right)}{\sqrt{\tilde{m}}}\right)$.
\end{proof}







\end{document}